\documentclass[11pt]{article}

\usepackage{times}
\usepackage{graphicx}
\usepackage{caption}
\usepackage{subcaption}

\usepackage{algorithm}
\usepackage{algorithmic}

\usepackage{hyperref}

\graphicspath{ {plots/} }
\usepackage{paralist}
\usepackage{booktabs} 

\usepackage{amsmath,amssymb,xspace,amsthm,Tabbing,bm}
\usepackage{color,mathtools}

\usepackage{listings}
\usepackage{multirow}

\usepackage{bbm}

\usepackage{wrapfig}

\usepackage{tikz}
\usepackage{wasysym}
\usepackage{enumerate}

\newcommand{\punt}[1]{}

\def\argmax{\mathop{\rm arg\,max}}
\def\argmin{\mathop{\rm arg\,min}}

\newcommand{\reals}{\mathbb{R}}

\newcommand{\simplex}{\Delta^{\!n}}










\def\argmax{\mathop{\rm arg\,max}}
\def\argmin{\mathop{\rm arg\,min}}

\newcommand{\vv}{\mathbf{v}}








\newcommand{\bq}{\begin{equation}}
\newcommand{\eq}{\end{equation}}
\newcommand{\ba}{\begin{eqnarray}}
\newcommand{\ea}{\end{eqnarray}}


\def\R{{\reals}}


\newcommand{\remove}[1]{}

\newcommand{\norm}[1]{\left\lVert #1\right\rVert}
\newcommand{\abs}[1]{\left\lvert #1\right\rvert}




\newcommand{\xv}{\bm{x}}
\newcommand{\yv}{\bm{y}}

\newcommand{\av}{\bm{a}}
\newcommand{\bv}{\bm{b}}
\newcommand{\cvv}{\bm{c}}
\newcommand{\cvh}{\bm{\hat{c}}}
\newcommand{\cvt}{\bm{\tilde{c}}}
\newcommand{\evv}{\bm{e}}
\newcommand{\lv}{\bm{l}}
\newcommand{\pv}{\bm{p}}
\newcommand{\pvh}{\bm{\hat{p}}}
\newcommand{\pvt}{\bm{\tilde{p}}}

\newcommand{\uv}{\bm{u}}
\newcommand{\inftyv}{\bm{\infty}}

\newcommand{\xvt}{\bm{\tilde{x}}}


\newcommand{\0}{\bm{0}}

\newcommand{\alphav}{\bm{\alpha}}

\newcommand{\lambdav}{\bm{\lambda}}



\newcommand{\muv}{\bm{\mu}}

\DeclareMathOperator{\diag}{diag}


\newcommand{\E}[1]{\mathbb{E}\left[#1\right] } 
\newcommand{\EE}[2]{\mathbb{E}_{#1}\left[#2\right] }
\newcommand{\lin}[1]{\ensuremath \left\langle #1 \right\rangle}
\DeclareMathOperator{\ttr}{Tr} 
\newcommand{\Tr}[1]{\ensuremath \ttr \left[#1\right]}

\newtheorem{theorem}{Theorem}[section]

\newtheorem{lemma}[theorem]{Lemma}
\newtheorem{remark}[theorem]{Remark}
\newtheorem{example}[theorem]{Example}

\newcommand{\refLE}[1]{\ensuremath{\stackrel{(\ref{#1})}{\leq}}}

\newcommand{\dotprod}[1]{\langle #1\rangle}

\newcommand{\ellv}{\bm{\ell}}

\usepackage[utf8]{inputenc}
\usepackage[T1]{fontenc}   
\usepackage{hyperref}      
\usepackage{url}           
\usepackage{booktabs}      
\usepackage{amsfonts}      
\usepackage{nicefrac}      
\usepackage{microtype}     

\title{Safe Adaptive Importance Sampling}

\usepackage[verbose=true,letterpaper]{geometry}
\AtBeginDocument{
  \newgeometry{
    textheight=9in,
    textwidth=5.5in,
    top=1in,
    headheight=12pt,
    headsep=25pt,
    footskip=30pt
  }
}

\makeatletter
\widowpenalty=10000
\clubpenalty=10000
\flushbottom
\sloppy

\renewcommand{\normalsize}{%
  \@setfontsize\normalsize\@xpt\@xipt
  \abovedisplayskip      7\p@ \@plus 2\p@ \@minus 5\p@
  \abovedisplayshortskip \z@ \@plus 3\p@
  \belowdisplayskip      \abovedisplayskip
  \belowdisplayshortskip 4\p@ \@plus 3\p@ \@minus 3\p@
}
\normalsize
\renewcommand{\small}{%
  \@setfontsize\small\@ixpt\@xpt
  \abovedisplayskip      6\p@ \@plus 1.5\p@ \@minus 4\p@
  \abovedisplayshortskip \z@  \@plus 2\p@
  \belowdisplayskip      \abovedisplayskip
  \belowdisplayshortskip 3\p@ \@plus 2\p@   \@minus 2\p@
}
\renewcommand{\footnotesize}{\@setfontsize\footnotesize\@ixpt\@xpt}
\renewcommand{\scriptsize}{\@setfontsize\scriptsize\@viipt\@viiipt}
\renewcommand{\tiny}{\@setfontsize\tiny\@vipt\@viipt}
\renewcommand{\large}{\@setfontsize\large\@xiipt{14}}
\renewcommand{\Large}{\@setfontsize\Large\@xivpt{16}}
\renewcommand{\LARGE}{\@setfontsize\LARGE\@xviipt{20}}
\renewcommand{\huge}{\@setfontsize\huge\@xxpt{23}}
\renewcommand{\Huge}{\@setfontsize\Huge\@xxvpt{28}}

\providecommand{\section}{}
\renewcommand{\section}{%
  \@startsection{section}{1}{\z@}%
                {-2.0ex \@plus -0.5ex \@minus -0.2ex}%
                { 1.5ex \@plus  0.3ex \@minus  0.2ex}%
                {\large\bf\raggedright}%
}
\providecommand{\subsection}{}
\renewcommand{\subsection}{%
  \@startsection{subsection}{2}{\z@}%
                {-1.8ex \@plus -0.5ex \@minus -0.2ex}%
                { 0.8ex \@plus  0.2ex}%
                {\normalsize\bf\raggedright}%
}
\providecommand{\subsubsection}{}
\renewcommand{\subsubsection}{%
  \@startsection{subsubsection}{3}{\z@}%
                {-1.5ex \@plus -0.5ex \@minus -0.2ex}%
                { 0.5ex \@plus  0.2ex}%
                {\normalsize\bf\raggedright}%
}
\providecommand{\paragraph}{}
\renewcommand{\paragraph}{%
  \@startsection{paragraph}{4}{\z@}%
                {1.5ex \@plus 0.5ex \@minus 0.2ex}%
                {-1em}%
                {\normalsize\bf}%
}
\providecommand{\subparagraph}{}
\renewcommand{\subparagraph}{%
  \@startsection{subparagraph}{5}{\z@}%
                {1.5ex \@plus 0.5ex \@minus 0.2ex}%
                {-1em}%
                {\normalsize\bf}%
}

\newlength{\@nipsabovecaptionskip}\setlength{\@nipsabovecaptionskip}{7\p@}
\newlength{\@nipsbelowcaptionskip}\setlength{\@nipsbelowcaptionskip}{\z@}

\setlength{\abovecaptionskip}{\@nipsabovecaptionskip}
\setlength{\belowcaptionskip}{\@nipsbelowcaptionskip}

\setlength{\footnotesep }{6.65\p@}
\setlength{\skip\footins}{9\p@ \@plus 4\p@ \@minus 2\p@}
\renewcommand{\footnoterule}{\kern-3\p@ \hrule width 12pc \kern 2.6\p@}
\setcounter{footnote}{0}

\setlength{\parindent}{\z@}
\setlength{\parskip  }{5.5\p@}

\setlength{\topsep       }{4\p@ \@plus 1\p@   \@minus 2\p@}
\setlength{\partopsep    }{1\p@ \@plus 0.5\p@ \@minus 0.5\p@}
\setlength{\itemsep      }{2\p@ \@plus 1\p@   \@minus 0.5\p@}
\setlength{\parsep       }{2\p@ \@plus 1\p@   \@minus 0.5\p@}
\setlength{\leftmargin   }{3pc}
\setlength{\leftmargini  }{\leftmargin}
\setlength{\leftmarginii }{2em}
\setlength{\leftmarginiii}{1.5em}
\setlength{\leftmarginiv }{1.0em}
\setlength{\leftmarginv  }{0.5em}
\def\@listi  {\leftmargin\leftmargini}
\def\@listii {\leftmargin\leftmarginii
              \labelwidth\leftmarginii
              \advance\labelwidth-\labelsep
              \topsep  2\p@ \@plus 1\p@    \@minus 0.5\p@
              \parsep  1\p@ \@plus 0.5\p@ \@minus 0.5\p@
              \itemsep \parsep}
\def\@listiii{\leftmargin\leftmarginiii
              \labelwidth\leftmarginiii
              \advance\labelwidth-\labelsep
              \topsep    1\p@ \@plus 0.5\p@ \@minus 0.5\p@
              \parsep    \z@
              \partopsep 0.5\p@ \@plus 0\p@ \@minus 0.5\p@
              \itemsep \topsep}
\def\@listiv {\leftmargin\leftmarginiv
              \labelwidth\leftmarginiv
              \advance\labelwidth-\labelsep}
\def\@listv  {\leftmargin\leftmarginv
              \labelwidth\leftmarginv
              \advance\labelwidth-\labelsep}
\def\@listvi {\leftmargin\leftmarginvi
              \labelwidth\leftmarginvi
              \advance\labelwidth-\labelsep}
\makeatother

\author{
 Sebastian U. Stich\thanks{EPFL, Machine Learning and Optimization Laboratory, \texttt{\small{sebastian.stich@epfl.ch}}}
 \and
 Anant Raj\thanks{Max Planck Institute for Intelligent Systems, \texttt{\small{anant.raj@tuebingen.mpg.de}}}
 \and
 Martin Jaggi\thanks{EPFL, Machine Learning and Optimization Laboratory, \texttt{\small{martin.jaggi@epfl.ch}}}
}
\date{November 3, 2017\footnote{First version May 19, 2017. To appear at NIPS 2017.}}

\begin{document}

\maketitle

\begin{abstract}
Importance sampling has become an indispensable strategy to speed up optimization algorithms for large-scale applications.
Improved adaptive variants---using importance values defined by the complete gradient information which changes during optimization---enjoy favorable theoretical properties, but are typically computationally infeasible.
In this paper we propose an efficient approximation of gradient-based sampling, which is based on safe bounds on the gradient. 
The proposed sampling distribution is 
(i) provably the \emph{best sampling} with respect to the given bounds, 
(ii) always better than uniform sampling and fixed importance sampling
and 
(iii) can efficiently be computed---in many applications  at negligible extra cost.
The proposed sampling scheme is generic and can easily be integrated into existing algorithms. 
In particular, we show that coordinate-descent (CD) and stochastic gradient descent (SGD) can enjoy significant a speed-up under the novel scheme.
The proven efficiency of the proposed sampling is verified by extensive numerical testing.
\end{abstract}

\section{Introduction} \label{sec:intro}

Modern machine learning applications operate on massive datasets. The algorithms that are used for data analysis face the difficult challenge to cope with the enormous amount of data or the vast dimensionality of the problems. A simple and well established strategy to reduce the computational costs is to split the data and to operate only on a small part of it, as for instance in %
coordinate descent~(CD) methods and stochastic gradient (SGD) methods. These kind of methods are state of the art for a wide selection of machine learning, deep leaning and signal processing applications~
\cite{Fu:1998cd,%
Hsieh:2008bd,%
wright2015,%
ShalevShwartz:2010cg%
}. The application of these schemes is not only motivated by their practical preformance, but also well justified by theory~\cite{Nesterov:2012fa,Nesterov:2017,AllenZhu:2016wja}.

Deterministic strategies are seldom used for the data selection%
---examples are steepest coordinate descent~\cite{Boyd:2004uz,Tseng2009,nutini2015coordinate} 
or screening algorithms~\cite{Liu:2014uz,Ndiaye:2017tt}. 
Instead, randomized selection has become ubiquitous, most prominently uniform sampling~\cite{ShalevShwartz:2010cg,ShalevShwartz:2013wl,Friedman:2007ut,Friedman:2010wm,ShalevShwartz:2011vo} but also non-uniform sampling based on a \emph{fixed} distribution, commonly referred to as \emph{importance sampling}~\cite{Nesterov:2012fa,Nesterov:2017,AllenZhu:2016wja,Strohmer2008,Needell:2014wa,Csiba:2016ub,Richtarik:2016nsync,Qu:2014uw}. 
While these sampling strategies typically depend on the input data, they do not adapt to the information of the current parameters during optimization.  In contrast, \emph{adaptive} importance sampling strategies constantly re-evaluate the relative importance of each data point during training and thereby often surpass the performance of static algorithms~\cite{Papa2015,Csiba:2015ue,Schmidt:15,He:2015arxiv,Osokin:2016:MGB,Perekrestenko17a}.
Common strategies are \emph{gradient-based} sampling~\cite{Papa2015,Zhaoa15,Zhu:2016} (mostly for SGD) and \emph{duality gap-based} sampling for CD~\cite{Csiba:2015ue,Perekrestenko17a}.

The drawbacks of adaptive strategies are twofold: often the provable theoretical guarantees can be worse than the complexity estimates for uniform sampling~\cite{Perekrestenko17a,Shibagaki:2017} and often it is computationally inadmissible to compute the optimal adaptive sampling distribution. For instance  gradient based sampling requires the computation of the full gradient in each iteration~\cite{Papa2015,Zhaoa15,Zhu:2016}. Therefore one has to rely on approximations based on upper bounds~\cite{Zhaoa15,Zhu:2016}, or stale values~\cite{Papa2015,Alain:2015arxiv}. But in general these approximations can again be worse than uniform sampling.

This makes it necessary to develop adaptive strategies that can efficiently be computed in every iteration and that come with theoretical guarantees that show their advantage over fixed sampling.

\paragraph{Our contributions.} In this paper we propose an efficient approximation of the gradient-based sampling in the sense that (i) it can efficiently be computed in every iteration, (ii) is provably better than uniform or fixed importance sampling and (iii) recovers the gradient-based sampling in the full-information setting. The scheme is completely generic and can easily be added as an improvement to both CD and SGD type methods.

As our key contributions, we
\begin{enumerate}[(1)]
 \itemsep-.2em %
 \item  show that gradient-based sampling in CD methods is theoretically better than the classical fixed sampling, the speed-up can reach a factor of the dimension $n$ (Section~\ref{sec:efficiency});
 \item  propose a generic and efficient \emph{adaptive importance sampling} strategy that can be applied in CD and SGD methods and enjoys favorable properties---such as mentioned above (Section~\ref{sec:sampling});
 \item demonstrate how the novel scheme can efficiently be integrated in CD and SGD on an important class of structured optimization problems (Section~\ref{sec:gradientupdate});
 \item supply numerical evidence that the novel sampling performs well on real data (Section~\ref{sec:experiments}).
\end{enumerate}

\paragraph{Notation.}
For $\xv \in \R^n$ define $[\xv]_i := \dotprod{\xv, \evv_i}$ with $\evv_i$ the standard unit vectors in $\R^n$. We abbreviate $\nabla_i f := [\nabla f]_i$.
A convex function $f \colon \R^n \to \R$ with $L$-Lipschitz continuous gradient satisfies %
\begin{align}
f(\xv + \eta \uv) &\leq f(\xv) + \eta \lin{\uv, \nabla f(\xv)}  + \tfrac{ \eta^2  L_{\uv}}{2}\norm{\uv}^2
& &\forall \xv \in \R^n, \forall \eta \in \R\,,
\label{eq-Ubound}
\end{align}
for every direction $\uv \in \R^n$ and $L_{\uv} =L$. A function with coordinate-wise $L_i$-Lipschitz continuous gradients\footnote{%
$\abs{ \nabla_i f(\xv + \eta \evv_i) - \nabla_i f(\xv)} \leq L_i \abs{\eta},\quad \forall \xv \in \R^n, \forall \eta \in \R$.
} for constants $L_i > 0$, $i \in [n] := \{1,\dots,n\}$, satisfies~\eqref{eq-Ubound} just along coordinate directions, i.e. $\uv =\evv_i$, $L_{\evv_i} = L_i$ for every $i \in [n]$. A function is coordinate-wise $L$-smooth if $L_i \leq L$ for $i=1,\dots,n$.
For convenience we introduce vector $\lv = (L_1,\dots,\L_n)^\top$ and matrix ${\bf L} = \diag(\lv)$.
A probability vector $\pv \in \simplex := \{\xv \in \R_{\geq 0}^n \colon \norm{\xv}_1=1 \}$ defines a probability distribution $\mathcal{P}$ over $[n]$ and we denote by $i \sim \pv$ a sample drawn from $\mathcal{P}$.

\section{Adaptive Importance Sampling with Full Information} %
\label{sec:efficiency}
In this section we argue that adaptive sampling strategies are theoretically well justified, as they can lead to significant improvements over static strategies. In our exhibition we focus first on CD methods, as we also propose a novel stepsize strategy for CD in this contribution. Then we revisit the results regarding stochastic gradient descent (SGD) already present in the literature.
\subsection{Coordinate Descent with Adaptive Importance Sampling}
We address general minimization problems $\min_{\xv} f(\xv)$.
Let the objective $f \colon \R^n \to \R$ be convex with coordinate-wise $L_i$-Lipschitz continuous gradients. Coordinate descent methods generate sequences $\{\xv_k\}_{k \geq 0}$ of iterates that satisfy the relation
\begin{align}
 \xv_{k+1} = \xv_k  - \gamma_k \nabla_{i_k} f(\xv_k) \evv_{i_k}\,. \label{eq:CDstep}
\end{align}
Here, the direction $i_k$ is either chosen deterministically (cyclic descent, steepest descent), or randomly picked according to a probability vector $\pv_k \in \simplex$. In the classical literature, the stepsize is often chosen such as to minimize the quadratic upper bound~\eqref{eq-Ubound}, i.e. $\gamma_k = L_{i_k}^{-1}$. In this work we propose to set $\gamma_k = {\alpha_k}{[\pv_k]_{i_k}^{-1}}$ where $\alpha_k$ does not depend on the chosen direction $i_k$. This leads to directionally-unbiased updates, like it is common among SGD-type methods. It holds
\begin{align}
 \EE{i_k \sim_{\pv_k}}{f (\xv_{k+1}) \mid \xv_k} &\refLE{eq-Ubound}  \EE{i_k \sim_{\pv_k}}{f(\xv_k) - \frac{\alpha_k}{[\pv_k]_{i_k}} \left(\nabla_{i_k} f(\xv_k)\right)^2 + \frac{L_i \alpha_k^2 }{2 [\pv_k]_{i_k}^2} \left(\nabla_{i_k} f(\xv_k) \right)^2 \mid \xv_k } \notag \\
 &= f(\xv_k) - \alpha_k \norm{\nabla f(\xv_k)}_2^2 + \sum_{i=1}^n \frac{L_i \alpha_k^2}{2 [\pv_k]_i} \left(\nabla_{i} f(\xv_k) \right)^2\,. \label{eq:ubound2}
\end{align}
In adaptive strategies we have the freedom to chose both variables $\alpha_k$ and $\pv_k$ as we like. We therefore propose to chose them in such a way that they \emph{minimize} the upper bound~\eqref{eq:ubound2} in order to maximize the expected progress. The optimal $\pv_k$ in~\eqref{eq:ubound2} is independent of $\alpha_k$, but the optimal $\alpha_k$ depends on $\pv_k$. We can state the following useful observation.
\begin{lemma} \label{lem:alpha}
If $\alpha_k=\alpha_k(\pv_k)$ is the minimizer of~\eqref{eq:ubound2}, then $\xv_{k+1} \!:= \xv_k - \frac{\alpha_k}{[\pv_k]_{i_k}}  \nabla_{i_k} f(\xv_k) \evv_{i_k}$ satisfies
\begin{align}
\EE{i_k \sim_{\pv_k}}{f (\xv_{k+1}) \mid \xv_k} \leq f(\xv_k) -  \frac{\alpha_k(\pv_k)}{2} \norm{\nabla f(\xv_k)}_2^2\,. \label{eq:alphaequation}
\end{align}
\end{lemma}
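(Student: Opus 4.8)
The plan is to read the right-hand side of~\eqref{eq:ubound2} as a one-dimensional quadratic in the scalar $\alpha_k$, with $\pv_k$ and $\xv_k$ held fixed, and extract both its minimizer and its minimal value from the vertex formula for a parabola. Concretely, write the bound as $q(\alpha_k) := f(\xv_k) - b\,\alpha_k + a\,\alpha_k^2$, where $b := \norm{\nabla f(\xv_k)}_2^2 \ge 0$ is the coefficient of the linear term and $a := \sum_{i=1}^n \tfrac{L_i}{2[\pv_k]_i}\bigl(\nabla_i f(\xv_k)\bigr)^2 \ge 0$ is the coefficient of the quadratic term. If $\nabla f(\xv_k) = \0$ the claimed inequality~\eqref{eq:alphaequation} is trivial (both sides equal $f(\xv_k)$ and $\xv_{k+1}=\xv_k$), so we may assume $\nabla f(\xv_k) \ne \0$, in which case $a>0$ and $b>0$, the parabola $q$ is strictly convex, and its unique minimizer is $\alpha_k(\pv_k) = b/(2a)$.

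Next I would simply substitute this value back: $q(\alpha_k(\pv_k)) = f(\xv_k) - b\cdot\tfrac{b}{2a} + a\cdot\tfrac{b^2}{4a^2} = f(\xv_k) - \tfrac{b^2}{4a}$. The one algebraic observation that makes the factor $\tfrac12$ appear is that, at the vertex, the quadratic term cancels exactly half of the linear term, so that $\tfrac{b^2}{4a} = \tfrac12\cdot\tfrac{b}{2a}\cdot b = \tfrac{\alpha_k(\pv_k)}{2}\,\norm{\nabla f(\xv_k)}_2^2$. Since~\eqref{eq:ubound2} asserts $\EE{i_k\sim_{\pv_k}}{f(\xv_{k+1})\mid\xv_k} \le q(\alpha_k)$ for every choice of $\alpha_k$, evaluating at $\alpha_k = \alpha_k(\pv_k)$ yields precisely~\eqref{eq:alphaequation}.

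I do not anticipate any real obstacle: the entire content is the completion-of-the-square identity for a scalar quadratic. The only points deserving a word of care are the degenerate case $\nabla f(\xv_k)=\0$ handled above, and the remark that $a$ is finite (so the minimizer is well defined) as long as $[\pv_k]_i>0$ on the support of $\nabla f(\xv_k)$, which is implicit in the assumption $\pv_k \in \simplex$ being used as an interior point; this is worth stating explicitly since the construction divides by $[\pv_k]_{i_k}$.
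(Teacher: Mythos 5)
Your proposal is correct and is essentially the paper's own argument: the paper likewise minimizes the scalar quadratic $-\alpha c + \tfrac{1}{2}\alpha^2 d$ in $\alpha$, obtaining the minimizer $c/d$ and minimal value $-\tfrac{\alpha^\star c}{2}$, which is exactly your vertex computation in a slightly different parametrization. Your extra remarks on the degenerate case $\nabla f(\xv_k)=\0$ and on the positivity of $[\pv_k]_i$ on the support of the gradient are sensible but not part of the paper's (one-line) proof.
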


Consider two examples. In the first one we pick a sub-optimal, but very common~\cite{Nesterov:2012fa} distribution:
\begin{example}[$L_i$-based sampling] \label{ex:L}
Let $\pv_{\bf L} \in \simplex$ defined as $[\pv_{\bf L}]_i = \frac{L_i}{{\Tr{{\bf L}}}}$ for $i \in [n]$, where ${\bf L} = \diag(L_1,\dots, L_n)$. Then $\alpha_k(\pv_{\bf L})  = \frac{1}{\Tr{{\bf L}}}$.
\end{example}
The distribution $\pv_{\bf L}$ is often referred to as (fixed) \emph{importance} sampling. In the special case when $L_i = L$ for all $i \in [n]$, this boils down to uniform sampling. %
\begin{example}[Optimal sampling\footnote{Here ``optimal'' refers to the fact that $\pv_k^\star$ is optimal with respect to the given model~\eqref{eq-Ubound} of the objective function. If the model is not accurate, there might exist a sampling that yields larger expected progress on $f$.}] \label{ex:Opt}
Equation~\eqref{eq:ubound2} is minimized for probabilities $[\pv_k^\star]_i = \frac{\sqrt{L_i} \abs{\nabla_i f(\xv_k)}}{\norm{{\bf \sqrt{L}} \nabla f(\xv)}_1}$ and $\alpha_k(\pv_k^\star) = \frac{\norm{\nabla f(\xv_k)}_2^2}{\norm{{\bf \sqrt{L}} \nabla f(\xv_k)}_1^2}$. Observe $\frac{1}{\Tr{{\bf L}}} \leq \alpha_k(\pv_k^\star) \leq \frac{1}{L_{\rm min}}$, where $L_{\rm min} := \min_{i \in [n]} L_i$.
\end{example}
To prove this result, we rely on the following Lemma---the proof of which, as well as for the claims above, is deferred to Section~\ref{app:CD} of the appendix. Here $\abs{\cdot}$ is applied entry-wise.

\begin{lemma} \label{lem:V}
Define $V(\pv,\xv):=\sum_{i=1}^n \frac{L_i [\xv]_i^2}{[\pv]_i}$. Then $\ \argmin_{\pv \in \simplex} \ V(\pv,\xv) = \frac{\vert{\bf \sqrt{L}} \xv \vert}{\norm{\sqrt{{\bf L}}\xv}_1}$.
\end{lemma}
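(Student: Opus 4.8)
The plan is to recognize $V(\pv,\xv)$ as a sum of the form $\sum_i a_i^2/[\pv]_i$ with nonnegative weights $a_i := \sqrt{L_i}\,\abs{[\xv]_i}$, lower-bound it by a quantity that does not depend on $\pv$ via the Cauchy--Schwarz inequality, and then exhibit the $\pv\in\simplex$ that attains the bound. This is cleaner than a Lagrange-multiplier computation and has the side benefit of producing exactly the constant $\norm{\sqrt{\vL}\xv}_1^2$ that reappears in the formula for $\alpha_k(\pv_k^\star)$ in Example~\ref{ex:Opt}.

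First, for any $\pv \in \simplex$ I would write, using $\norm{\pv}_1 = 1$,
\[
\Big(\sum_{i=1}^n a_i\Big)^2 = \Big(\sum_{i=1}^n \frac{a_i}{\sqrt{[\pv]_i}}\cdot \sqrt{[\pv]_i}\Big)^2 \;\le\; \Big(\sum_{i=1}^n \frac{a_i^2}{[\pv]_i}\Big)\Big(\sum_{i=1}^n [\pv]_i\Big) \;=\; V(\pv,\xv),
\]
so that $V(\pv,\xv) \ge \big(\sum_i a_i\big)^2 = \norm{\sqrt{\vL}\xv}_1^2$ for every feasible $\pv$. Second, I would invoke the equality case of Cauchy--Schwarz: equality holds iff $[\pv]_i \propto a_i$, and the normalization $\norm{\pv}_1 = 1$ then forces $[\pv]_i = a_i/\sum_j a_j = \sqrt{L_i}\abs{[\xv]_i}/\norm{\sqrt{\vL}\xv}_1$, which indeed lies in $\simplex$ and, on substitution, gives $V = \norm{\sqrt{\vL}\xv}_1^2$. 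Matching the lower bound, this $\pv$ is a global minimizer; strict convexity of $t\mapsto 1/t$ gives uniqueness. (As a cross-check, the Lagrangian $\sum_i a_i^2/[\pv]_i + \mu(\norm{\pv}_1 - 1)$ has stationarity condition $a_i^2/[\pv]_i^2 = \mu$, yielding the same $\pv$, and $V(\cdot,\xv)$ is convex on $\simplex$.)

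The one place needing care is coordinates with $[\xv]_i = 0$ (and the degenerate case $\xv = \mathbf{0}$): there the summand $L_i[\xv]_i^2/[\pv]_i$ must be read with the convention $0/0 = 0$, so such coordinates contribute nothing and receive probability $0$ in the minimizer. Restricting all the sums above to the support of $\xv$, the Cauchy--Schwarz argument goes through verbatim, and the claimed minimizer is still correct (with the understanding that, off the support, any probability mass is simply wasteful and the unique optimum puts it to zero). Everything else is a direct, routine application of Cauchy--Schwarz, so this boundary bookkeeping is really the only obstacle.
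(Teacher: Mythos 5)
Your proof is correct, but it takes a genuinely different route from the paper. The paper proves Lemma~\ref{lem:V} by writing down the Lagrangian stationarity condition $-[\xv]_i^2 + \lambda [\pv]_i^2 = 0$ (after reducing to ${\bf L} = \mathbf{I}$) and verifying that the proposed $\pv$ satisfies it, implicitly relying on convexity of $V(\cdot,\xv)$ for sufficiency of the first-order conditions. You instead derive the global lower bound $V(\pv,\xv) \geq \big\|\sqrt{\vL}\xv\big\|_1^2$ for every $\pv \in \simplex$ via Cauchy--Schwarz and then check that the candidate attains it. Your route buys three things: it is self-contained (no appeal to KKT sufficiency), it produces the optimal \emph{value} $\big\|\sqrt{\vL}\xv\big\|_1^2$ as an explicit byproduct --- which is exactly the quantity the paper needs again in Example~\ref{ex:Opt} and in the identity $\min_{\pv \in \simplex} V(\pv,\cvv)/\norm{\cvv}_2^2 = \|\sqrt{\vL}\cvv\|_1^2/\norm{\cvv}_2^2$ used in Theorem~\ref{thm:properties} and Lemma~\ref{lem:minimax} --- and it explicitly treats the boundary cases ($[\xv]_i = 0$, where the paper's stationarity condition $\lambda = [\xv]_i^2/[\pv]_i^2$ would force $\lambda = 0$ and is not literally satisfied by the proposed solution when $\xv$ has mixed zero and nonzero entries). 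The paper's argument is shorter; yours is more careful and more reusable. Both establish the lemma.
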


\paragraph{The ideal adaptive algorithm.}
We propose to chose the stepsize and the sampling distribution for CD as in Example~\ref{ex:Opt}. One iteration of the resulting CD method is illustrated in Algorithm~\ref{alg-1}. Our bounds on the expected one-step progress can be used to derive convergence rates of this algorithm with the standard techniques. This is exemplified in Appendix~\ref{app:CD}.
In the next Section~\ref{sec:sampling} we develop a practical variant of the ideal algorithm.

\paragraph{Efficiency gain.} 
By comparing the estimates provided in the examples above, we see that the expected progress of the proposed method is always at least as good as for the fixed sampling. For instance in the special case where $L= L_i$ for $i \in [n]$, the $L_i$-based sampling is just uniform sampling with $\alpha_k(\pv_{\rm unif}) = \frac{1}{Ln}$. On the other hand $\alpha_k(\pv_k^\star) = \frac{\norm{\nabla f(\xv_k)}_2^2}{L \norm{\nabla f(\xv_k)}_1^2}$, which can be $n$ times larger than $\alpha_k(\pv_{\rm unif})$. The expected one-step progress in this extreme case coincides with the one-step progress of steepest coordinate descent~\cite{nutini2015coordinate}.%

\subsection{SGD with Adaptive Sampling}
SGD methods are applicable to objective functions which decompose as a sum 
\begin{align}\label{eq:optfsum}
f(\xv) = \textstyle \frac{1}{n} \sum_{i=1}^n f_i(\xv)
\end{align}
with each $f_i \colon \R^d \to \R$ convex. 
In previous work~\cite{Papa2015,Zhaoa15,Zhu:2016} is has been argued that the following gradient-based sampling $[\pvt_k^\star]_i = \frac{\norm{\nabla f_i(\xv_k)}_2}{\sum_{i=1}^n \norm{\nabla f_i(\xv_k)}_2}$ is optimal in the sense that it maximizes the expected progress~\eqref{eq:ubound2}. %
Zhao and Zhang~\cite{Zhaoa15} derive complexity estimates for composite functions. For non-composite functions it becomes easier to derive the complexity estimate. For completeness, we add this simpler proof in Appendix~\ref{app:SGD}.

\section{Safe Adaptive Importance Sampling with Limited Information} 
\label{sec:sampling}
In the previous section we have seen that gradient-based sampling (Example~\ref{ex:Opt}) can yield a massive speed-up compared to a static %
sampling distribution (Example~\ref{ex:L}). However, sampling according to $\pv_k^\star$ in CD requires the knowledge of the full gradient $\nabla f(\xv_k)$ in each iteration. And likewise, sampling from $\pvt_k^\star$ in SGD requires the knowledge of the gradient norms of all components---both these operations are in general inadmissible, i.e. the compute cost would void all computational benefits of the iterative (stochastic) methods over full gradient methods.

However, it is often possible to efficiently compute \emph{approximations} of $\pv_k^\star$ or $\pvt_k^\star$ instead. 
In contrast to previous contributions, we here propose a \emph{safe} way to compute such approximations. By this we mean that our approximate sampling is provably never worse than static sampling, and moreover, we show that our solution is the \emph{best possible} with respect to the limited information at hand.

\subsection{An Optimization Formulation for Sampling}%

\makeatletter
\newcommand{\ST}{\STATE\hspace{-\algorithmicindent}}
\makeatother

\begin{figure}[t]
\scalebox{0.85}{%
\begin{minipage}[t]{.38\textwidth}%
\vspace{-1em}
  \null %
	\begin{algorithm}[H]
	   \caption{Optimal sampling}
	   \label{alg-1}
	\renewcommand{\algorithmiccomment}[1]{\hfill {\footnotesize\textit{#1}} \null}  
	\begin{algorithmic}
	   \ST \COMMENT{(compute full gradient)}\\[0.1em]
	   \ST \textbf{Compute} $\nabla f(\xv_k)$
	   \ST \COMMENT{(define optimal sampling)}\\[0.1em]
	   \ST Define $(\pv_k^\star,\alpha_k^\star)$ as in Example~\ref{ex:Opt}\\[0.3em]   
	   \ST $i_k \sim \pv_k^\star$ \\[0.3em] 
	   \ST $\phantom{\nabla_{i_k} f}$\\[0.3em]
	   \ST $\xv_{k+1} := \xv_k - \frac{\alpha_k^\star}{[\pv_k^\star]_{i_k}} \nabla_{i_k} f(\xv_k)$
	   \end{algorithmic}
	\end{algorithm}
\end{minipage}	
}%
\hfill
\scalebox{0.85}{
\begin{minipage}[t]{.38\textwidth}
\vspace{-1em}
  \null %
	\begin{algorithm}[H]
	   \caption{Proposed safe sampling}
	   \label{alg-2}
	\renewcommand{\algorithmiccomment}[1]{\hfill {\footnotesize\textit{#1}} \null}   
	\begin{algorithmic}
       \ST \COMMENT{(update l.- and u.-bounds)}\\[0.1em]
	   \ST Update $\ellv$, $\uv$ $\phantom{f}$
	   \ST \COMMENT{(compute safe sampling)}\\[0.1em]
	   \ST Define $(\pvh_k,\hat{\alpha}_k)$ as in~\eqref{eq:opt} \\[0.3em]   
	   \ST $i_k \sim \pvh_k$\\[0.3em] 
	   \ST \textbf{Compute} $\nabla_{i_k} f(\xv_k)$\\[0.3em] 
	   \ST $\xv_{k+1} := \xv_k - \frac{\hat{\alpha}_k}{[\pvh_k]_{i_k}} \nabla_{i_k} f(\xv_k)$ $\phantom{\frac{\alpha_k^\star}{}}$
	\end{algorithmic}
	\end{algorithm}
\end{minipage}	
}%
\hfill
\scalebox{0.85}{
\begin{minipage}[t]{.38\textwidth}
\vspace{-1em}
  \null %
	\begin{algorithm}[H]
	   \caption{Fixed sampling}
	   \label{alg-3}
	\renewcommand{\algorithmiccomment}[1]{\hfill {\footnotesize\textit{#1}} \null}   
	\begin{algorithmic}
       \ST ~\COMMENT{$\phantom{(u)}$}\\[0.1em]
       \ST $\phantom{f}$
	   \ST \COMMENT{(define fixed sampling)}\\[0.1em]
	   \ST Define $(\pv_{L},\bar{\alpha})$ as in Example~\ref{ex:L}\\[0.25em]
	   \ST $i_k \sim \pv_{L}$\\[0.3em] 
	   \ST \textbf{Compute} $\nabla_{i_k} f(\xv_k)$\\[0.3em] 
	   \ST $\xv_{k+1} := \xv_k - \frac{\bar{\alpha}}{[\pv_{L}]_{i_k}} \nabla_{i_k} f(\xv_k)$ $\phantom{\frac{\alpha_k^\star}{}}$
	\end{algorithmic}
	\end{algorithm}
\end{minipage}	
}%
\!\null
\caption{%
CD with different sampling strategies. Whilst Alg.~\ref{alg-1} requires to compute the full gradient, the compute operation in Alg.~\ref{alg-2} is as cheap as for fixed importance sampling, Alg.~\ref{alg-3}. Defining the safe sampling $\pvh_k$ requires $O(n \log n)$ time.} 
\label{fig-3algos}
\end{figure}
Formally, we assume that we have in each iteration access to two vectors $\ellv_k,\uv_k \in \R_{\geq 0}^n$ that provide safe upper and lower bounds on either the absolute values of the gradient entries ($[\ellv_k]_i \leq \abs{\nabla_i f(\xv_k)} \leq [\uv_k]_i$) for CD, or of the gradient norms in SGD: ($[\ellv_k]_i \leq \norm{\nabla f_i(\xv_k) }_2 \leq [\uv_k]_i$). 
We postpone the discussion of this assumption to Section~\ref{sec:gradientupdate}, where we give concrete examples. 

The minimization of the upper bound~\eqref{eq:ubound2} amounts to the equivalent problem\footnote{Although only shown here for CD, an equivalent optimization problem arises for SGD methods, cf.~\cite{Zhaoa15}.}
\begin{align}
\min_{\alpha_k} \min_{\pv_k \in \simplex } \left[-\alpha_k \norm{\cvv_k}_2^2 + \frac{\alpha_k^2}{2} V(\pv_k,  \cvv_k) \right] \quad \Leftrightarrow \quad \min_{\pv_k \in \simplex } \frac{V(\pv_k, \cvv_k)}{\norm{\cvv_k}_2^2} \label{eq:unknown}
\end{align}
where $\cvv_k \in \R^n$ represents the \emph{unknown} true gradient. That is, with respect to the bounds $\ellv_k,\uv_k$, we can write $\cvv_k \in C_k :=\{\xv \in \R^n \colon [\ellv_k]_i \leq [\xv]_i \leq [\uv_k]_i, i \in [n] \}$.
In Example~\ref{ex:Opt} we derived the optimal solution for a fixed $\cvv_k \in C_k$.
However, this is not sufficient to find the optimal solution for an arbitrary $\cvv_k \in C_k$.
Just computing the optimal solution for an arbitrary (but fixed) $\cvv_k \in C_k$ is unlikely to yield a good solution. 
For instance both extreme cases $\cvv_k = \ell_k$ and $\cvv_k = \uv_k$ (the latter choice is quite common, cf.~\cite{Zhaoa15,Perekrestenko17a}) might be poor. This is demonstrated in the next example.
\begin{example}
Let $\ellv = (1,2)^\top$, $\uv = (2,3)^\top$, $\cvv=(2,2)^\top$ and $L_1=L_2=1$. Then $V\big(\frac{\ellv}{\norm{\ellv}_1},\cvv \big)= \frac{9}{4} \norm{\cvv}_2^2$, $V\big(\frac{\uv}{\norm{\uv}_1},\cvv \big)= \frac{25}{12} \norm{\cvv}_2^2$, whereas for uniform sampling $V\big(\frac{\cvv}{\norm{\cvv}_1},\cvv \big)= 2 \norm{\cvv}_2^2$.
\end{example}

\paragraph{The proposed sampling.} As a consequence of these observations, we propose to solve the following optimization problem to find the best sampling distribution with respect to $C_k$:
\begin{align}
 v_k &:= \min_{\pv \in \simplex} \max_{\cvv \in C_k} \frac{V(\pv,   \cvv)}{\norm{\cvv}_2^2}\,, & \text{and to set} & & &(\alpha_k,\pv_k) := \big(\tfrac{1}{v_k},\pvh_k\big)\,,  \label{eq:opt}
\end{align}
where $\pvh_k$ denotes a solution of~\eqref{eq:opt}. The resulting algorithm for CD is summarized in Alg.~\ref{alg-2}.

In the remainder of this section we discuss the properties of the solution $\pvh_k$ (Theorem~\ref{thm:properties}) an how such a solution can be efficiently be computed (Theorem~\ref{thm:solution}, Algorithm~\ref{alg-solution}).

\subsection{Proposed Sampling and its Properties}
	\begin{algorithm}[b]
	   \caption{Computing the Safe Sampling for Gradient Information $\ellv,\uv$}
	   \label{alg-solution}
\renewcommand{\algorithmiccomment}[1]{\hfill {\footnotesize\textit{#1}} \null}   	   
	  \begin{algorithmic}[1]
	   \STATE {\bf Input:} $\0_n \leq \ellv \leq \uv$, ${\bf L}$, {\bf Initialize:} $\cvv=\0_n$, $u=1$, $\ell=n$, $D=\emptyset$.
	   \STATE $\ellv^{\rm sort} := {\rm sort\_asc}({\sqrt{{\bf L}^{-1}}}\ellv)$, $\uv^{\rm sort} := {\rm sort\_asc}({\sqrt{{\bf L}^{-1}}} \uv)$, $m = \max ( \ellv^{\rm sort} )$
	   \WHILE{$u \leq \ell$} 
	   \IF[(largest undecided lower bound is violated)]{ $[\ellv^{\rm sort}]_{\ell} > m$}
	       \STATE Set corresponding $[\cvv]_{\rm index} := [\sqrt{\bf L }\ellv^{\rm sort}]_{\ell}$;~ $\ell := \ell - 1$;~~ $D := D \cup \{{\rm index}\}$
	    \ELSIF[(smallest undecided upper bound is violated)]{ $[\uv^{\rm sort}]_{u} < m$}
	    \STATE  Set corresponding $[\cvv]_{\rm index} := [\sqrt{\bf L }\uv^{\rm sort}]_{u}$;~ $u := u + 1$;~ $D := D \cup \{{\rm index}\}$
	    \ELSE
	     \STATE \textbf{break} \COMMENT{(no constraints are violated)}
	   \ENDIF
	   \STATE $m := \norm{\cvv}_2^2 \cdot \| \sqrt{\bf L} \cvv \|_1^{-1}$ \COMMENT{(update $m$ as in~\eqref{eq:optimalitycondition})}
	   \ENDWHILE
	   \STATE Set $[\cvv]_{i} := \sqrt{L}_i m$ for all $i \notin D$ and {\bf Return} $\left( \cvv, \pv = \frac{\sqrt{\bf L}\cvv}{\| \sqrt{\bf L} \cvv \|_1}, v = \frac{\| \sqrt{\bf L} \cvv \|_1^2}{\| \cvv \|_2^2} \right)$
	  \end{algorithmic}
	\end{algorithm}
\begin{theorem}\label{thm:properties}
Let $(\pvh,\cvh) \in \simplex \times \R_{\geq 0}^n$ denote a solution of~\eqref{eq:opt}. Then $L_{\rm min} \leq v_k \leq \Tr{{\bf L}}$ and\vspace{-1mm}
\begin{enumerate}[(i)]
 \item  $\displaystyle \max_{\cvv \in C_k} \frac{V(\pvh, \cvv)}{\norm{\cvv}_2^2} \leq \max_{\cvv \in C_k} \frac{V(\pv, \cvv)}{\norm{\cvv}_2^2} $, $\forall \pv \in \simplex$; \hfill ($\pvh$ has the best worst-case guarantee)
 \item $V(\pvh, \cvv) \leq \Tr{{\bf L}} \cdot \norm{\cvv}_2^2$, $\forall \cvv \in C_k$. \hfill ($\pvh$ is always better than $L_i$-based sampling)
\end{enumerate}
\end{theorem}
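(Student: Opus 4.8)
The plan is to read everything directly off the definition of $(\pvh,\cvh)$ as a solution of the min--max problem~\eqref{eq:opt}, using the $L_i$-based distribution $\pv_{\bf L}$ of Example~\ref{ex:L} as an explicit feasible competitor and Lemma~\ref{lem:V} (equivalently, plain Cauchy--Schwarz) for a matching lower bound. Claim~(i) is then essentially a tautology: by construction $\pvh\in\argmin_{\pv\in\simplex}\max_{\cvv\in C_k} V(\pv,\cvv)/\norm{\cvv}_2^2$, so for every $\pv\in\simplex$,
\[
\max_{\cvv\in C_k}\frac{V(\pvh,\cvv)}{\norm{\cvv}_2^2}=v_k\le\max_{\cvv\in C_k}\frac{V(\pv,\cvv)}{\norm{\cvv}_2^2},
\]
which is exactly statement~(i).

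For the upper bound $v_k\le\Tr{{\bf L}}$ and for claim~(ii) I would plug the feasible point $\pv_{\bf L}$, $[\pv_{\bf L}]_i=L_i/\Tr{{\bf L}}$, into the inner maximum. A one-line computation gives $V(\pv_{\bf L},\cvv)=\sum_i\frac{L_i[\cvv]_i^2}{L_i/\Tr{{\bf L}}}=\Tr{{\bf L}}\norm{\cvv}_2^2$ for \emph{every} $\cvv$, so the inner maximum at $\pv_{\bf L}$ equals $\Tr{{\bf L}}$ and hence $v_k\le\Tr{{\bf L}}$. Taking $\pv=\pvh$ in the inner maximum, every $\cvv\in C_k$ with $\cvv\neq\0_n$ satisfies $V(\pvh,\cvv)/\norm{\cvv}_2^2\le v_k\le\Tr{{\bf L}}$, i.e. $V(\pvh,\cvv)\le\Tr{{\bf L}}\norm{\cvv}_2^2$; the case $\cvv=\0_n$ is trivial. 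This is claim~(ii).

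The lower bound $v_k\ge L_{\rm min}$ I would obtain from Lemma~\ref{lem:V}: for any $\cvv$ one has $\min_{\pv\in\simplex}V(\pv,\cvv)=\norm{\sqrt{{\bf L}}\cvv}_1^2$, and since $\norm{\sqrt{{\bf L}}\cvv}_1=\sum_i\sqrt{L_i}\,\abs{[\cvv]_i}\ge\sqrt{L_{\rm min}}\,\norm{\cvv}_1\ge\sqrt{L_{\rm min}}\,\norm{\cvv}_2$, it follows that $V(\pvh,\cvv)\ge\min_{\pv}V(\pv,\cvv)\ge L_{\rm min}\norm{\cvv}_2^2$ for every $\cvv\in C_k$. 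As $C_k$ is nonempty, maximizing over $\cvv\in C_k$ gives $v_k=\max_{\cvv\in C_k}V(\pvh,\cvv)/\norm{\cvv}_2^2\ge L_{\rm min}$. (One can also avoid Lemma~\ref{lem:V} and bound $\sum_i[\cvv]_i^2/[\pv]_i\ge\norm{\cvv}_1^2$ directly by Cauchy--Schwarz.)

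There is no real obstacle here; the one thing to be careful about is that both estimates must hold \emph{uniformly} over the unknown box $C_k$, which is why the argument is routed through the scale-invariant ratio $V(\pv,\cvv)/\norm{\cvv}_2^2$ and the exact identity $V(\pv_{\bf L},\cdot)\equiv\Tr{{\bf L}}\norm{\cdot}_2^2$ rather than through any particular worst-case gradient $\cvh$. I would also note in passing that this ratio depends only on the direction of $\cvv$ and that $C_k$ is compact, so the inner supremum is finite and the min--max is well posed, with the supremum implicitly restricted to $\cvv\neq\0_n$ (the contribution of $\cvv=\0_n$ being $0$).
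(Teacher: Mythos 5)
Your proof is correct, and it shares all the main ingredients with the paper's: claim~(i) is read off the definition of $\pvh$ as a minimizer, the upper bound $v_k\le\Tr{{\bf L}}$ comes from the feasible competitor $\pv_{\bf L}$ via the identity $V(\pv_{\bf L},\cvv)\equiv\Tr{{\bf L}}\norm{\cvv}_2^2$, and the lower bound $v_k\ge L_{\rm min}$ comes from Lemma~\ref{lem:V} plus $\norm{\sqrt{{\bf L}}\cvv}_1\ge\sqrt{L_{\rm min}}\norm{\cvv}_2$. There is, however, one genuine structural difference. The paper derives~(ii) through the chain $\frac{V(\pvh,\cvv)}{\norm{\cvv}_2^2}\stackrel{(\ast)}{\le}\frac{V(\pvh,\cvh)}{\norm{\cvh}_2^2}\le\Tr{{\bf L}}$, where the inequality~$(\ast)$ asserts that $\cvh$ attains the inner maximum at $\pvh$; this is justified only in the appendix by interchanging $\min$ and $\max$ in~\eqref{eq:opt} via Sion's minimax theorem (Lemma~\ref{lem:minimax}). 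You bypass $(\ast)$ entirely: since $\pvh$ minimizes the outer problem, $v_k=\max_{\cvv\in C_k}V(\pvh,\cvv)/\norm{\cvv}_2^2$ holds by definition, so every $\cvv\in C_k$ satisfies $V(\pvh,\cvv)/\norm{\cvv}_2^2\le v_k\le\Tr{{\bf L}}$ with no appeal to a saddle point. Your route is therefore more elementary and self-contained for this particular theorem (the minimax exchange is still indispensable in the paper for Theorem~\ref{thm:solution}, where the solution is actually constructed from the max--min problem); what the paper's phrasing buys is consistency with the pair $(\pvh,\cvh)$ that Algorithm~\ref{alg-solution} outputs. Your closing remarks on scale invariance, compactness of $C_k$, and the degenerate direction $\cvv=\0_n$ address an edge case the paper silently ignores, and are harmless.
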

\begin{remark}
In the special case $L_i = L$ for all $i \in [n]$, the $L_i$-based sampling boils down to uniform sampling (Example~\ref{ex:L}) and $\pvh$ is better than uniform sampling: $V(\pvh,\cvv) \leq L n \norm{\cvv}_2^2$, $\forall \cvv \in C_k$.
\end{remark}
\begin{proof}
Property \emph{(i)} is an immediate consequence of~\eqref{eq:opt}. Moreover, observe that the $L_i$-based sampling $\pv_{L}$ is a feasible solution in~\eqref{eq:opt} with value $ \frac{V(\pv_{L},\cvv)}{\norm{\cvv}_2^2} \equiv \Tr{{\bf L}}$ for all $\cvv \in C_k$. Hence
\begin{align}
L_{\rm min} \leq \frac{\| {\bf \sqrt{L}} \cvv \|_1^2}{\norm{\cvv}_2^2} \stackrel{\ref{lem:V}}{=} \min_{\pv \in \simplex} \frac{V(\pv, \cvv)}{\norm{\cvv}_2^2} \leq  \frac{V(\pvh, \cvv)}{\norm{\cvv}_2^2} \stackrel{(\ast)}{\leq} \frac{V(\pvh, \cvh)}{\norm{\cvh}_2^2} \refLE{eq:opt}  \max_{\cvv \in C_k} \frac{V(\pv_{L},\cvv)}{\norm{\cvv}_2^2} = \Tr{{\bf L}}\,,
\end{align}
for all $\cvv \in C_k$, thus $v_k \in [L_{\rm min} ,\Tr{{\bf L}}]$ and \emph{(ii)} follows. We prove inequality~($\ast$) in the appendix, by showing that $\min$ and $\max$ can be interchanged in~\eqref{eq:opt}.
\end{proof}
\paragraph{A geometric interpretation.} We show in Appendix~\ref{app:sampling} that the optimization problem~\eqref{eq:opt} can equivalently be written as $\sqrt{v_k} =  \max_{\cvv \in C_k}  \frac{\|\sqrt{\bf L} \cvv \|_1}{\norm{\cvv}_2} =  \max_{\cvv \in C_k}  \frac{ \langle \sqrt{\lv},\cvv \rangle}{\norm{\cvv}_2}$, where $[\lv]_i = L_i$ for $i \in [n]$. The maximum is thus attained for vectors $\cvv \in C_k$ that minimize the angle with the vector $\lv$.

\begin{theorem}\label{thm:solution}
Let $\cvv \in C_k$, $\pv = \frac{\sqrt{\bf L}\cvv}{\| \sqrt{\bf L} \cvv \|_1}$ and denote $m = \norm{\cvv}_2^2 \cdot \|\sqrt{\bf L} \cvv \| _1^{-1}$. If
\begin{align}
[\cvv]_i &= \begin{cases}
              [\uv_k]_i & \text{if } [\uv_k]_i \leq \sqrt{L_i} m\,, \\
              [\ellv_k]_i & \text{if } [\ellv_k]_i \geq \sqrt{L_i} m\,, \\
              \sqrt{L_i} m &\text{otherwise},
            \end{cases}
             & &\forall i \in [n]\,, \label{eq:optimalitycondition}
\end{align}
then $(\pv,\cvv)$ is a solution to~\eqref{eq:opt}. Moreover, such a solution can be computed in time $O(n \log n)$.
\end{theorem}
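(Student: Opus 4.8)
The plan is to establish the characterization~\eqref{eq:optimalitycondition} via the minimax/saddle-point structure of~\eqref{eq:opt}, and then turn the characterization into an algorithm. First I would invoke (or prove, as promised in the appendix) that $\min$ and $\max$ can be swapped in~\eqref{eq:opt}, so that a solution $(\pvh,\cvh)$ is a saddle point: $\pvh$ is the minimizer of $\pv \mapsto \max_{\cvv} V(\pv,\cvv)/\norm{\cvv}_2^2$ and $\cvh$ is the maximizer of $\cvv \mapsto \min_{\pv} V(\pv,\cvv)/\norm{\cvv}_2^2$. By Lemma~\ref{lem:V} the inner minimum over $\pv$ equals $\|\sqrt{\bf L}\cvv\|_1^2/\norm{\cvv}_2^2$, so via the geometric reformulation in the paragraph preceding the theorem, $\cvh$ maximizes $g(\cvv) := \langle \sqrt{\lv},\cvv\rangle / \norm{\cvv}_2$ over the box $C_k$. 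The heart of the proof is to show that the first-order (KKT) conditions for maximizing this smooth quasi-concave-on-a-cone objective over the box $C_k$ are exactly~\eqref{eq:optimalitycondition}.

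Concretely, I would compute $\nabla g(\cvv) = \frac{1}{\norm{\cvv}_2}\bigl(\sqrt{\lv} - \frac{\langle\sqrt{\lv},\cvv\rangle}{\norm{\cvv}_2^2}\cvv\bigr)$. At a maximizer over the box, the stationarity condition reads: for each coordinate $i$, either $[\cvv]_i$ is interior and $[\nabla g(\cvv)]_i = 0$, i.e. $\sqrt{L_i} = \frac{\langle\sqrt{\lv},\cvv\rangle}{\norm{\cvv}_2^2}[\cvv]_i$, which rearranges to $[\cvv]_i = \sqrt{L_i}\, m$ with $m = \norm{\cvv}_2^2/\|\sqrt{\bf L}\cvv\|_1$ (using $\langle\sqrt{\lv},\cvv\rangle = \|\sqrt{\bf L}\cvv\|_1$ since $\cvv\geq\0$); or $[\cvv]_i = [\uv_k]_i$ and $[\nabla g(\cvv)]_i \geq 0$, i.e. $\sqrt{L_i}\,m \geq [\uv_k]_i$; or $[\cvv]_i = [\ellv_k]_i$ and $[\nabla g(\cvv)]_i \leq 0$, i.e. $\sqrt{L_i}\,m \leq [\ellv_k]_i$. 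These three cases are precisely the three lines of~\eqref{eq:optimalitycondition}. One then argues that a feasible point satisfying these conditions exists and, because $g$ restricted to any ray is constant and the sublevel structure makes stationary points global maxima on the box, that any such point is in fact optimal; hence $(\pv,\cvv)$ with $\pv = \sqrt{\bf L}\cvv/\|\sqrt{\bf L}\cvv\|_1$ solves~\eqref{eq:opt} with value $v_k = \|\sqrt{\bf L}\cvv\|_1^2/\norm{\cvv}_2^2$.

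For the algorithmic claim, I would observe that~\eqref{eq:optimalitycondition} couples all coordinates only through the single scalar $m$, which is itself a function of $\cvv$. Rescaling by $\sqrt{{\bf L}^{-1}}$, the condition becomes: $[\sqrt{{\bf L}^{-1}}\cvv]_i = \mathrm{median}\bigl([\sqrt{{\bf L}^{-1}}\ellv_k]_i,\; m,\; [\sqrt{{\bf L}^{-1}}\uv_k]_i\bigr)$. As $m$ increases from $0$ to $\infty$, each coordinate is first clamped at its lower bound, then follows $m$ linearly, then clamps at its upper bound; so the set of "active" (interior) coordinates is determined by thresholding the sorted scaled lower and upper bounds against $m$. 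Sorting $\sqrt{{\bf L}^{-1}}\ellv_k$ and $\sqrt{{\bf L}^{-1}}\uv_k$ costs $O(n\log n)$; then one sweeps through the $O(n)$ breakpoints, and on each interval the self-consistency equation $m = \norm{\cvv(m)}_2^2/\|\sqrt{\bf L}\cvv(m)\|_1$ is a fixed-point condition that can be checked/solved in $O(1)$ amortized work per step by maintaining running partial sums — this is exactly what Algorithm~\ref{alg-solution} does. The total cost is therefore dominated by the sort, $O(n\log n)$.

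The main obstacle I anticipate is twofold: (a) rigorously justifying the minimax swap and the claim that a box-constrained stationary point of the quasi-concave objective $g$ is a global maximizer (the objective is positively homogeneous of degree $0$, so one must be a little careful — e.g. argue on the compact set $C_k \cap \{\norm{\cvv}_2 = 1\}$, or note $g^2$ is concave along the relevant directions after a suitable substitution); and (b) proving that the greedy sweep in Algorithm~\ref{alg-solution} terminates at the correct active set — i.e. that the monotone "clamp–follow–clamp" behavior in $m$ makes the decisions made on earlier breakpoints never need revision, so that a single pass suffices. Both are conceptually straightforward but require care to state cleanly; the rest is the routine calculus and bookkeeping sketched above.
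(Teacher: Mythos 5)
Your proposal follows essentially the same route as the paper's proof: the minimax swap (the paper's Lemma~\ref{lem:minimax}, via Sion's theorem) reduces~\eqref{eq:opt} to maximizing $\|\sqrt{\bf L}\cvv\|_1/\norm{\cvv}_2$ over the box $C_k$, the paper's KKT conditions for that problem are exactly your box-constrained first-order conditions on $\nabla g$, and the paper's correctness argument for Algorithm~\ref{alg-solution} is precisely the monotonicity claim you identify (showing $m_{t+1}\in[m_t,[\ellv^{\rm sort}]_\ell]$ after clamping a lower bound, and the analogous interval after clamping an upper bound, so earlier decisions never need revision). The two caveats you flag are real but are handled (or glossed at the same level of rigor) in the paper's appendix.
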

\begin{proof}
This can be proven by examining the optimality conditions of problem~\eqref{eq:opt}. This is deferred to Section~\ref{app:solution} of the appendix. A procedure that computes such a solution is depicted in Algorithm~\ref{alg-solution}. The algorithm makes extensive use of~\eqref{eq:optimalitycondition}. For simplicity, assume first ${\bf L}=\mathbf{I}_n$ for now. In each iteration $t$ %
, a potential solution vector $\cvv_t$ is proposed, and it is verified whether this vector satisfies all optimality conditions. In Algorithm~\ref{alg-solution}, $\cvv_t$ is just implicit, with $[\cvv_t]_i = [\cvv]_i$ for decided indices $i \in D$ and $[\cvv_t]_i = [\sqrt{\bf L} m]_i$ for undecided indices $i \notin D$. After at most $n$ iterations a valid solution is found. By sorting the components of $\sqrt{{\bf L}^{-1}}\ellv_k$ and $\sqrt{{\bf L}^{-1}}\uv_k$ by their magnitude, at most a linear number of inequality checks in~\eqref{eq:optimalitycondition} have to be performed in total. Hence the running time is dominated by the $O(n \log n)$ complexity of the sorting algorithm. A formal proof is given in the appendix.
\end{proof}

\paragraph{Competitive Ratio.} We now compare the proposed sampling distribution $\pvh_k$ with the optimal sampling solution in \emph{hindsight}. We know that if the true (gradient) vector $\cvt \in C_k$ would be given to us, then the corresponding optimal probability distribution would be $\pv^\star(\cvt) = \frac{\sqrt{\bf L} \cvt }{\| \sqrt{\bf L} \cvt  \|_1}$ (Example~\ref{ex:Opt}). Thus, 
for this $\cvt$ we can now analyze the ratio $\frac{V(\pvh_k, \cvt)}{V(\pv^\star(\cvt), \cvt)}$. As we are interested in the worst case ratio among all possible candidates $\cvt \in C_k$, we define
\begin{align}
 \rho_k := \max_{\cvv \in C_k} \frac{V(\pvh,\cvv)}{ V(\pv^\star(\cvv),\cvv)} =  \max_{\cvv \in C_k} \frac{V(\pvh,\cvv)}{\| \sqrt{\bf L} \cvv \|_1^2} \,.
\end{align}
\begin{lemma}
\label{lem:wk}
Let $w_k := \min_{\cvv \in C_k} \frac{\| \sqrt{\bf L} \cvv \| _1^2}{\norm{\cvv}_2^2}$. Then $L_{\rm min} \leq w_k \leq v_k$, and $\rho_k \leq \frac{v_k}{w_k} (\leq \frac{v_k}{L_{\rm min}})$.
\end{lemma}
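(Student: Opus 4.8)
The plan is to establish the three claims in order: first the bounds $L_{\rm min} \leq w_k \leq v_k$, and then the competitive ratio bound $\rho_k \leq v_k/w_k$. For the lower bound $L_{\rm min} \leq w_k$, I would note that for any $\cvv \in C_k$ (in particular $\cvv \neq \0$ since $\ellv_k \geq \0$, but we may assume $\uv_k \neq \0$, else the statement is vacuous) the Cauchy--Schwarz inequality gives $\| \sqrt{\bf L} \cvv \|_1 = \sum_i \sqrt{L_i} \abs{[\cvv]_i} \geq \sqrt{L_{\rm min}} \sum_i \abs{[\cvv]_i} = \sqrt{L_{\rm min}} \norm{\cvv}_1 \geq \sqrt{L_{\rm min}} \norm{\cvv}_2$, so the ratio $\| \sqrt{\bf L}\cvv\|_1^2 / \norm{\cvv}_2^2 \geq L_{\rm min}$ pointwise, hence after taking the min over $C_k$. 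For the upper bound $w_k \leq v_k$, I would use Lemma~\ref{lem:V}: for every $\cvv \in C_k$ we have $\| \sqrt{\bf L}\cvv\|_1^2 / \norm{\cvv}_2^2 = \min_{\pv \in \simplex} V(\pv,\cvv)/\norm{\cvv}_2^2 \leq V(\pvh,\cvv)/\norm{\cvv}_2^2$, so
\begin{align}
w_k = \min_{\cvv \in C_k} \frac{\| \sqrt{\bf L}\cvv\|_1^2}{\norm{\cvv}_2^2} \leq \min_{\cvv \in C_k} \frac{V(\pvh,\cvv)}{\norm{\cvv}_2^2} \leq \max_{\cvv \in C_k} \frac{V(\pvh,\cvv)}{\norm{\cvv}_2^2} = v_k\,,
\end{align}
where the last equality is the definition of $v_k$ in~\eqref{eq:opt} evaluated at the optimal $\pvh$.

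For the competitive ratio, the key observation is that $V(\pv^\star(\cvv),\cvv) = \| \sqrt{\bf L}\cvv\|_1^2$ by Lemma~\ref{lem:V} (plugging the minimizing $\pv$ back into $V$), which is exactly the second displayed form of $\rho_k$ already given in the statement. I would then bound, for any $\cvv \in C_k$,
\begin{align}
\frac{V(\pvh,\cvv)}{\| \sqrt{\bf L}\cvv\|_1^2} = \frac{V(\pvh,\cvv)}{\norm{\cvv}_2^2} \cdot \frac{\norm{\cvv}_2^2}{\| \sqrt{\bf L}\cvv\|_1^2} \leq v_k \cdot \frac{1}{w_k}\,,
\end{align}
using $V(\pvh,\cvv)/\norm{\cvv}_2^2 \leq v_k$ for the first factor (definition of $v_k$ as a max over $C_k$) and $\norm{\cvv}_2^2 / \| \sqrt{\bf L}\cvv\|_1^2 \leq 1/w_k$ for the second (definition of $w_k$ as a min over $C_k$). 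Taking the max over $\cvv \in C_k$ yields $\rho_k \leq v_k / w_k$, and the parenthetical $v_k/w_k \leq v_k/L_{\rm min}$ is immediate from $w_k \geq L_{\rm min}$.

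The argument is essentially a bookkeeping exercise once Lemma~\ref{lem:V} is available, so I do not anticipate a genuine obstacle; the only point requiring mild care is the decoupling step, namely that one cannot simultaneously choose the $\cvv$ attaining the max in $v_k$ and the $\cvv$ attaining the min in $w_k$ — but this is not needed, since bounding each factor separately by its own extremal value over $C_k$ is a valid (if possibly loose) upper bound on the product for every fixed $\cvv$. One should also note the degenerate case $\uv_k = \0$ (equivalently $C_k = \{\0\}$), where all the displayed ratios are of the indeterminate form $0/0$; I would either exclude it by the standing assumption that the bounds are nontrivial, or interpret the quantities by continuity, matching the convention already implicit in~\eqref{eq:opt} and Theorem~\ref{thm:properties}.
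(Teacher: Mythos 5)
Your proof is correct and the main step---bounding $\rho_k$ by splitting the ratio into $\frac{V(\pvh,\cvv)}{\norm{\cvv}_2^2}\cdot\frac{\norm{\cvv}_2^2}{\|\sqrt{\bf L}\cvv\|_1^2}$ and maximizing each factor separately over $C_k$---is exactly the paper's argument. The only (minor, harmless) divergence is in justifying $w_k \leq v_k$: the paper invokes the minimax identity $v_k = \max_{\cvv \in C_k}\|\sqrt{\bf L}\cvv\|_1^2/\norm{\cvv}_2^2$ from Lemma~\ref{lem:minimax}, whereas you use only Lemma~\ref{lem:V} plus the definition of $v_k$ (i.e.\ the weak-duality direction), which is slightly more elementary; you also spell out the $L_{\rm min}\leq w_k$ bound, which the paper leaves implicit.
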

\begin{lemma}
\label{lem:gaprelative}
Let $\gamma \geq 1$. If $[C_k]_i \cap \gamma [C_k]_i = \emptyset$ and $\gamma^{-1} [C_k]_i \cap [C_k]_i = \emptyset$  for all $i\in [n]$ (here $[C_k]_i$ denotes the projection on the $i$-th coordinate), then 
$\rho_k \leq \gamma^4$.
\end{lemma}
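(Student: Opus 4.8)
The plan is to reduce the claim to Lemma~\ref{lem:wk}, which already yields $\rho_k\le v_k/w_k$; it therefore suffices to prove $v_k\le\gamma^4 w_k$. The first step is to unpack the two disjointness hypotheses. Writing $[C_k]_i=[[\ellv_k]_i,[\uv_k]_i]$ for the $i$-th coordinate projection and using $\gamma\ge1$, the interval $\gamma[C_k]_i=[\gamma[\ellv_k]_i,\gamma[\uv_k]_i]$ is disjoint from $[C_k]_i$ exactly when $[\uv_k]_i<\gamma[\ellv_k]_i$, and $\gamma^{-1}[C_k]_i\cap[C_k]_i=\emptyset$ is equivalent to the very same inequality; in particular $[\ellv_k]_i>0$, so $C_k$ is a compact box contained in $\R_{>0}^n$. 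From this I extract the coordinatewise ratio bound
\begin{align}
 \gamma^{-1}\ \le\ \frac{[\cvv]_i}{[\cvv']_i}\ \le\ \gamma\qquad\text{for all } \cvv,\cvv'\in C_k,\ i\in[n], \label{eq:prop-ratio}
\end{align}
which holds because $[\cvv]_i\le[\uv_k]_i\le\gamma[\ellv_k]_i\le\gamma[\cvv']_i$ and symmetrically.

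Next I rewrite $v_k$ and $w_k$ as the extreme values of one quotient over $C_k$. By Lemma~\ref{lem:V}, $\min_{\pv\in\simplex}V(\pv,\cvv)=\norm{\sqrt{\bf L}\cvv}_1^2$, and since $\min$ and $\max$ may be interchanged in~\eqref{eq:opt} (the fact used to prove inequality $(\ast)$ in the proof of Theorem~\ref{thm:properties}, recorded as the geometric interpretation of~\eqref{eq:opt}), one has $v_k=\max_{\cvv\in C_k}g(\cvv)$ where
\begin{align}
 g(\cvv)\ :=\ \frac{\norm{\sqrt{\bf L}\cvv}_1^2}{\norm{\cvv}_2^2}\ =\ \frac{\langle\sqrt{\lv},\cvv\rangle^2}{\norm{\cvv}_2^2},
\end{align}
the second equality using $\cvv\ge\0$. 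By definition $w_k=\min_{\cvv\in C_k}g(\cvv)$, and both extrema are attained on the compact set $C_k$.

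The core of the argument is a multiplicative stability estimate: I claim $g(\cvv)\le\gamma^4 g(\cvv')$ for all $\cvv,\cvv'\in C_k$. Indeed, \eqref{eq:prop-ratio} and positivity of $\sqrt{\lv}$ give $\langle\sqrt{\lv},\cvv\rangle\le\gamma\langle\sqrt{\lv},\cvv'\rangle$, hence $\langle\sqrt{\lv},\cvv\rangle^2\le\gamma^2\langle\sqrt{\lv},\cvv'\rangle^2$, and likewise $\norm{\cvv'}_2^2=\sum_i[\cvv']_i^2\le\gamma^2\sum_i[\cvv]_i^2=\gamma^2\norm{\cvv}_2^2$; multiplying the two bounds yields $g(\cvv)/g(\cvv')\le\gamma^4$. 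Taking $\cvv$ a maximizer and $\cvv'$ a minimizer of $g$ over $C_k$ gives $v_k\le\gamma^4 w_k$, and Lemma~\ref{lem:wk} then delivers $\rho_k\le v_k/w_k\le\gamma^4$.

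I expect the only genuinely delicate point to be the identity $v_k=\max_{\cvv\in C_k}g(\cvv)$ in the second step, which rests on the exchange of $\min$ and $\max$ established earlier in the paper; the remainder is elementary, and the exponent $4$ is simply $2$ lost in the numerator plus $2$ lost in the denominator of $g$. If one prefers to avoid invoking the exchange, the bound $v_k\le\gamma^4 w_k$ can be obtained directly: for $\cvv'$ a minimizer of $g$, plug the feasible sampling $\pv^\star(\cvv')=\sqrt{\bf L}\cvv'/\norm{\sqrt{\bf L}\cvv'}_1$ into $v_k\le\max_{\cvv\in C_k}V(\pv^\star(\cvv'),\cvv)/\norm{\cvv}_2^2$ and apply~\eqref{eq:prop-ratio} three times; so no serious obstacle is anticipated.
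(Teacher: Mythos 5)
Your proof is correct, but it takes a different route from the paper's. You reduce everything to the inequality $v_k\le\gamma^4 w_k$ via Lemma~\ref{lem:wk} and then control the oscillation of the Rayleigh-type quotient $g(\cvv)=\|\sqrt{\bf L}\cvv\|_1^2/\norm{\cvv}_2^2$ over $C_k$, using the coordinatewise sandwich $\gamma^{-1}\le[\cvv]_i/[\cvv']_i\le\gamma$ and the minimax identity $v_k=\max_{\cvv\in C_k}g(\cvv)$ (Lemma~\ref{lem:minimax}). The paper instead works directly with the definition of $\rho_k$: it invokes the structural fact from Theorem~\ref{thm:solution} that $\pvh=\sqrt{\bf L}\cvh/\|\sqrt{\bf L}\cvh\|_1$ for some $\cvh\in C_k$, so that $V(\pvh,\cvh)=\|\sqrt{\bf L}\cvh\|_1^2$, and then uses monotonicity of $V(\pvh,\cdot)$ together with $\cvv^\star\le\gamma\cvh$ and $\|\sqrt{\bf L}\cvv^\star\|_1\ge\gamma^{-1}\|\sqrt{\bf L}\cvh\|_1$ to collect the two factors of $\gamma^2$. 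Both arguments hinge on the same observation that the disjointness hypotheses force $[\uv_k]_i<\gamma[\ellv_k]_i$, and both lose $\gamma^2$ in the numerator and $\gamma^2$ in the denominator; your version has the small advantage of establishing the sharper intermediate statement $v_k\le\gamma^4 w_k$ (a quantitative refinement of Lemma~\ref{lem:wk} under the relative-accuracy hypothesis), while the paper's version is slightly more self-contained for this particular lemma since it does not need the variational characterization of $w_k$. Your closing remark---that the exchange of $\min$ and $\max$ can be avoided by plugging the feasible sampling $\pv^\star(\cvv')$ into the outer minimization---is also correct and checks out.
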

These two lemma provide bounds on the competitive ratio. Whilst Lemma~\ref{lem:gaprelative} relies on a relative accuracy condition, Lemma~\ref{lem:wk} can always be applied. However, the corresponding minimization problem is non-convex. %
Note that knowledge of $\rho_k$ is not needed to run the algorithm.

\section{Example Safe Gradient Bounds}
\label{sec:gradientupdate}

In this section, we argue that for a large class of objective functions of interest in machine learning, suitable safe upper and lower bounds $\ellv,\uv$ on the gradient along every coordinate direction can be estimated and maintained efficiently during optimization.
A similar argument can be given for the efficient approximation of component wise gradient norms in finite sum objective based stochastic gradient optimization.

As the guiding example, we will here showcase the training of generalized linear models (GLMs) as e.g. in regression, classification and feature selection.
These models are formulated in terms of a given data matrix $\mathbf{A} \in \R^{d\times n}$ with columns $\av_i \in \R^d$ for $i \in [n]$.%

\paragraph{Coordinate Descent - GLMs with Arbitrary Regularizers.}
Consider general objectives of the form $f(\xv ):= h(\mathbf{A} \xv) + \sum_{i=1}^n \psi_i([\xv]_i)$ with an arbitrary convex separable regularizer term given by the $\psi_i \colon \R \to \R$ for $i\in [n]$. A key example is when $h \colon \R^d \to \R$ describes the \emph{least-squares} regression objective $h( \mathbf{A} \xv) = \frac{1}{2}\norm{\mathbf{A} \xv - \bv }_2^2$ for a $\bv \in \R^d$. 
Using that this $h$ is twice differentiable with $\nabla^2 h(\mathbf{A}\xv) = \mathbf{I}_n$, it is easy to see that we can track the evolution of all gradient entries, when performing CD steps, as follows:%
\begin{align}
\nabla_i f(\xv_{k+1}) - \nabla_i f(\xv_k) = \gamma_k \dotprod{\av_i, \av_{i_k}}\,,  \quad \forall i \neq i_k \,. \label{eq:exact_leastsq}
\end{align}
for $i_k$ being the coordinate changed in step $k$ (here we also used the separability of the regularizer).

Therefore, all gradient changes can be tracked exactly if the inner products of all datapoints are available, or 
approximately if those inner products can be upper and lower bounded.
For computational efficiency, we in our experiments simply use Cauchy-Schwarz $|\dotprod{\av_i, \av_{i_k}} | \le \norm{\av_i}\cdot\norm{\av_{i_k}}$. This results in safe upper and lower bounds $[\ellv_{k+1}]_i \le \nabla_i f(\xv_{k+1}) \le [\uv_{k+1}]_i$ for all inactive coordinates $i \neq i_k$. 
 (For the active coordinate $i_k$ itself one observes the true value without uncertainty).
 These bounds can be updated in linear time $O(n)$ in every iteration.

For general smooth $h$ (again with arbitrary separable regularizers $\psi_i$), \eqref{eq:exact_leastsq} can readily be extended to hold \cite[Lemma 4.1]{stich2017steep}, 
the inner product change term %
becoming $\dotprod{  \av_i, \nabla^2 f(\mathbf{A} \tilde{\xv} ) \av_{i_k}}$ instead, when assuming $h$ is twice-differentiable. Here $\tilde{\xv}$ will be an element of the line segment $[\xv_k, \xv_{k+1}] $.

\paragraph{Stochastic Gradient Descent - GLMs.}
We now present a similar result for finite sum problems~\eqref{eq:optfsum} for the use in SGD based optimization, that is $f(\xv) := \frac1n\sum_{i=1}^n f_i(\xv) = \frac1n\sum_{i=1}^n h_i(\av_i^\top \xv)$.
\begin{lemma}  \label{lem:res_grad_update_sgd}
Consider $f \colon \R^d \to \R$ as above,
 with twice differentiable $h_i \colon \R \to \R $.  Let $\xv_k, \xv_{k+1} \in \R^d$ denote two successive iterates of SGD, i.e.  $\xv_{k+1} := \xv_k - \eta_k ~\av_{i_k} \nabla h_{i_k}(\av_{i_k}^\top \xv_k) =   \xv_k  + \gamma_k ~\av_{i_k}$. Then there exists $\xvt \in \R^d$ on the line segment between $\xv_k$ and $\xv_{k+1}$, $\xvt \in [\xv_k, \xv_{k+1}] $ with 
\begin{align}
\nabla f_i(\xv_{k+1}) - \nabla f_i(\xv_{k}) ~= ~ \gamma_k ~\nabla^2 h_i(\av_{i}^\top \xvt)~ \dotprod{\av_i, \av_{i_k}}~\av_i\,, \quad \forall ~ i \neq i_k \,.\label{eq:approx_general_sgd}
\end{align}
\end{lemma}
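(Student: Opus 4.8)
The plan is to reduce the claimed vector identity to the one–dimensional mean value theorem applied along the update direction. First I would use the chain rule: since $f_i(\xv) = h_i(\av_i^\top \xv)$, the gradient is $\nabla f_i(\xv) = h_i'(\av_i^\top \xv)\,\av_i$, and therefore
\[
 \nabla f_i(\xv_{k+1}) - \nabla f_i(\xv_k) \;=\; \bigl( h_i'(\av_i^\top \xv_{k+1}) - h_i'(\av_i^\top \xv_k)\bigr)\,\av_i \,,
\]
so the whole statement concerns the scalar increment of $h_i'$ between the two inner products, multiplied by the fixed vector $\av_i$.

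Next, using the SGD update $\xv_{k+1} - \xv_k = \gamma_k\,\av_{i_k}$, I would introduce, for each fixed $i$, the scalar function $\phi_i(t) := h_i'\!\bigl(\av_i^\top \xv_k + t\,\gamma_k \dotprod{\av_i,\av_{i_k}}\bigr)$ on $t \in [0,1]$, so that $\phi_i(0) = h_i'(\av_i^\top \xv_k)$ and $\phi_i(1) = h_i'(\av_i^\top \xv_{k+1})$. Since $h_i$ is twice differentiable, $\phi_i$ is continuously differentiable with $\phi_i'(t) = \gamma_k \dotprod{\av_i,\av_{i_k}}\, h_i''\!\bigl(\av_i^\top \xv_k + t\,\gamma_k\dotprod{\av_i,\av_{i_k}}\bigr)$. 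The mean value theorem then supplies a $\theta_i \in [0,1]$ with $\phi_i(1) - \phi_i(0) = \phi_i'(\theta_i)$. Setting $\xvt := \xv_k + \theta_i\,\gamma_k\,\av_{i_k} \in [\xv_k,\xv_{k+1}]$, so that $\av_i^\top \xv_k + \theta_i\gamma_k\dotprod{\av_i,\av_{i_k}} = \av_i^\top \xvt$ and $h_i'' = \nabla^2 h_i$ for the scalar map $h_i$, and multiplying the resulting scalar identity through by $\av_i$ reproduces exactly~\eqref{eq:approx_general_sgd}.

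There is no substantial obstacle here; the only point worth flagging is that the mean value theorem yields a point $\xvt$ that in general depends on the coordinate $i$, so the statement is to be read as ``for every $i \neq i_k$ there exists such a $\xvt \in [\xv_k,\xv_{k+1}]$'', which is still precisely what is needed to maintain coordinatewise safe bounds $\ellv_{k+1},\uv_{k+1}$. The restriction $i \neq i_k$ plays no role in the argument and could be dropped; it is stated because for $i = i_k$ the new value $\nabla f_{i_k}(\xv_{k+1})$ is observed exactly when the step is taken. Finally, the degenerate case $\dotprod{\av_i,\av_{i_k}} = 0$ requires no special treatment: then the interval is a single point, $\phi_i$ is constant, and both sides of~\eqref{eq:approx_general_sgd} vanish.
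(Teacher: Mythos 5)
Your proof is correct and follows essentially the same route as the paper: reduce via the chain rule to the scalar increment $h_i'(\av_i^\top \xv_{k+1}) - h_i'(\av_i^\top \xv_k)$ and apply the mean value theorem along the update direction. Your explicit parametrization $\phi_i(t)$ is just a slightly more careful way of producing the point $\xvt$ on the segment $[\xv_k,\xv_{k+1}]$, and your remarks on the $i$-dependence of $\xvt$ and the degenerate case $\dotprod{\av_i,\av_{i_k}}=0$ are accurate observations that the paper leaves implicit.
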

This leads to safe upper and lower bounds for the norms of the partial gradient, 
$[\ellv_k]_i \leq \norm{\nabla f_i(\xv_k) }_2 \leq [\uv_k]_i$,
that can be updated in linear time $O(n)$, analogous to the coordinate case discussed above.\footnote{Here we use the efficient representation $\nabla f_i(\xv) = \theta(\xv)\cdot \av_i$ for $\theta(\xv) \in \R$.}

We note that there are many other ways to track safe gradient bounds for relevant machine learning problems, including possibly more tight ones. We here only illustrate the simplest variants, highlighting the fact that our new sampling procedure works for any safe bounds $\ellv,\uv$.

\paragraph{Computational Complexity.}
In this section, we have demonstrated how safe upper and lower bounds $\ellv,\uv$ on the gradient information can be obtained for GLMs, and argued that these bounds can be updated in time $O(n)$ per iteration of CD and SGD.  The computation of the proposed sampling takes $O(n\log n)$ time (Theorem~\ref{thm:solution}). Hence, the introduced overhead in Algorithm~\ref{alg-2} compared to fixed sampling (Algorithm~\ref{alg-3}) is of the order $O(n \log n)$ in every iteration. The computation of one coordinate of the gradient, $\nabla_{i_k} f(\xv_k)$, takes time $\Theta(d)$ for general data matrices. Hence, when $d = \Omega(n)$, the introduced overhead reduces to $O(\log n)$ per iteration.

\section{Empirical Evaluation}
\label{sec:experiments}
In this section we evaluate the empirical performance of our proposed adaptive sampling scheme on relevant machine learning tasks. In particular, we illustrate performance on generalized linear models with $L1$ and $L2$ regularization, as of the form \eqref{eq:optfsum}, \vspace{-1mm}
\begin{align}
\min_{\xv \in \R^d} ~ %
\frac1n\sum_{i=1}^n h_i(\av_i^\top \xv) + \lambda \cdot r(\xv)
\end{align}
We use square loss, squared hinge loss as well as logistic loss for the data fitting terms $h_i$, and $\| \xv\|_1$ and $\|\xv\|_2^2$ for the regularizer~$r(\xv)$. 
The datasets used in the evaluation are \textit{rcv1}, \textit{real-sim} and \textit{news20}.%
\footnote{All data are available at \href{https://www.csie.ntu.edu.tw/~cjlin/libsvmtools/datasets/}{www.csie.ntu.edu.tw/\~{}cjlin/libsvmtools/datasets/}}
The \textit{rcv1} dataset consists of $20,\!242$ samples with $47,  \!236$ features, \textit{real-sim} contains $72,\!309$ datapoints and $20,\!958$ features and
\textit{news20} contains $19,\!996$ datapoints and $1,\!355,\!191$ features. For all datasets we set unnormalized features with all the non-zero entries set to $1$ (bag-of-words features). 
By \textit{real-sim'} and \textit{rcv1'} we denote a subset of the data chosen by randomly selecting $10,\!000$ features and $10,\!000$ datapoints.
By \textit{news20'} we denote a subset of the data chose by randomly selecting $15\%$ of the features and $15\%$  of the datapoints.
A regularization parameter $\lambda=0.1$ is used for all experiments. %

Our results show the evolution of the optimization objective over time or number of epochs (an epoch corresponding to $n$ individual updates).
To compute safe lower and upper bounds we use the methods presented in Section~\ref{sec:gradientupdate} with no special initialization, i.e. $\ellv_0 = \0_n$, $\uv_0 = \inftyv_n$.

\paragraph{Coordinate Descent.}
In Figure~\ref{fig:fixed_vs_adaptive_main} we compare the effect of the fixed stepsize $\alpha_k = \frac{1}{Ln}$ (denoted as ``small'') vs. the time varying optimal stepsize (denoted as ``big'') as discussed in Section~\ref{sec:efficiency}. %
Results are shown for optimal sampling $\pv_k^\star$ (with optimal stepsize $\alpha_k(\pv^\star_k)$, cf. Example~\ref{ex:Opt}), our proposed sampling $\pvh_k$ (with optimal stepsize $\alpha_k(\pvh_k) = v_k^{-1}$, cf.~\eqref{eq:opt}) and uniform sampling (with optimal stepsize $\alpha_k(\pv_{\bf L}) = \frac{1}{Ln}$, as here ${\bf L}=L \mathbf{I}_n$, cf. Example~\ref{ex:L}). As the experiment aligns with theory---confirming the advantage of the varying ``big'' stepsizes---we only show the results for Algorithms~\ref{alg-1}--\ref{alg-3} in the remaining plots.

Performance for squared hinge loss, as well as logistic regression with $L1$ and $L2$ regularization is presented in Figure~\ref{fig:square_hinge_loss_image} and Figure~\ref{fig:logistic_main_l1_l2}  respectively.
In Figures~\ref{fig:iter_full_data} and \ref{fig:time_full_data} we report the iteration complexity vs. accuracy as well as
timing vs. accuracy results on the full dataset for coordinate descent with square loss and $L1$ (Lasso) and $L2$ regularization (Ridge).

\paragraph{Theoretical Sampling Quality.} As part of the CD performance results in 
Figures~\ref{fig:fixed_vs_adaptive_main}--\ref{fig:time_full_data} we include an additional evolution plot on the bottom of each figure to illustrate the values $v_k$ which determine the stepsize ($\hat{\alpha}_k = v_k^{-1})$ for the proposed Algorithm~\ref{alg-2} (blue)  and the optimal stepsizes of Algorithm~\ref{alg-1} (black)  which rely on the full gradient information. The plots show the normalized values $\frac{v_k}{\Tr{\bf L}}$, i.e. the relative improvement over $L_i$-based importance sampling.
The results show that despite only relying on very loose safe gradient bounds, the proposed adaptive sampling is able to strongly benefit from the additional information. 

\begin{figure}[ht]
  \begin{minipage}[b]{0.48\linewidth}
    \begin{subfigure}{0.5\textwidth}
  \centering
  \includegraphics[trim=25 0 0 41, clip,width=1.1\textwidth]{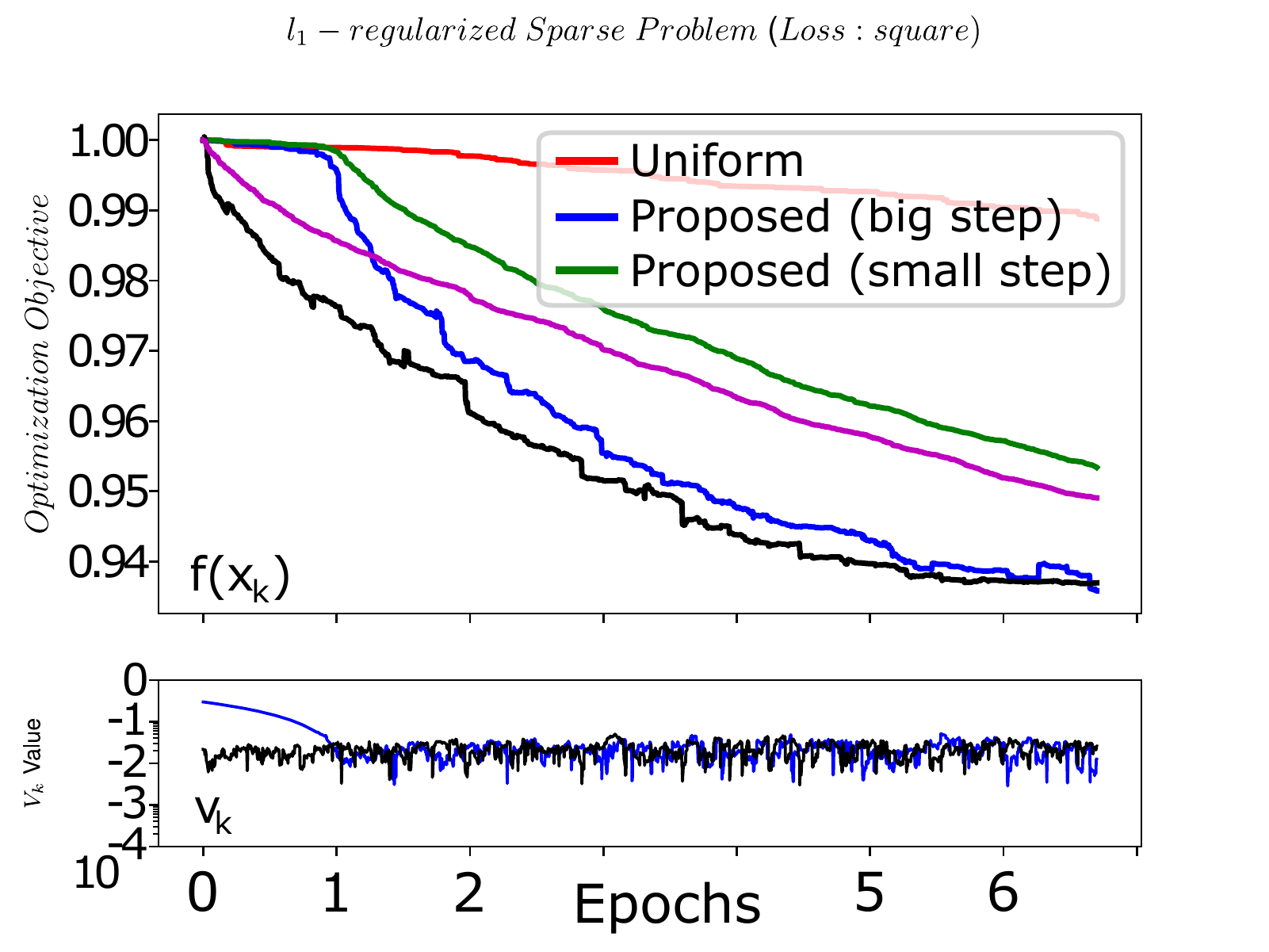}
   \vspace{-4mm}
  \caption{\textit{rcv1'}, $L1$ reg. }
  \label{fig:fixed_vs_adaptive1}
\end{subfigure}%
\begin{subfigure}{0.5\textwidth}
  \centering
  \includegraphics[trim=25 0 0 41, clip,width=1.1\textwidth]{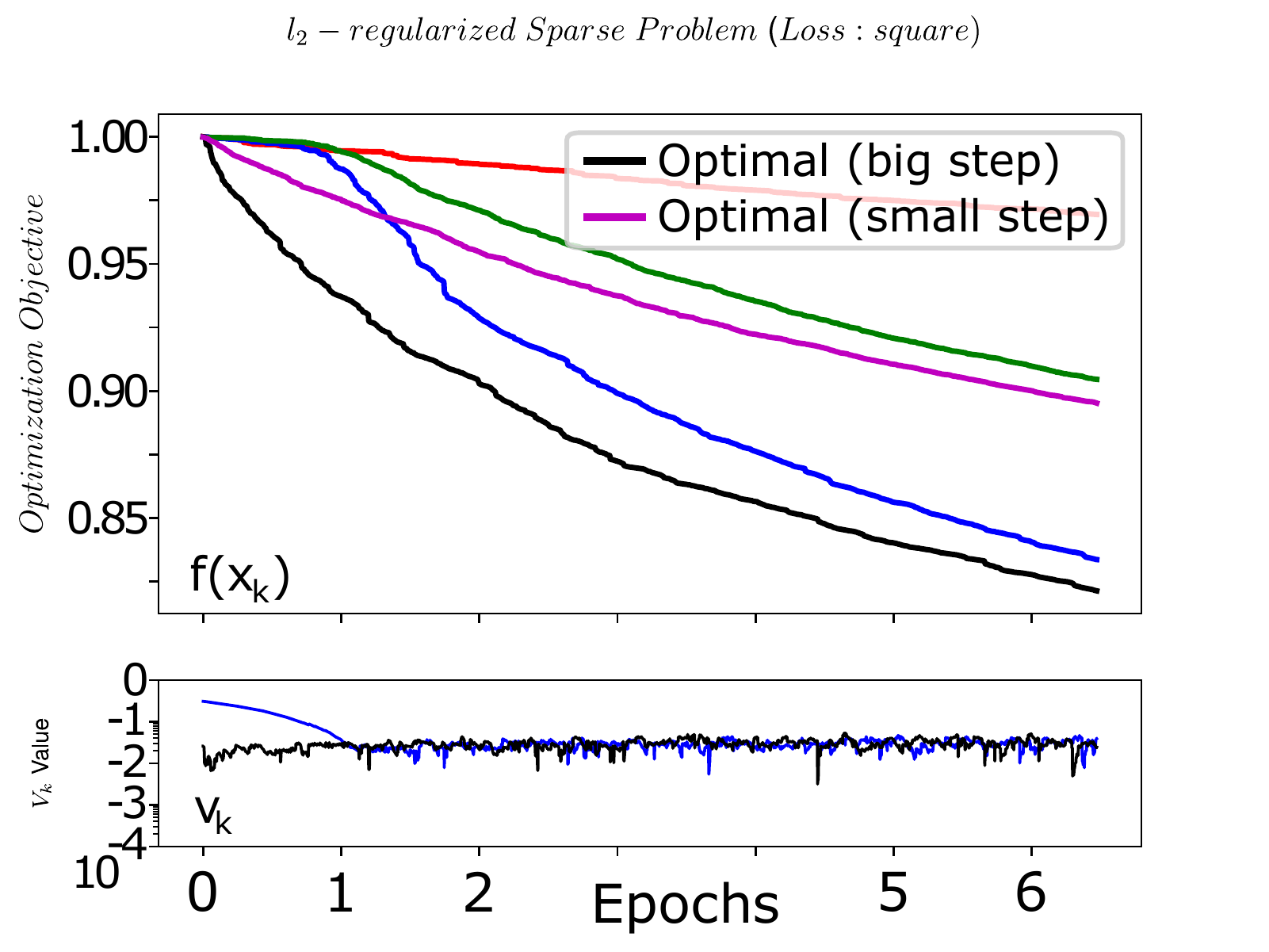}
  \vspace{-4mm}
  \caption{\textit{rcv1'}, $L2$ reg. }
  \label{fig:fixed_vs_adaptive2}
\end{subfigure}%
\vspace{-1mm}
\caption{(CD, square loss) Fixed vs. adaptive sampling strategies, and dependence on stepsizes. With ``big'' $\alpha_k = v_k^{-1}$ and ``small'' $\alpha_k = \frac{1}{\Tr{\bf L}}$.}
\label{fig:fixed_vs_adaptive_main}
  \end{minipage}
\hfill
  \begin{minipage}[b]{0.48\linewidth}
    \begin{subfigure}{0.5\textwidth}
  \centering
  \includegraphics[trim=24 0 0 41, clip,width=1.1\textwidth]{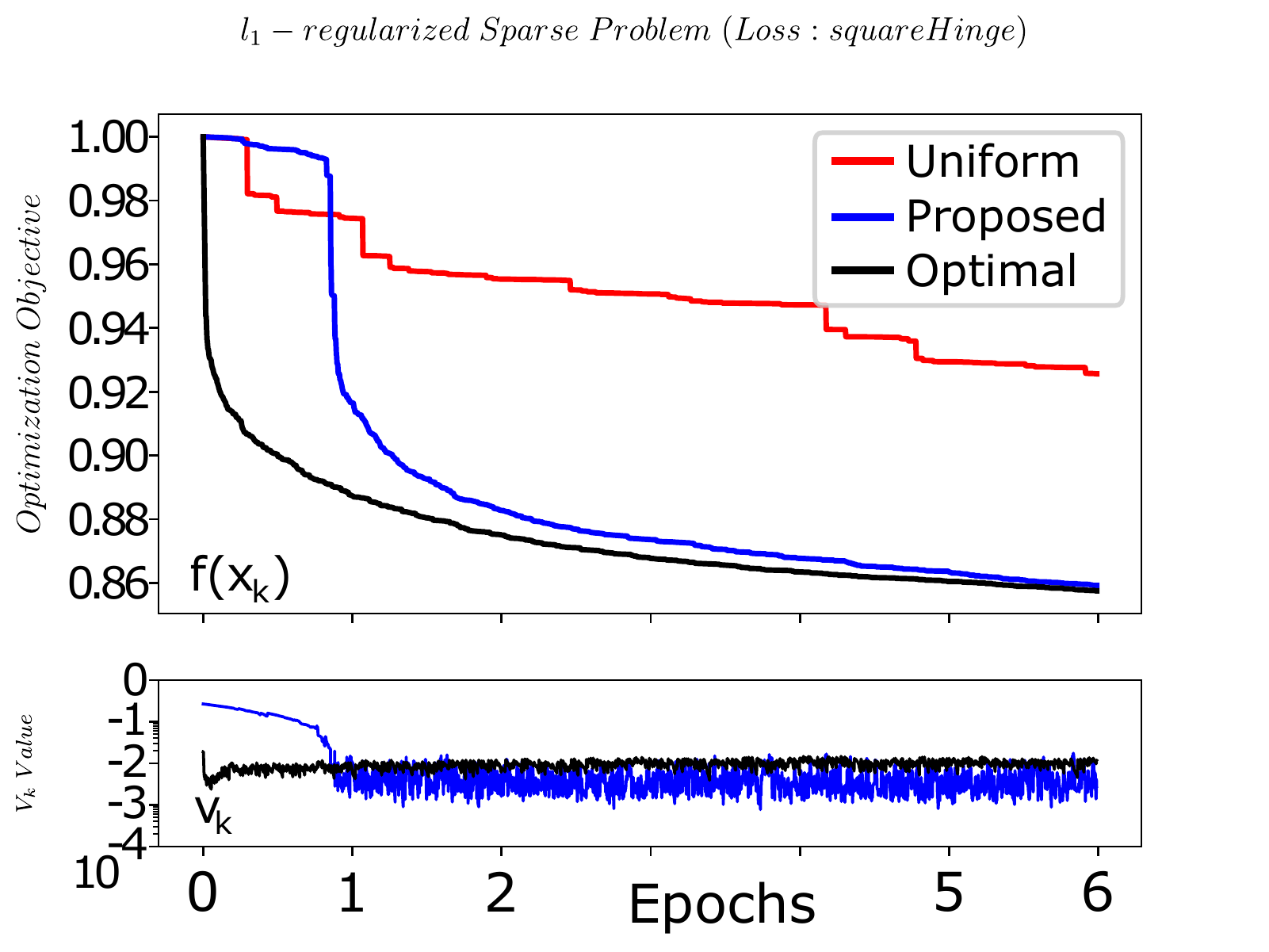}
   \vspace{-4mm}
  \caption{\textit{rcv1'}, $L1$ reg.}
  \label{fig:square_hinge1}
\end{subfigure}%
\begin{subfigure}{0.5\textwidth}
  \centering
  \includegraphics[trim=25 0 0 41, clip,width=1.1\textwidth]{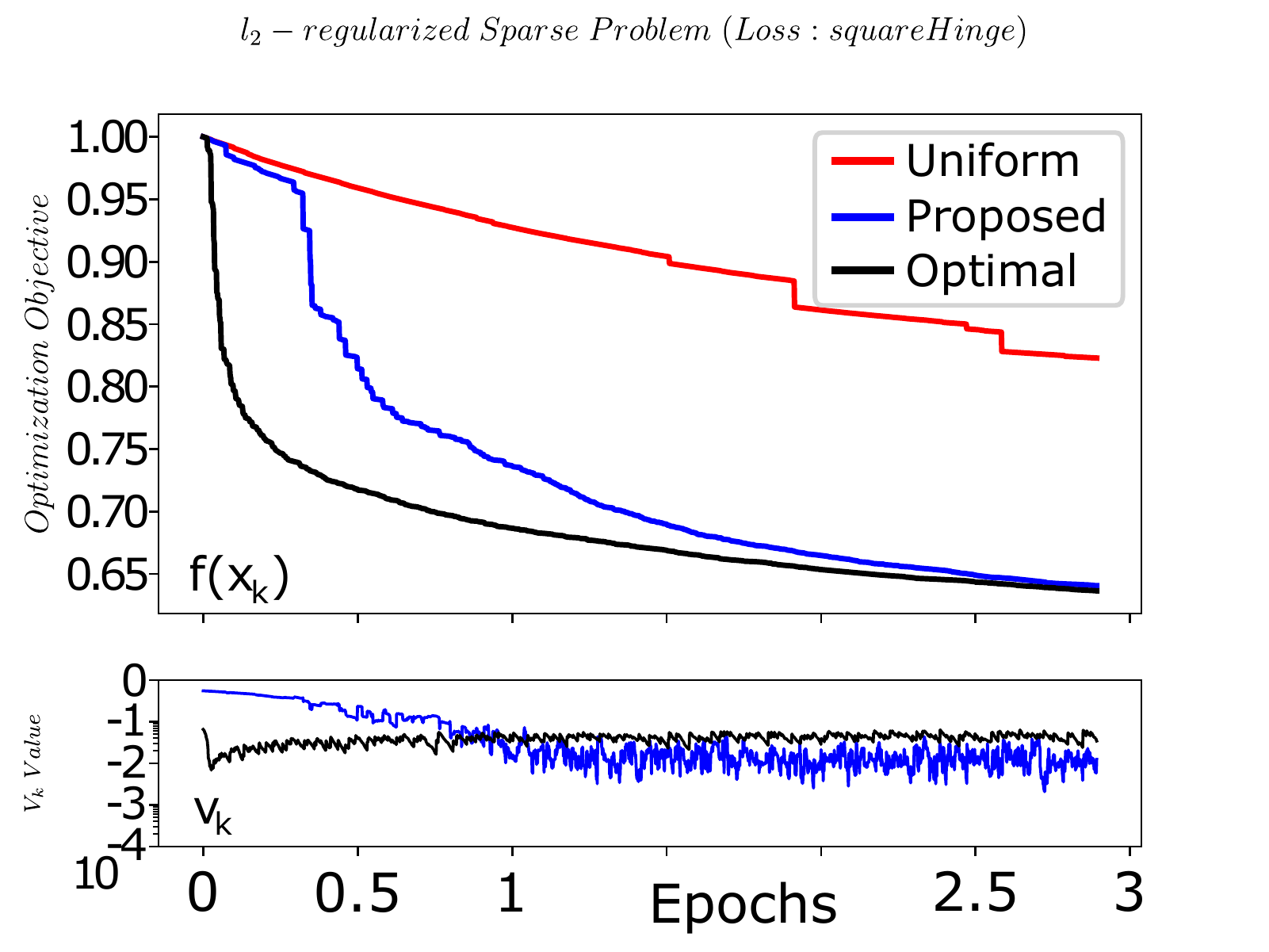}
  \vspace{-4mm}
  \caption{ \textit{real-sim'}, $L2$ reg. }
  \label{fig:square_hinge2}
\end{subfigure}%
\vspace{-1mm}
\caption{(CD, squared hinge loss) Function value vs. number of iterations for optimal stepsize $\alpha_k=v_k^{-1}$. \protect\phantom{$\frac{1}{\Tr{\bf L}}$}}
\label{fig:square_hinge_loss_image}
  \end{minipage}
\end{figure}

\begin{figure*}[ht] 
\centering
\begin{subfigure}{.24\textwidth}
  \centering
  \includegraphics[trim=25 0 0 41, clip,width=1.1\textwidth]{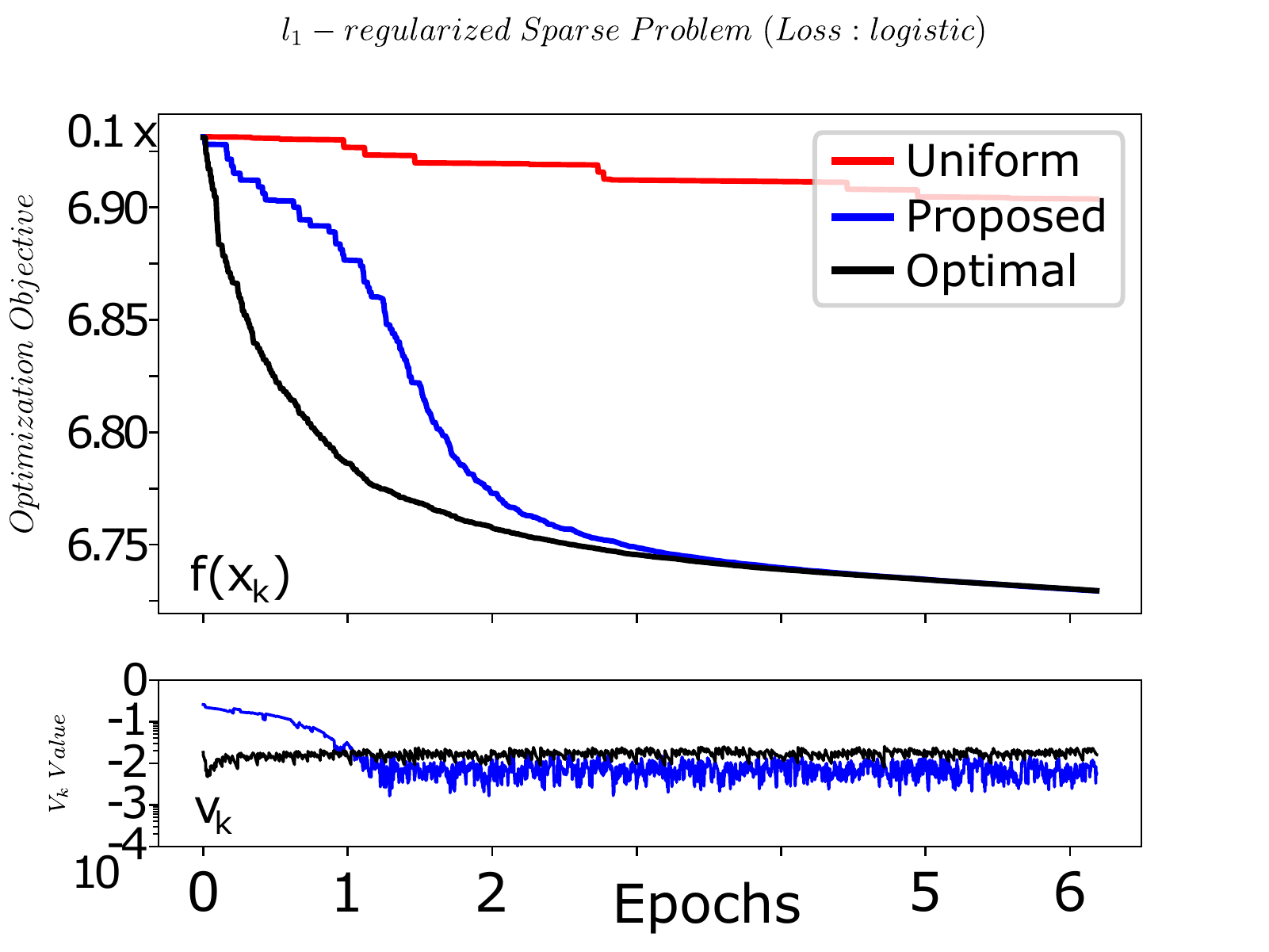}
  \vspace{-4mm}
  \caption{\textit{rcv1'}, $L1$ reg.}
  \label{fig:logistic_rcv_l1}
\end{subfigure}%
\hfill
\begin{subfigure}{.24\textwidth}
  \centering
  \includegraphics[trim=25 0 0 41, clip,width=1.1\textwidth]{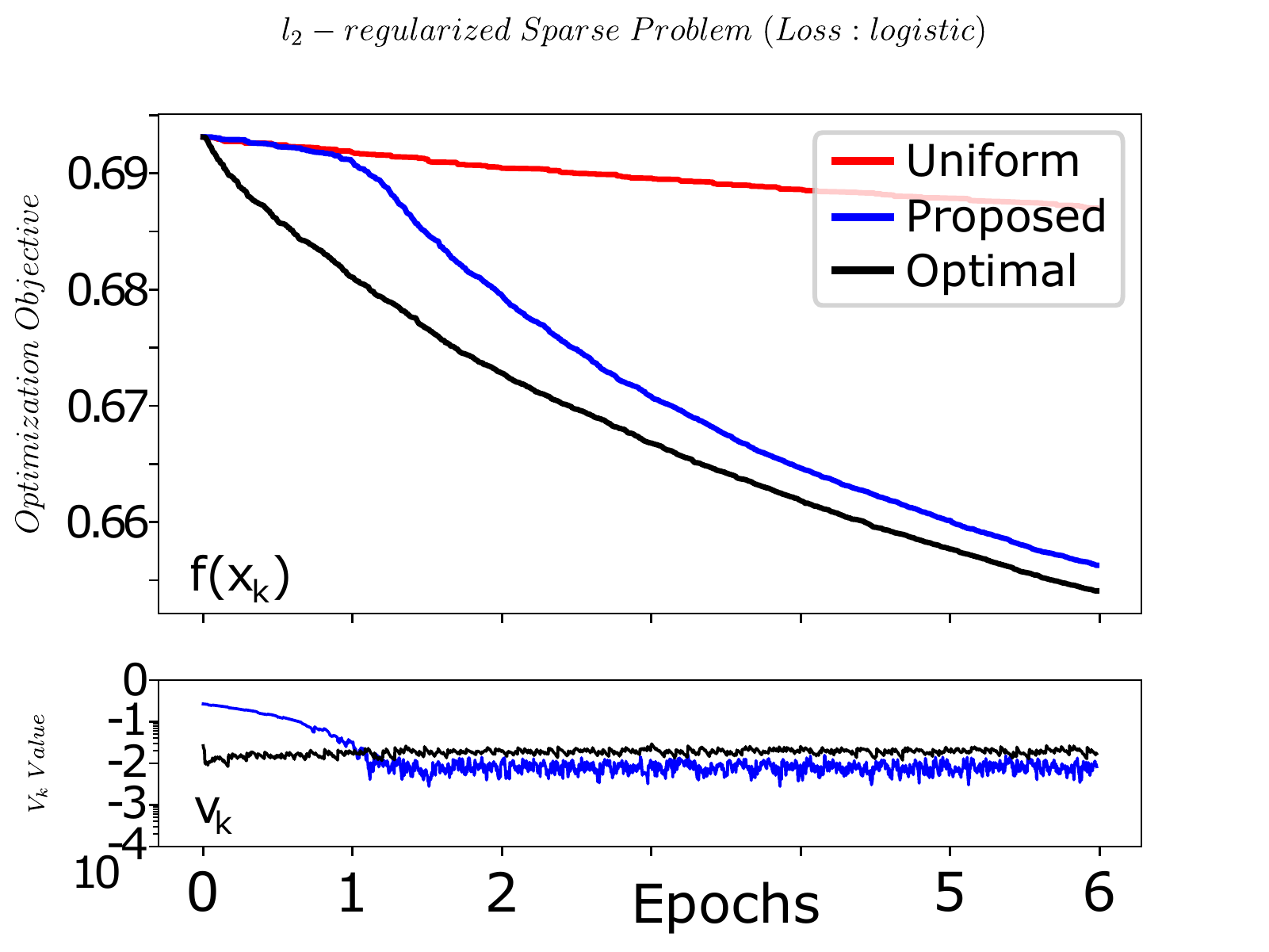}
  \vspace{-4mm}
  \caption{\textit{rcv1'}, $L2$ reg.}
   \label{fig:logistic_rcv_l2}
\end{subfigure}%
\hfill
\begin{subfigure}{.24\textwidth}
  \centering
  \includegraphics[trim=25 0 0 41, clip,width=1.1\textwidth]{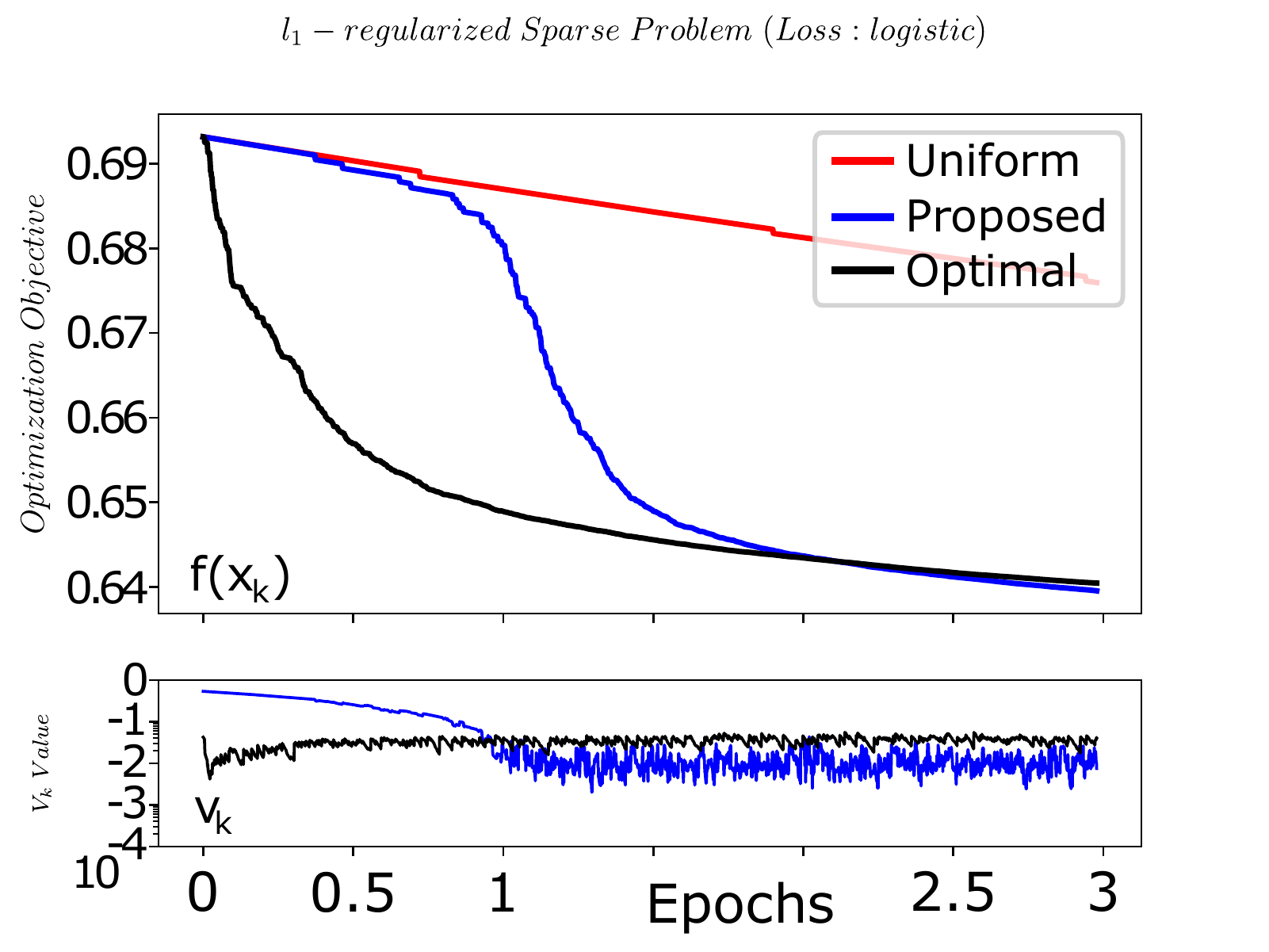}
  \vspace{-4mm}
  \caption{\textit{real-sim'}, $L1$ reg.}
  \label{fig:logistic_rs_l1}
\end{subfigure}%
\hfill
\begin{subfigure}{.24\textwidth}
  \centering
  \includegraphics[trim=25 0 0 41, clip,width=1.1\textwidth]{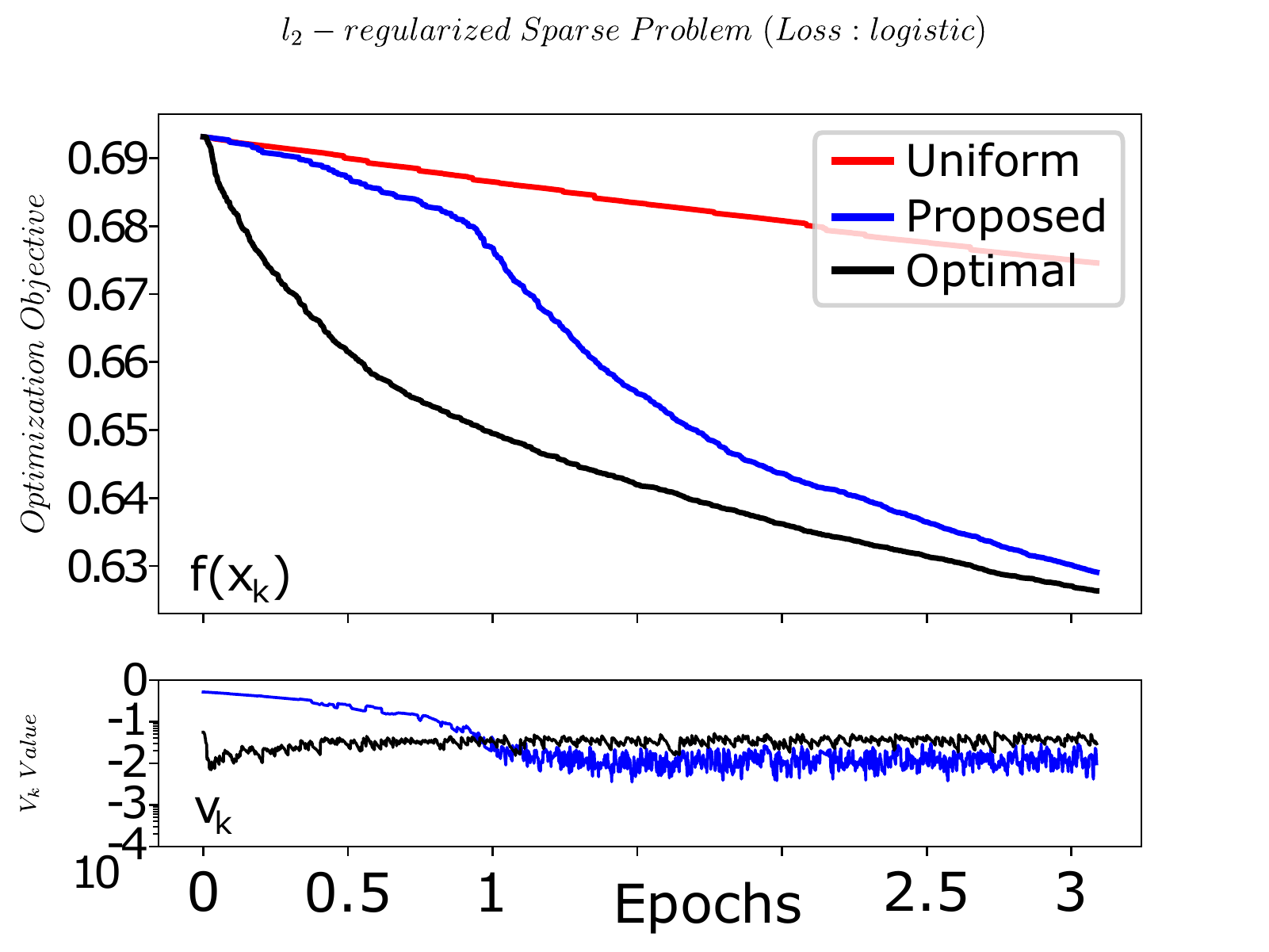}
  \vspace{-4mm}
  \caption{\textit{real-sim'}, $L2$ reg.}
   \label{fig:logistic_rs_l2}
\end{subfigure}%
\vspace{-1mm}
\caption{(CD, logistic loss) Function value vs. number of iterations for different sampling strategies. Bottom: Evolution of the value $v_k$ which determines the optimal stepsize ($\hat{\alpha}_k = v_k^{-1}$). The plots show the normalized values $\frac{v_k}{\Tr{\bf L}}$, i.e. the relative improvement over $L_i$-based importance sampling.}
\label{fig:logistic_main_l1_l2}
\end{figure*}

\begin{figure*}[t!]
  \begin{minipage}[b]{0.48\linewidth}
    \begin{subfigure}{0.48\textwidth}
  \centering
  \includegraphics[trim=25 0 0 41, clip,width=1.1\textwidth]{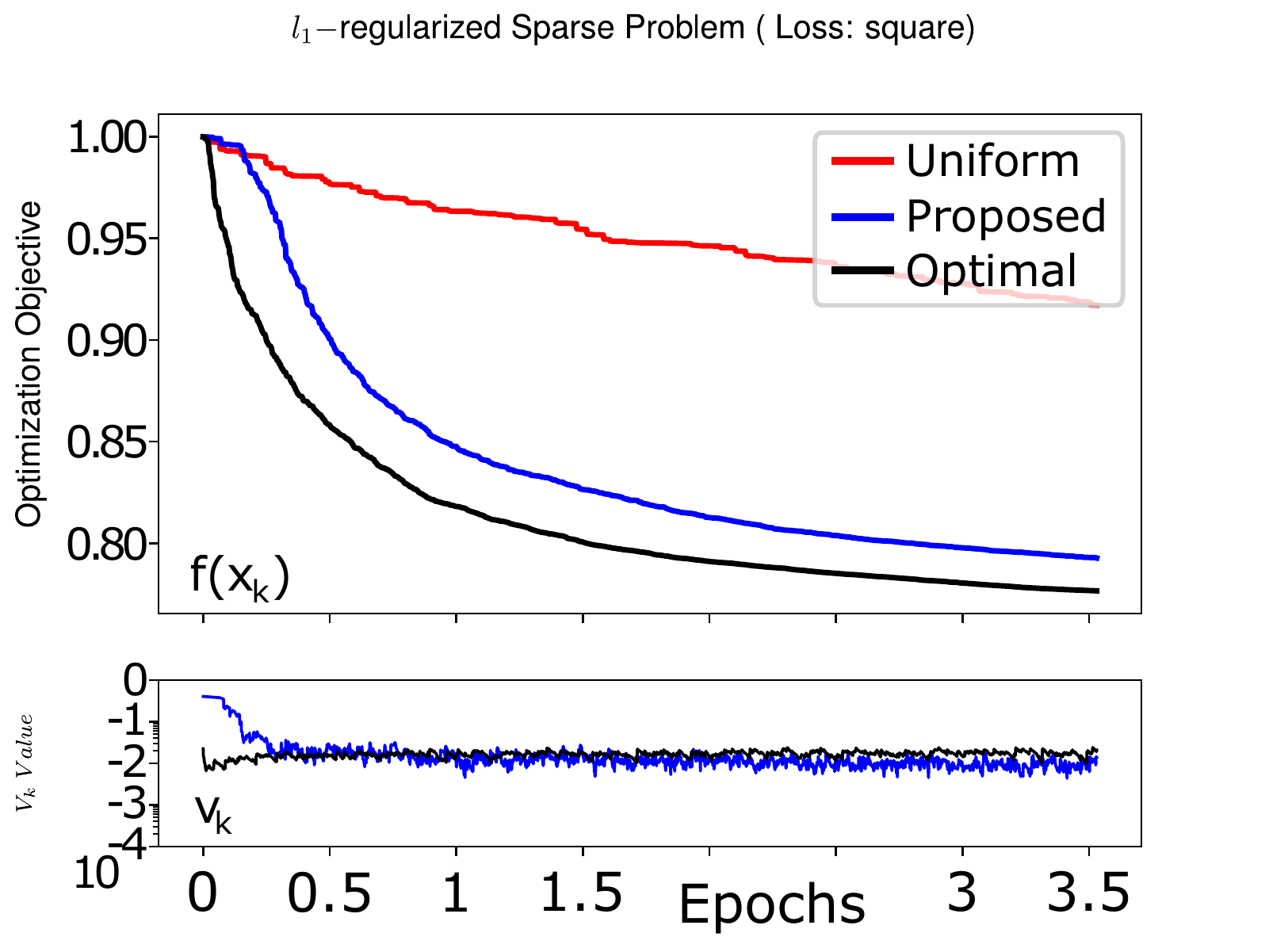}
   \vspace{-4mm}
  \caption{ \textit{rcv1}, $L1$ reg. }
  \label{fig:full_iter_l1}
\end{subfigure}%
\begin{subfigure}{0.5\textwidth}
  \centering
  \includegraphics[trim=25 0 0 41, clip,width=1.1\textwidth]{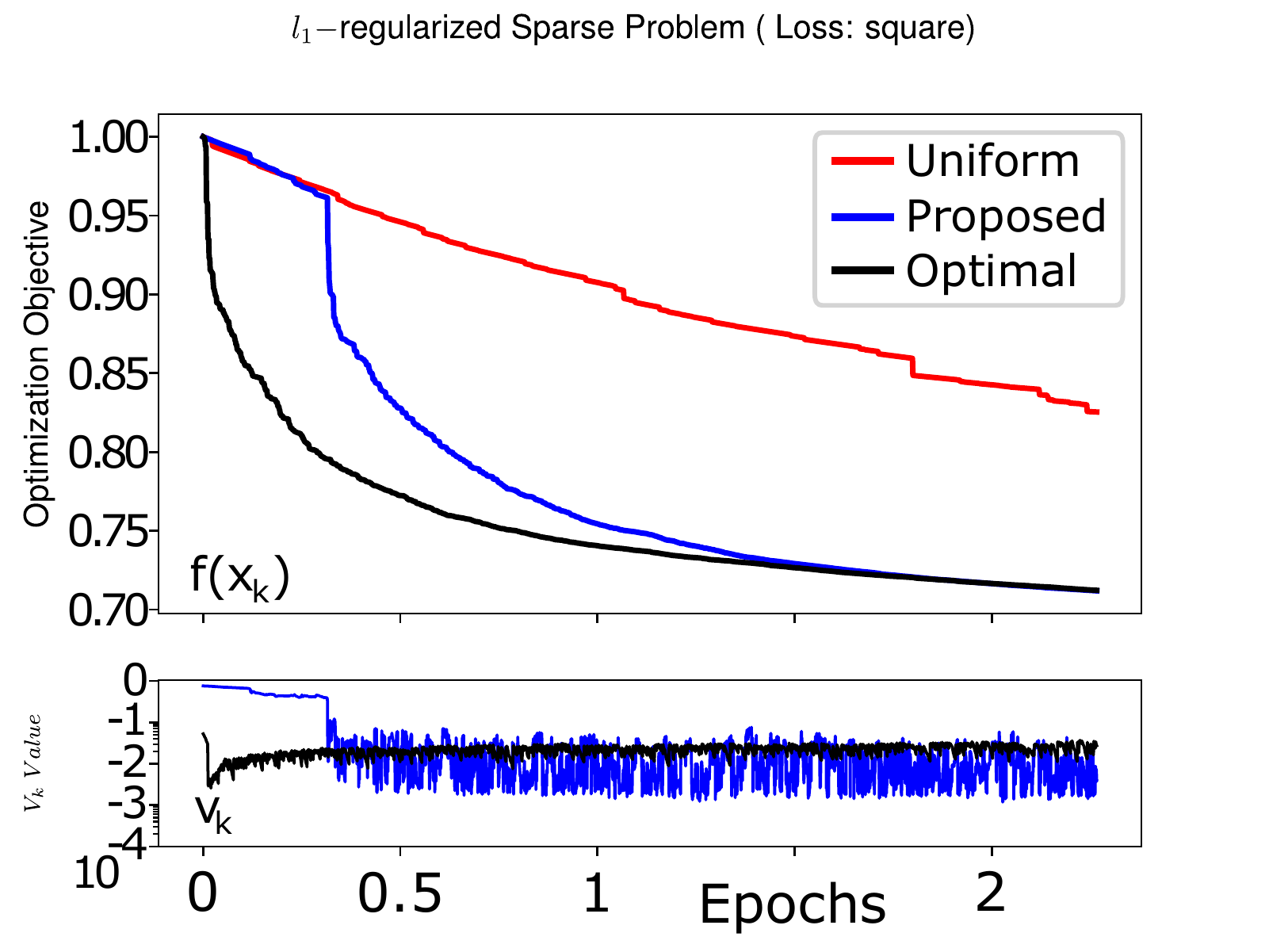}
  \vspace{-4mm}
  \caption{ \textit{real-sim}, $L1$ reg.}
  \label{fig:full_iter_l2}
\end{subfigure}%
\vspace{-1mm}
\caption{(CD, square loss) Function value vs. number of iterations on the full datasets. \\ ~\\}
\label{fig:iter_full_data}
  \end{minipage}
  \hfill
  \begin{minipage}[b]{0.48\linewidth}
    \begin{subfigure}{0.5\textwidth}
  \centering
  
  \includegraphics[trim=25 0 0 41, clip,width=1.1\textwidth]{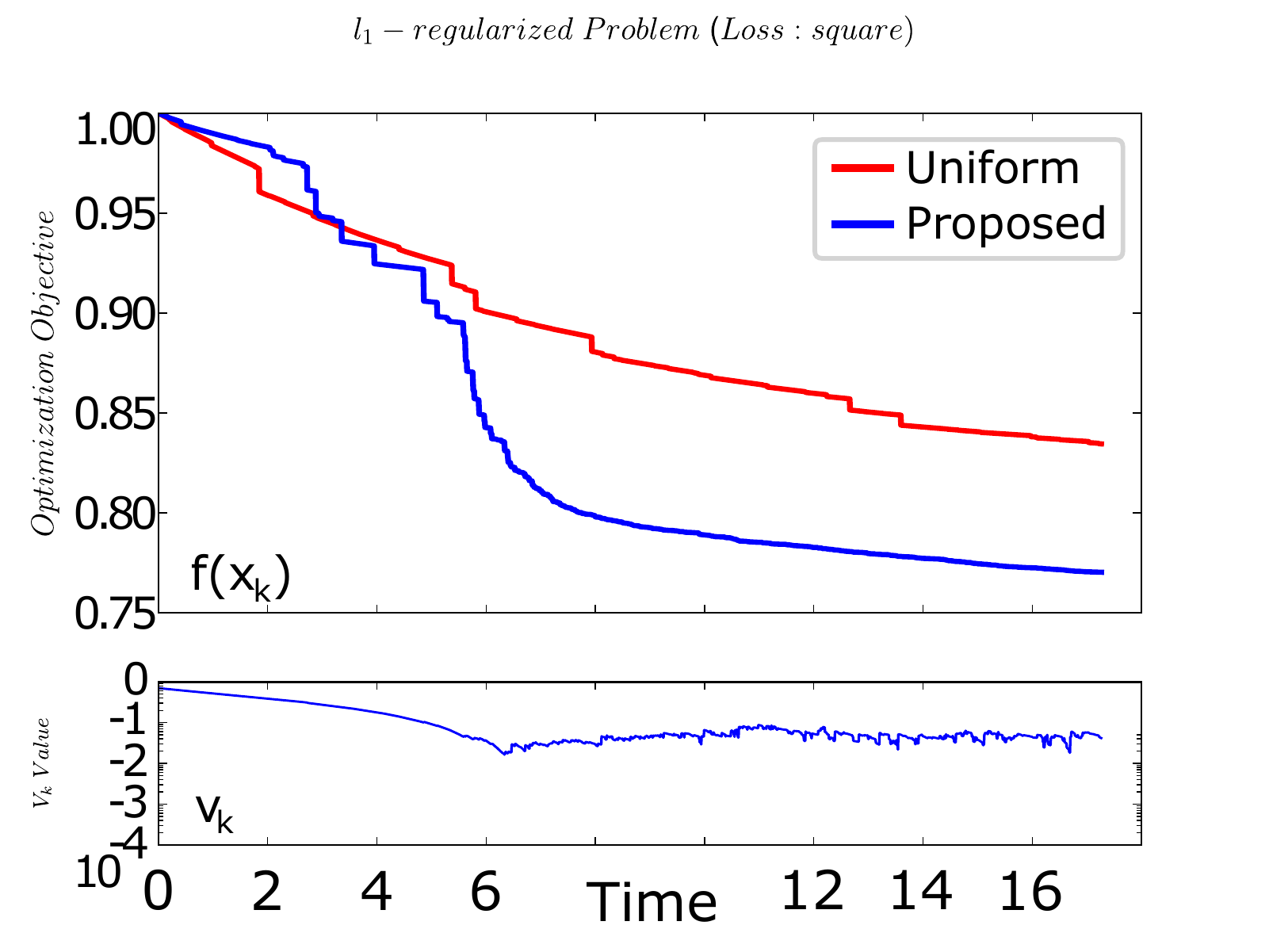}
   \vspace{-4mm}
  \caption{ \textit{real-sim}, $L1$ reg.} 
   \label{fig:full_timing_l1}
\end{subfigure}%
\begin{subfigure}{0.5\textwidth}
  \centering
  \includegraphics[trim=25 0 0 41, clip,width=1.1\textwidth]{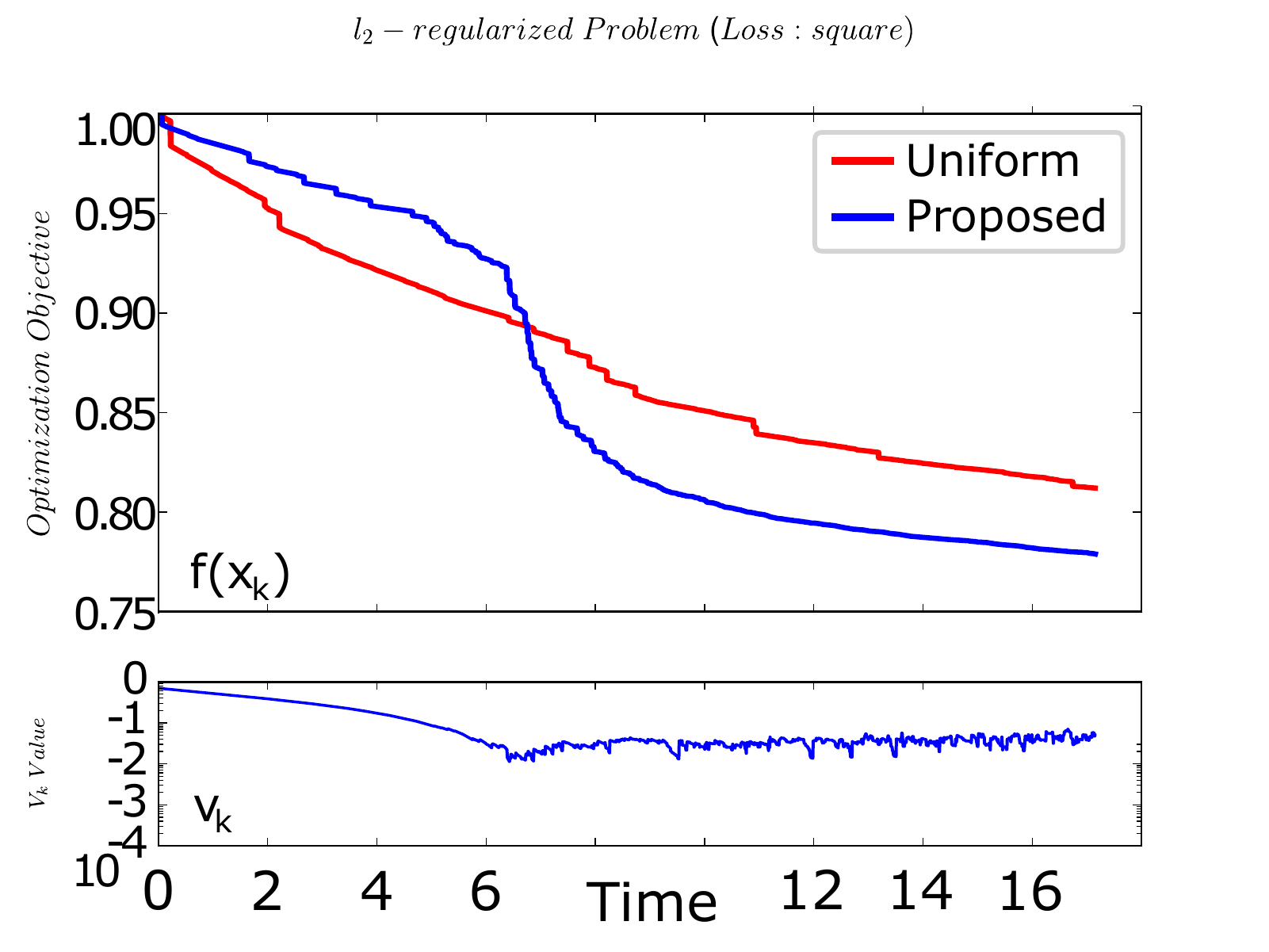}
  \vspace{-4mm}
  \caption{ \textit{real-sim}, $L2$ reg.} %
  \label{fig:full_timing_l2}
\end{subfigure}%
\vspace{-1mm}
\caption{(CD, square loss) Function value vs. clock time on the full datasets. (Data for the optimal sampling omitted, as this strategy is not competitive time-wise.)}
\label{fig:time_full_data}
  \end{minipage}
\end{figure*}

\begin{figure*}[t!] 
\centering
\begin{subfigure}{.24\textwidth}
  \centering
  \includegraphics[trim=25 0 0 30, clip,width=1.1\textwidth]{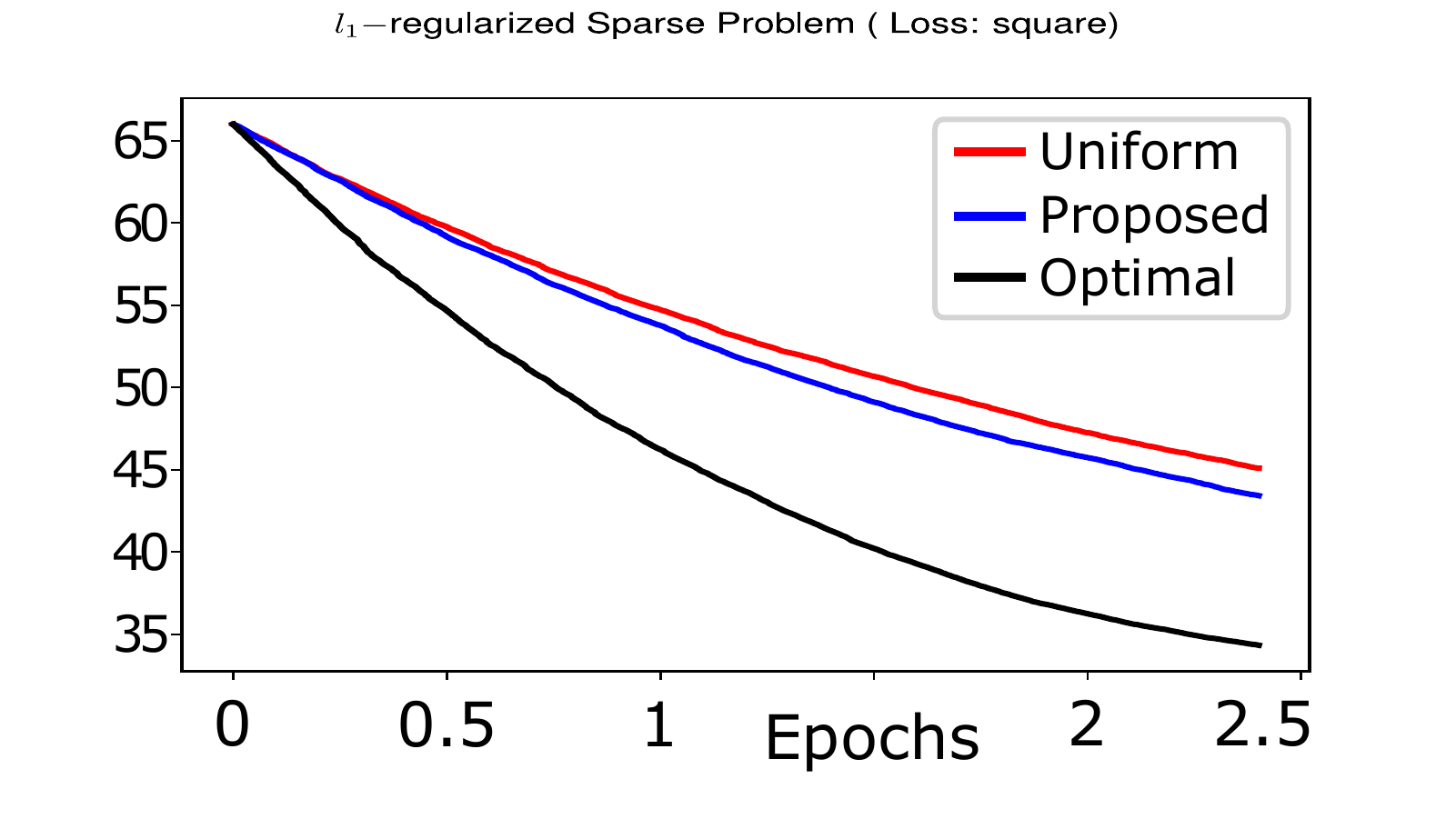}
  \vspace{-7mm}
  \caption{\textit{rcv1'}, $L1$ reg.}
  \label{fig:sgd_rcv_l2}
\end{subfigure}%
\hfill
\begin{subfigure}{.24\textwidth}
  \centering
  \includegraphics[trim=25 0 0 30, clip,width=1.1\textwidth]{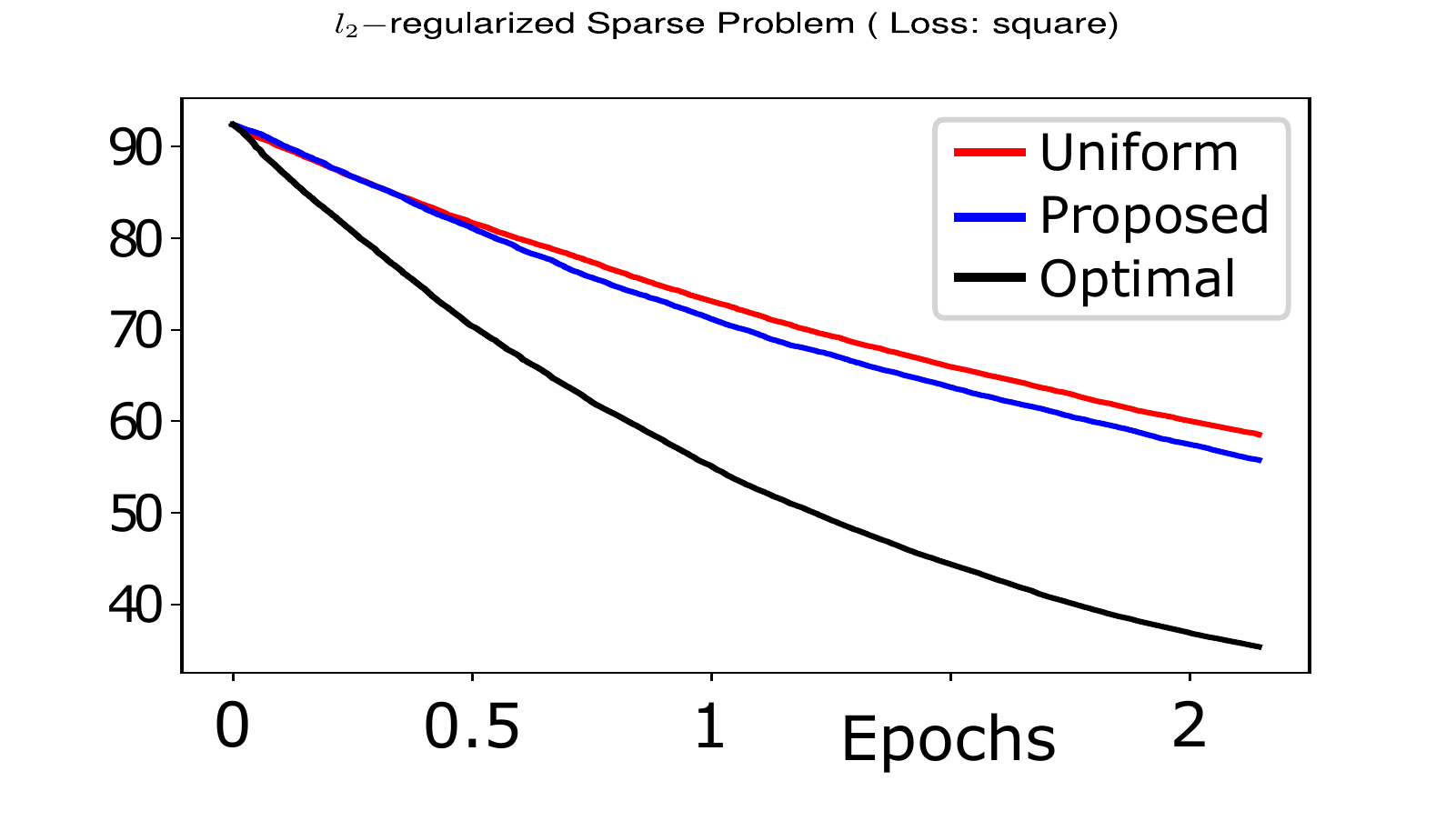}
  \vspace{-7mm}
  \caption{\textit{rcv1'}, $L2$ reg.}
  \label{fig:sgd_rcv_l1}
\end{subfigure}%
\hfill
\begin{subfigure}{.24\textwidth}
  \centering
  \includegraphics[trim=25 0 0 30, clip,width=1.1\textwidth]{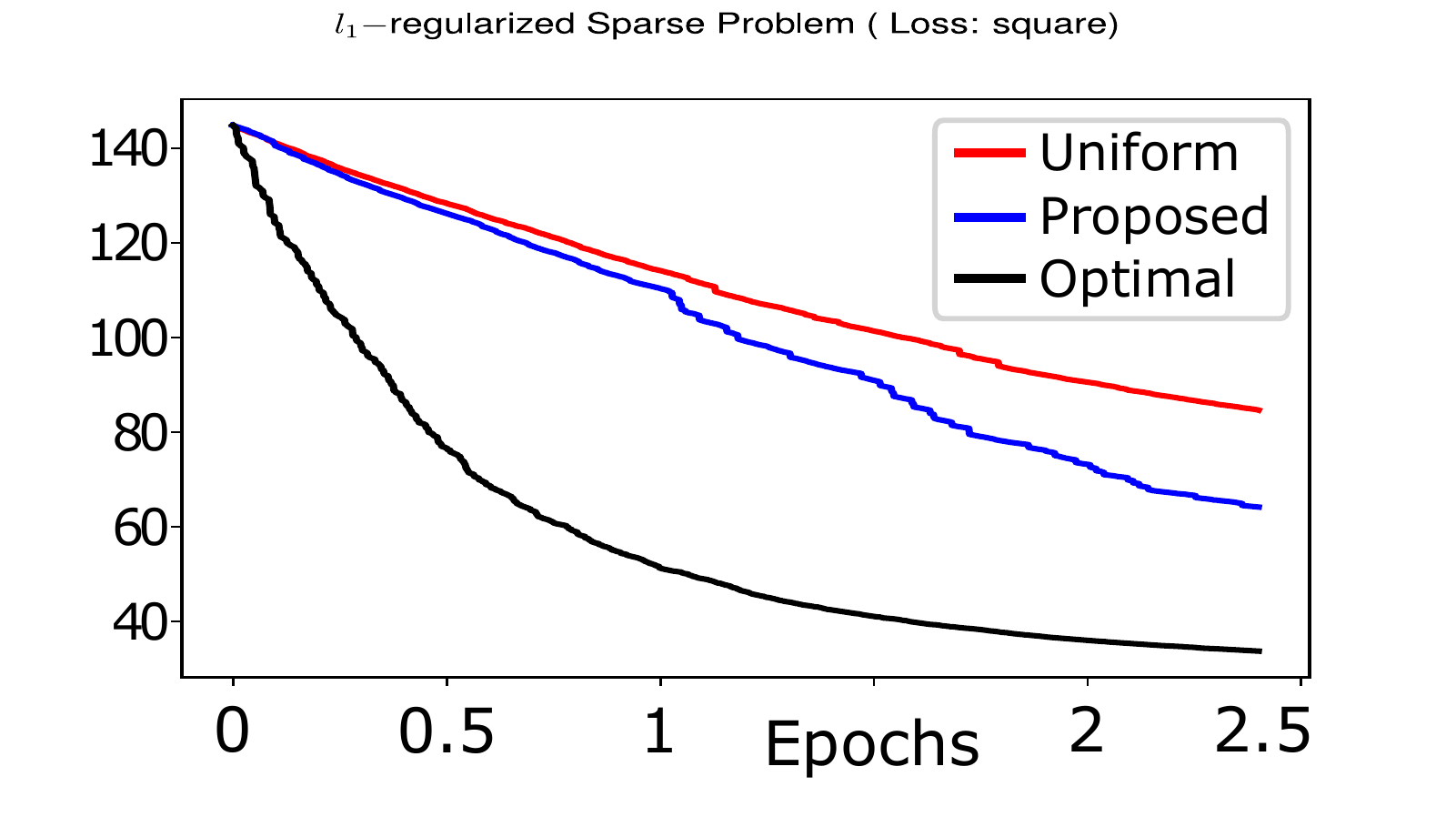}
  \vspace{-7mm}
  \caption{\textit{real-sim'}, $L1$ reg.}
  \label{fig:sgd_rs_l2}
\end{subfigure}%
\hfill
\begin{subfigure}{.24\textwidth}
  \centering
  \includegraphics[trim=25 0 0 30, clip,width=1.1\textwidth]{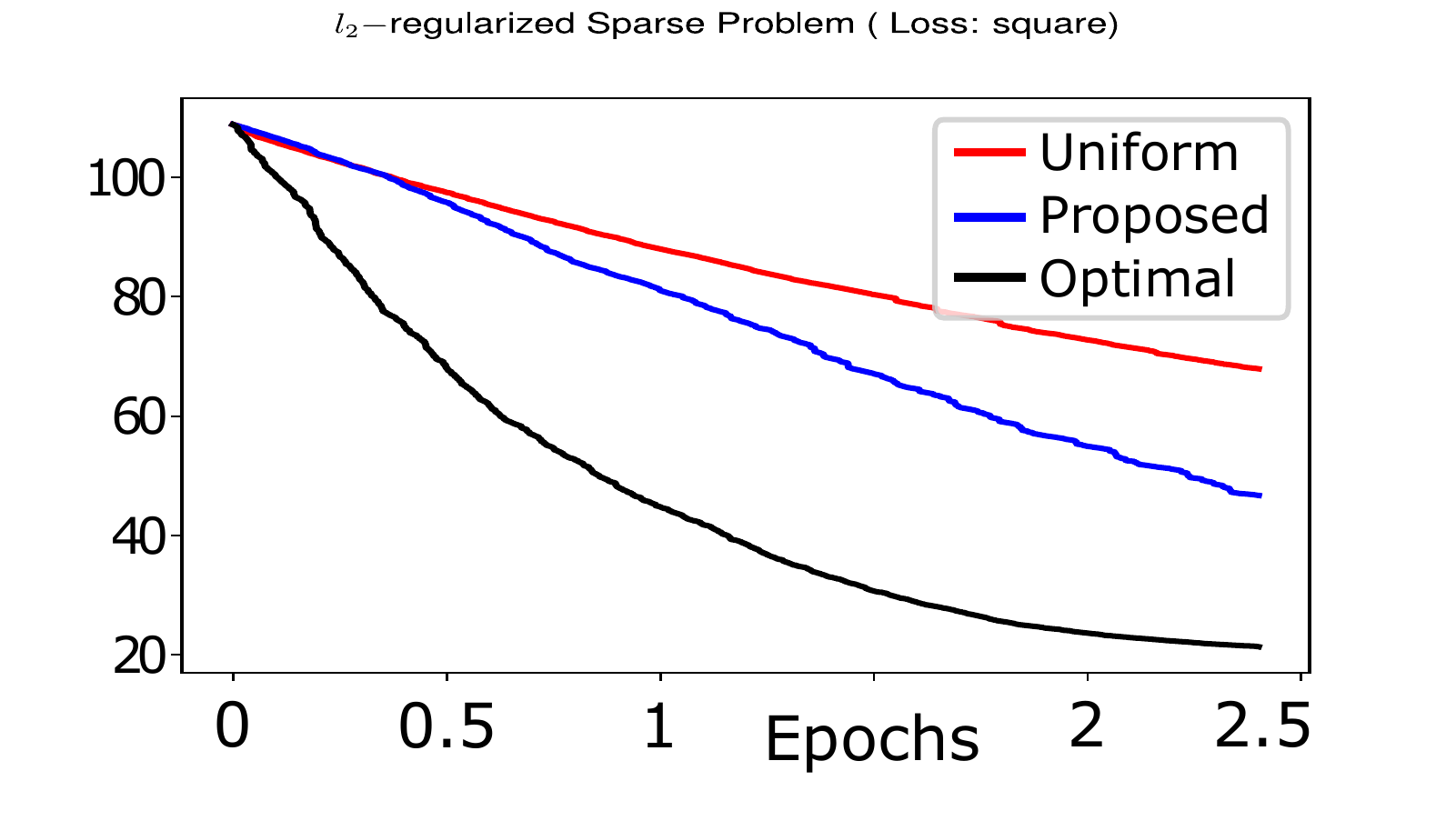}
  \vspace{-7mm}
  \caption{\textit{real-sim'}, $L2$ reg.}
   \label{fig:sgd_rcv_l1}
\end{subfigure}%
\vspace{-1mm}
\caption{(SGD, square loss) Function value vs. number of iterations.}
 \label{fig:SGD_results}
\end{figure*}

\begin{figure}[t!]
  \begin{minipage}[b]{0.48\linewidth}
  \hfill
    \begin{subfigure}{0.48\textwidth}
  \centering
  \includegraphics[trim=25 0 0 20, clip,width=1.1\textwidth]{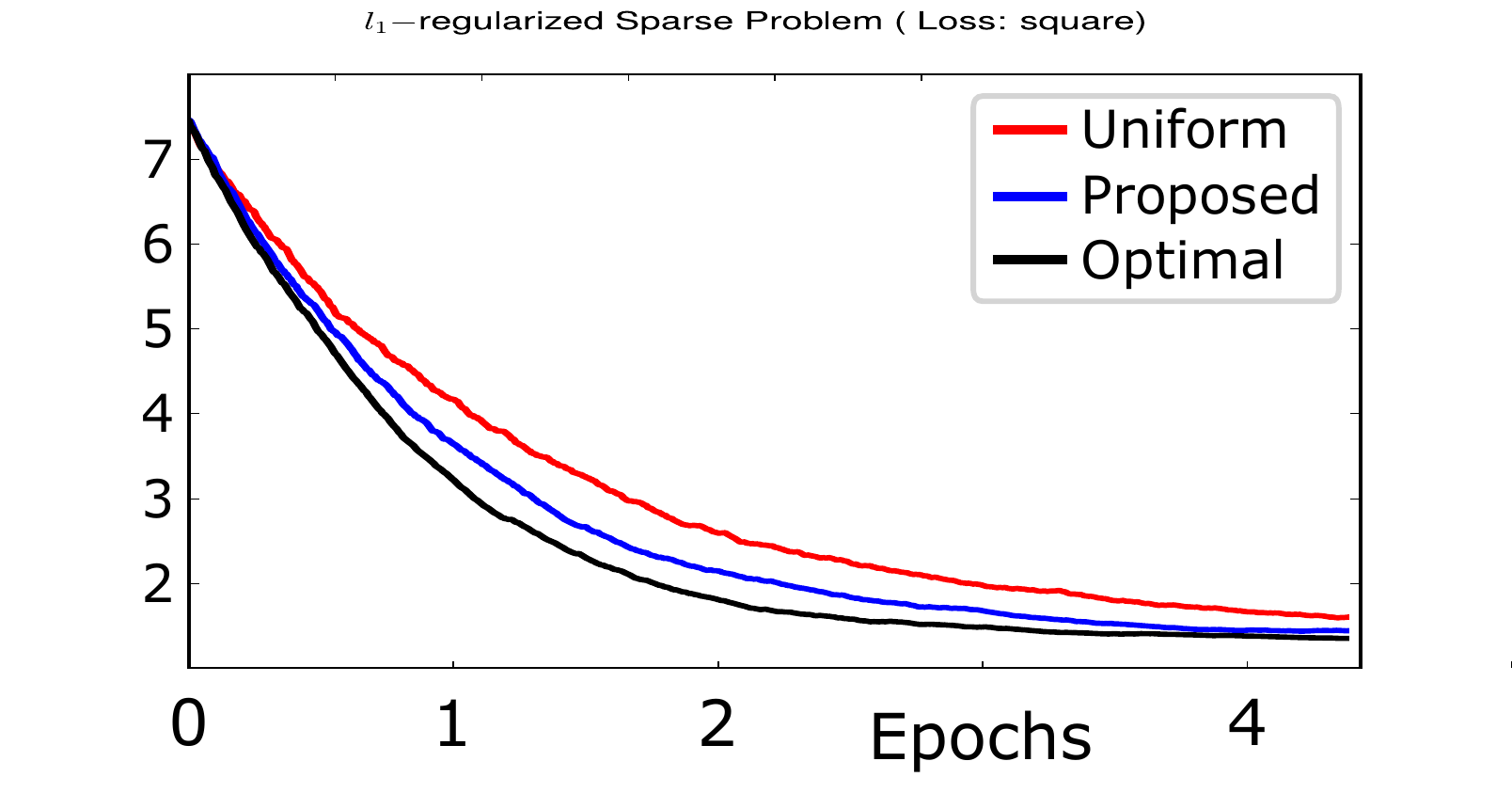}
   \vspace{-4mm}
  \caption{ \textit{news20'}, $L1$ reg. }
  \label{fig:sgd_full_iter_l1}
\end{subfigure}%
\hfill\null
\vspace{-1mm}
\caption{(SGD, square loss) Function value vs. number of iterations.}
\label{fig:sgd_iter_full_data}
  \end{minipage}
  \hfill
  \begin{minipage}[b]{0.48\linewidth}
  \hfill
    \begin{subfigure}{0.5\textwidth}
  \centering
  \includegraphics[trim=25 0 0 20, clip,width=1.1\textwidth]{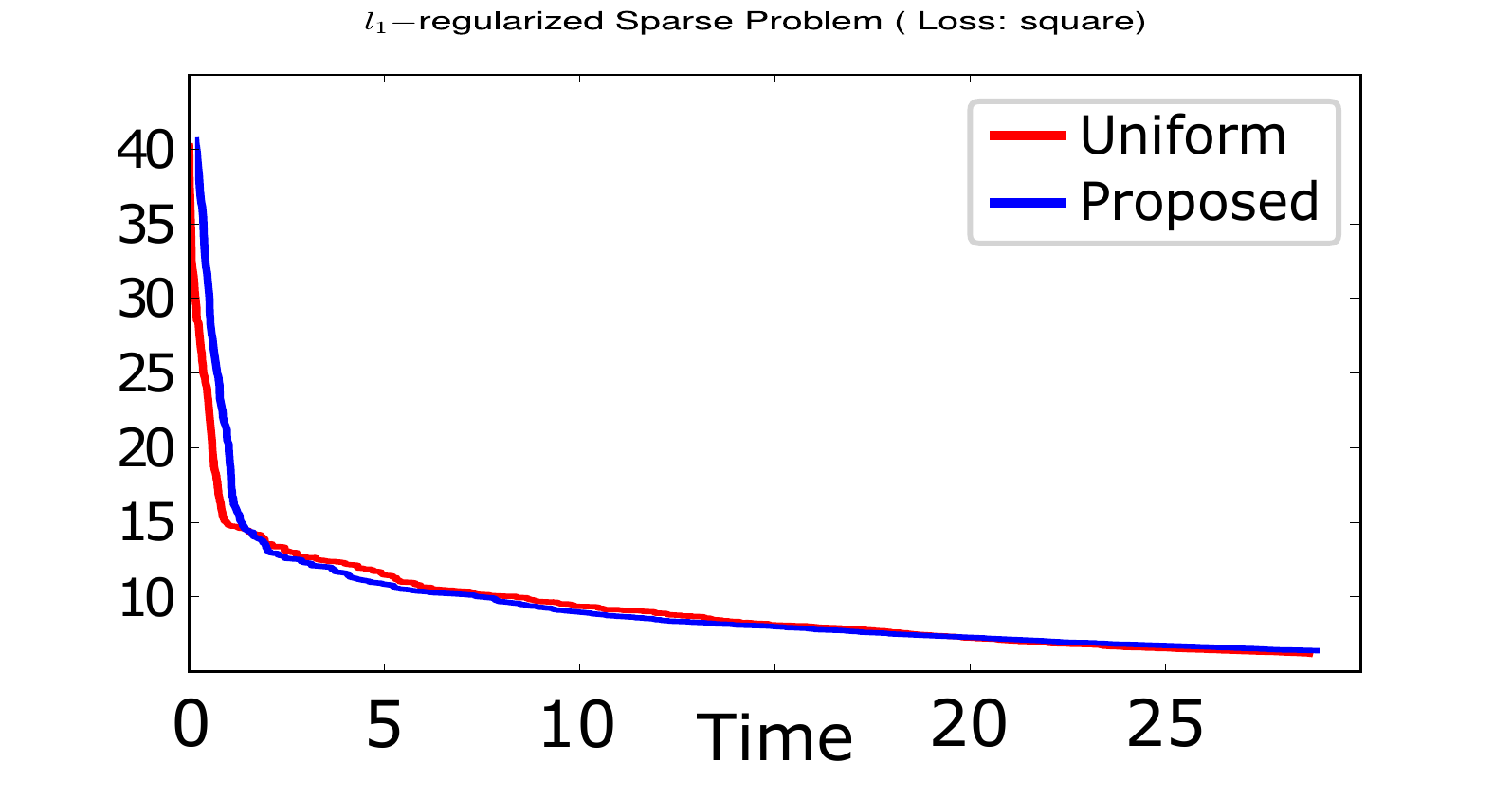}
  \vspace{-4mm}
  \caption{ \textit{news20'}, $L1$ reg.} 
   \label{fig:sgd_full_timing_l1}
\end{subfigure}%
\hfill\null
\vspace{-1mm}
\caption{(SGD square loss) Function value vs. clock time.}
\label{fig:sgd_time_full_data}
  \end{minipage}
\end{figure}

\paragraph{Stochastic Gradient Descent.} 
Finally, we also evaluate the performance of our approach when used within SGD with $L1$ and $L2$ regularization and square loss. In Figures~\ref{fig:SGD_results}--\ref{fig:sgd_iter_full_data} we report the iteration complexity vs. accuracy results and in Figure~\ref{fig:sgd_time_full_data} the timing vs. accuracy results. The time units in Figures~\ref{fig:time_full_data} and~\ref{fig:sgd_time_full_data} are not directly comparable, as the experiments were conducted on different machines.

We observe that on all three datasets SGD with the optimal sampling performs only slightly better than uniform sampling. This is in contrast with the observations for CD, where the optimal sampling yields a significant improvement. Consequently, the effect of the proposed sampling is less pronounced in the three SGD experiments.

\paragraph{Summary.}
The main findings of our experimental study can be summarized as follows:
\begin{itemize}
\item \textbf{Adaptive importance sampling significantly outperforms fixed importance sampling in iterations and time.}  The results show that (i) convergence in terms of iterations is almost as good as for the optimal (but not efficiently computable) gradient-based sampling and (ii) the introduced computational overhead is small enough to outperform fixed importance sampling in terms of total computation time.

\item \textbf{Adaptive sampling requires adaptive stepsizes.} The adaptive stepsize strategies of Algorithms~\ref{alg-1} and~\ref{alg-2} allow for much faster convergence than conservative fixed-stepsize strategies. In the experiments, the measured value $v_k$ was always significantly below the worst case estimate, in alignment with the observed convergence.

\item \textbf{Very loose safe gradient bounds are sufficient.} Even the bounds derived from the the very na\"ive gradient information obtained by estimating scalar products resulted in significantly better sampling than using no gradient information at all. Further, no initialization of the gradient estimates is needed (at the beginning of the optimization process the proposed adaptive method performs close to the fixed sampling but accelerates after just one epoch).

\end{itemize}

\section{Conclusion}
\label{sec:conclusion}

In this paper we propose a safe adaptive importance sampling scheme for CD and SGD algorithms. We argue that optimal gradient-based sampling is theoretically well justified.
To make the computation of the adaptive sampling distribution computationally tractable, we rely on safe lower and upper bounds on the gradient. However, in contrast to previous approaches, we use these bounds in a novel way: in each iteration, we formulate the problem of picking the optimal sampling distribution as a convex optimization problem and present an efficient algorithm to compute the solution. The novel sampling provably performs better than any fixed importance sampling---a guarantee which could not be established for %
previous samplings that were also derived from safe lower and upper bounds.

The computational cost of the proposed scheme is of the order $O(n \log n)$ per iteration---this is on many problems comparable with the cost to evaluate a single component (coordinate, sum-structure) of the gradient, and the scheme can thus be implemented at no extra computational cost. This is verified by timing experiments on %
real datasets.

We discussed one simple method to track the gradient information in GLMs during optimization.
However, we feel that the machine learning community could profit from further research in that direction, for instance by investigating how such safe bounds can efficiently be maintained on more complex models.
Our approach can immediately be applied when the tracking of the gradient is delegated to other machines in a distributed setting, like for instance in~\cite{Alain:2015arxiv}.

{\small
\sloppy
\bibliography{opt-ml}
\bibliographystyle{plain}
}

\appendix
\vspace{1cm}

\clearpage
\onecolumn
\begin{center}
{\centering \LARGE Appendix }
\vspace{1cm}
\end{center}

\section{Efficiency of Adaptive Importance Sampling}

In this section of the appendix we present the missing proofs from the main text and also add some additional comments.

\subsection{In Coordinate Descent}
\label{app:CD}

In Section~\ref{sec:efficiency} we only discussed the expected progress that can be proven using the quadratic upper bound~\eqref{eq-Ubound}. Here we show how to derive the convergence rate by the standard arguments.
\begin{lemma}[Proposed CD on strongly convex function---one step progress]
Let $f \colon \R^n \to \R$ $\mu$-strongly convex with coordinate-wise $L_i$-Lipschitz continuous gradient. Let $\xv_k, \xv_{k+1} \in \R^n$ denote two successive iterates generated by Algorithm~\ref{alg-1}, i.e. satisfying~\eqref{eq:CDstep} and~\eqref{eq:alphaequation}. Then
\begin{align}
 \E{f(\xv_{k+1}) - f^\star \mid \xv_k}
 \leq \left( f(\xv_k) - f^\star  \right) \cdot  (1-\mu \alpha_k) \label{eq:thmA}
\end{align}
where $f^\star = \min_{\xv \in \R^n} f(\xv)$ and $\alpha_k = \alpha_k(\pv_k)$ as in Lemma~\ref{lem:alpha}.
\end{lemma}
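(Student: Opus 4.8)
The plan is to chain together the expected one-step decrease already recorded in Lemma~\ref{lem:alpha} with the gradient-domination inequality that $\mu$-strong convexity provides. Since the iterates $\xv_k,\xv_{k+1}$ come from Algorithm~\ref{alg-1}, they satisfy the hypothesis of Lemma~\ref{lem:alpha} with $\alpha_k=\alpha_k(\pv_k)$, so that~\eqref{eq:alphaequation} applies verbatim:
\begin{align*}
\EE{i_k \sim_{\pv_k}}{f(\xv_{k+1}) \mid \xv_k} \leq f(\xv_k) - \frac{\alpha_k}{2}\norm{\nabla f(\xv_k)}_2^2\,.
\end{align*}
Subtracting $f^\star$ from both sides reduces the claim to lower-bounding $\frac{\alpha_k}{2}\norm{\nabla f(\xv_k)}_2^2$ in terms of $f(\xv_k)-f^\star$.

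The one genuine ingredient to supply is the Polyak--\L{}ojasiewicz inequality $\norm{\nabla f(\xv)}_2^2 \geq 2\mu\,(f(\xv)-f^\star)$, which I would obtain directly from $\mu$-strong convexity. Strong convexity means $f(\yv) \geq f(\xv) + \lin{\nabla f(\xv),\,\yv-\xv} + \frac{\mu}{2}\norm{\yv-\xv}_2^2$ for all $\yv\in\R^n$; minimizing the right-hand side over $\yv$ (the minimum is attained at $\yv = \xv - \mu^{-1}\nabla f(\xv)$) gives $f^\star \geq f(\xv) - \frac{1}{2\mu}\norm{\nabla f(\xv)}_2^2$, and rearranging yields the stated bound. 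Inserting this into the displayed inequality gives
\begin{align*}
\E{f(\xv_{k+1}) - f^\star \mid \xv_k} \leq \big(f(\xv_k)-f^\star\big) - \alpha_k\mu\,\big(f(\xv_k)-f^\star\big) = (1-\mu\alpha_k)\,\big(f(\xv_k)-f^\star\big)\,,
\end{align*}
which is~\eqref{eq:thmA}.

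I do not expect a real obstacle; the proof is a two-line combination once the PL inequality is in hand. The only point deserving a remark is that the bound is non-vacuous, i.e. $1-\mu\alpha_k \geq 0$: this holds because $\alpha_k(\pv_k^\star) \leq 1/L_{\rm min}$ (noted in Example~\ref{ex:Opt}) while $\mu \leq L_{\rm min}$, the latter since the restriction of a $\mu$-strongly convex, coordinate-wise $L_i$-smooth function to any coordinate line is a one-dimensional $\mu$-strongly convex and $L_i$-smooth function, forcing $\mu \leq L_i$ for every $i$. Iterating~\eqref{eq:thmA} over $k$ and taking total expectations then yields the linear convergence rate, but that lies beyond this one-step statement.
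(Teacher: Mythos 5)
Your proof is correct and follows exactly the paper's own route: apply the one-step bound~\eqref{eq:alphaequation} and then the strong-convexity (Polyak--\L{}ojasiewicz) inequality $\frac{1}{2\mu}\norm{\nabla f(\xv_k)}_2^2 \geq f(\xv_k)-f^\star$. The added derivation of the PL inequality and the remark on non-vacuity are fine but not needed beyond what the paper states.
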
 
\begin{proof}
By strong convexity
\begin{align}
 \frac{1}{2\mu} \norm{\nabla f(\xv_k)}_2^2 \geq f(\xv_k) - f^\star\,,
\end{align}
and the claim follows directly from~\eqref{eq:alphaequation}.
\end{proof}
For example for $L_i$-based importance sampling, $\alpha_k \equiv \frac{1}{\Tr{{\bf L}}}$ (Example~\ref{ex:L}) and the statement simplifies to
\begin{align}
  \E{f(\xv_{k+1}) - f^\star \mid \xv_k }
 \leq \left( f(\xv_k) - f^\star  \right) \cdot \left(1- \frac{\mu}{\Tr{{\bf L}}}\right) \label{eq:step1}
\end{align}
in alignement with the results in~\cite{Nesterov:2012fa,Stich2016variable}. For the optimal sampling from Example~\ref{ex:Opt} it holds $\alpha_k(\pv_k^\star) = \frac{\norm{\nabla f(\xv_k)}_2^2}{\norm{{\bf \sqrt{L}} \nabla f(\xv_k)}_1^2}$. For instance for ${\bf L}= L\cdot\mathbf{I}_n$ equation~\eqref{eq:thmA} simplifies to
\begin{align}
  \E{f(\xv_{k+1}) - f^\star \mid \xv_k}
 \leq \left( f(\xv_k) - f^\star  \right) \cdot \left(1- \frac{\mu \norm{\nabla f(\xv_k)}_2^2}{L \norm{\nabla f(\xv_k)}_1^2 }\right) \,. \label{eq:step2}
\end{align}
By Cauchy-Schwarz $\norm{\nabla f(\xv_k)}_2^2 \leq \norm{\nabla f(\xv_k)}_1^2 \leq n \norm{\nabla f(\xv_k)}_2^2 $, hence the expected one step progress~\eqref{eq:step2} is always as least as good as for uniform sampling~\eqref{eq:step1} (we assumed ${\bf L}= L\cdot\mathbf{I}_n$), but the optimal sampling could yield an $n$ times larger progress.

In Section~\ref{sec:efficiency} we argued that it is natural to always chose the best possible stepsize in~\eqref{eq:ubound2}, i.e. $\alpha_k=\alpha_k(\pv_k)$. Interestingly, even with a fixed stepsize (the worst case $\alpha_k  = \frac{1}{\Tr{\bf L}}$) the optimal sampling $\pv_k^\star$ has a slight advantage over the fixed importance sampling $\pv_{\bf L}$. (This effect is also demonstrated in the experiments, cf. Figure~\ref{fig:fixed_vs_adaptive_main}).
\begin{remark}
Let $\pv_k^\star$ as in Example~\ref{ex:Opt}. Then for suboptimal $\alpha_k  = \frac{1}{\Tr{\bf L}}$ it holds
\begin{align}
 \EE{i_k \sim_{\pv_k}}{f (\xv_{k+1}) \mid \xv_k} \leq f(\xv_k) -  \frac{1}{2 \Tr{\bf L} } \norm{\nabla f(\xv_k)}_2^2 \cdot \left(2 - \frac{\|\sqrt{\bf L} \nabla f(\xv_k)\|_1^2 }{\Tr{\bf L} \norm{\nabla f(\xv_k)}_2^2 }  \right) \,. 
\end{align}
\end{remark}
The expression in the big bracket is bounded between $1$ and $2 - \frac{1}{n}$. Hence the progress is always better then for the fixed distribution $\pv_{L}$, but the speed-up is limited to a factor less than 2. In contrast, with the optimal $\alpha_k(\pv_k^\star)$ the speed-up can reach a factor of $n$.
\begin{proof}
It suffices to just evaluate~\eqref{eq:ubound2} with $\pv_k^\star$ and $\alpha_k  = \frac{1}{\Tr{\bf L}}$.
\end{proof}

\begin{proof}[Proof of Lemma~\ref{lem:alpha}]
For $c, d \geq 0$ consider $\min_{\alpha} -\alpha c + \frac{1}{2} \alpha^2 d$. This function is minimized for $\alpha^\star = \frac{c}{d}$ with value $-\frac{c^2}{2d} = - \frac{\alpha^\star c}{2}$.
\end{proof}

\begin{proof}[Proof of Example~\ref{ex:L}]
We evaluate~\eqref{eq:ubound2} with $\pv_{\bf L}$ and find 
\begin{align}
\EE{i_k \sim_{\pv_k}}{f (\xv_{k+1}) \mid \xv_k} \leq f(\xv_k)  - \alpha_k \norm{ \nabla f(\xv_k)}_2^2 + \frac{1}{2} \alpha_k^2 \Tr{\bf L}\norm{\nabla f(\xv_k)}_2^2
\end{align}
which is minimized for $\alpha_k=\frac{1}{\Tr{\bf L}}$ as claimed.
\end{proof}

\begin{proof}[Proof of Example~\ref{ex:Opt}]
This is an immediate consequence of Lemma~\ref{lem:V}. The provided estimates follow from $\norm{\yv}_2^2 \leq \norm{\yv}_1^2 \leq \frac{1}{L_{\rm min}} \|\sqrt{\bf L} \yv\|_1^2$ and $\|\sqrt{\bf L}\yv\|_1^2 \leq \Tr{\bf L} \norm{\yv}_2^2$ by Cauchy-Schwarz, for $\yv \in \R^n$.
\end{proof}

\begin{proof}[Proof of Lemma~\ref{lem:V}]
Without loss of generality, assume $\mathbf{L}=\mathbf{I}$. The claim is verified by checking the optimality conditions: $ - [\xv]_i^2  + \lambda [\pv]_i^2 = 0$ for all $i \in [n]$ and Lagrange multiplier $\lambda \geq 0$. Thus $\lambda = \frac{ [\xv]_i^2 }{ [\pv]_i^2}$ for all $i \in [n]$ and this is satisfied for the proposed solution $\frac{\abs{\xv}}{\norm{\xv}_1} \in \simplex$.
\end{proof}

\subsection{In SGD}
\label{app:SGD}
SGD methods are applicable to objective functions which decompose as a sum 
\begin{align}%
f(\xv) = \textstyle \frac{1}{n} \sum_{i=1}^n f_i(\xv) \,.
\end{align}
Previous work~\cite{Papa2015,Zhaoa15,Zhu:2016} has argued that the gradient based sampling $[\pvt_k^\star]_i = \frac{\norm{\nabla f_i(\xv_k)}_2}{\sum_{i=1}^n \norm{\nabla f_i(\xv_k)}_2}$ is also optimal in this setting. For the sake of completeness, we will now exhibit how this can be derived in the simplified setting where we assume $f$ to be $\mu$-strongly convex. The proof presented here is adapted from~\cite{Nemirovski:2009}.

\begin{theorem}
Let $\mathcal{X} \in \R^d$ be a convex set, $f \colon \mathcal{X} \to \R$ $\mu$-strongly convex with the structure $f(\xv) = \frac{1}{n} \sum_{i=1}^n f_i(\xv)$.
Let $\{\xv_k\}_{k \geq 0}$  denote a sequence of iterates satisfying
\begin{align}
\xv_{k+1} &:=  \Pi_{\mathcal{X}} \left ( \xv_k -  \frac{\eta_k}{(n [\pv_k]_{i_k})} \nabla f_{i_k}(\xv_k) \right)
\end{align}
for stepsize $\eta_k = \frac{1}{\mu k}$, where index $i_k$ is chosen at random $i_k \sim \pv_k$ for probability vector $\pv_k \in \Delta^n$ and $\Pi_{\mathcal{X}}$ denotes the orthogonal projection onto $\mathcal{X}$.
\begin{enumerate}[(i)]
 \item If $[\pv_k]_i \equiv \frac{1}{n}$ for all $i \in [n]$ and $k$ (uniform sampling), then
 \begin{align}
\E{ f\left(\frac{1}{T}\sum_{k = 0}^T \xv_k \right)  - f^\star } \leq \frac{B_2}{\mu^2 T}(1 + \log T) \,.
\end{align}
\item If $[\pv_k]_i = \frac{\| \nabla f_i(\xv_k) \|_2}{\sum_{i = 1}^n \| \nabla f_i(\xv_k) \|_2} = [\pvt^\star_k]_i$, for $i \in [n]$ (optimal adaptive sampling), then
\begin{align}
\E{f \left( \frac{1}{T}\sum_{k = 0}^T \xv_k \right)  - f^\star } \leq \frac{B_1^2}{\mu^2 T}(1 + \log T) \,.
\end{align}
\end{enumerate}
Where $B_1$ and $B_2$ are constants such that
\begin{align}
\frac{\sum_{i =1}^n \| \nabla f_i(\xv) \|_2}{n} &\le B_1
&
\frac{\sum_{i =1}^n \| \nabla f_i(\xv) \|_2^2}{n} &\le B_2
&
&\forall \xv \in \mathcal{X}.
\end{align}
\end{theorem}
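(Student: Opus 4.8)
The plan is to adapt the classical SGD analysis for strongly convex functions (as in Nemirovski et al.) to the importance-sampling setting, tracking how the sampling distribution $\pv_k$ enters the second-moment bound. First I would fix the standard potential $\|\xv_k - \xv^\star\|_2^2$ where $\xv^\star = \argmin_{\xv \in \mathcal{X}} f(\xv)$, and expand $\|\xv_{k+1} - \xv^\star\|_2^2$ using nonexpansiveness of the projection $\Pi_{\mathcal{X}}$. Taking conditional expectation over $i_k \sim \pv_k$, the cross term becomes $-\frac{2\eta_k}{n}\sum_i \lin{\nabla f_i(\xv_k), \xv_k - \xv^\star} = -2\eta_k \lin{\nabla f(\xv_k), \xv_k - \xv^\star}$ because the importance weights exactly cancel the sampling probabilities (this is the point of the $\frac{1}{n[\pv_k]_{i_k}}$ scaling). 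By $\mu$-strong convexity, $\lin{\nabla f(\xv_k), \xv_k - \xv^\star} \geq f(\xv_k) - f^\star + \frac{\mu}{2}\|\xv_k - \xv^\star\|_2^2$.

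The second-moment term is $\frac{\eta_k^2}{n^2}\EE{i_k \sim \pv_k}{\frac{\|\nabla f_{i_k}(\xv_k)\|_2^2}{[\pv_k]_{i_k}^2}} = \frac{\eta_k^2}{n^2}\sum_{i=1}^n \frac{\|\nabla f_i(\xv_k)\|_2^2}{[\pv_k]_i}$. For uniform sampling $[\pv_k]_i = \frac{1}{n}$ this is $\frac{\eta_k^2}{n}\sum_i \|\nabla f_i(\xv_k)\|_2^2 \leq \eta_k^2 B_2$. For the optimal sampling $[\pvt^\star_k]_i = \frac{\|\nabla f_i(\xv_k)\|_2}{\sum_j \|\nabla f_j(\xv_k)\|_2}$ it telescopes to $\frac{\eta_k^2}{n^2}\left(\sum_i \|\nabla f_i(\xv_k)\|_2\right)^2 \leq \eta_k^2 B_1^2$. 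This is exactly where the two cases diverge, and by Cauchy–Schwarz $B_1^2 \leq B_2$, so the adaptive bound is never worse. Rearranging gives, with $\eta_k = \frac{1}{\mu k}$,
\begin{align}
\E{f(\xv_k) - f^\star} \leq \frac{\mu}{2}\left(\left(1 - \tfrac{1}{k}\right)\E{\|\xv_k - \xv^\star\|_2^2} - \E{\|\xv_{k+1} - \xv^\star\|_2^2}\right) + \frac{B^2}{2\mu k}\,,
\end{align}
where $B^2$ is $B_2$ or $B_1^2$ respectively. Multiplying by $k$, the $\|\cdot\|_2^2$ terms telescope (the $k(1-1/k) = k-1$ coefficient matches the previous index), leaving $\sum_{k=1}^T k\,\E{f(\xv_k) - f^\star} \lesssim \frac{B^2}{2\mu}\sum_{k=1}^T \frac{1}{k} \leq \frac{B^2}{2\mu}(1 + \log T)$.

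Finally I would apply Jensen's inequality to pass from the weighted average of iterates to $f$ evaluated at the average; here one uses the weighting by $k$ (or simply the uniform average as stated, absorbing constants), giving $\E{f\big(\frac{1}{T}\sum_{k=0}^T \xv_k\big) - f^\star} \leq \frac{B^2}{\mu^2 T}(1 + \log T)$ after dividing by the sum of weights $\Theta(T^2)$ and accounting for the factor $\mu$. The main obstacle is purely bookkeeping: getting the telescoping to line up cleanly requires choosing the potential coefficients to match $\eta_k = 1/(\mu k)$ exactly, and one must be slightly careful about whether the iterate average is weighted or uniform — the cleanest route is the weighted average $\sum k \xv_k / \sum k$, then note $\sum_{k=1}^T k = \Theta(T^2)$; the stated uniform-average form follows with only a change in the constant. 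No genuinely hard inequality is needed beyond Cauchy–Schwarz and strong convexity.
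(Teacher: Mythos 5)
Your argument is the same as the paper's: nonexpansiveness of $\Pi_{\mathcal{X}}$, unbiasedness of the $\frac{1}{n[\pv_k]_{i_k}}$-weighted update, the second-moment term $\frac{\eta_k^2}{n^2}\sum_i \norm{\nabla f_i(\xv_k)}_2^2/[\pv_k]_i$ evaluated separately for uniform and optimal sampling (yielding $B_2$ and $B_1^2$), strong convexity on the cross term, and the stepsize $\eta_k = \frac{1}{\mu k}$ followed by telescoping and Jensen. The only place you deviate is the final bookkeeping, and there your write-up is internally inconsistent. After dividing the one-step inequality by $2\eta_k$, the correct form is
\begin{align}
\E{f(\xv_k) - f^\star} \;\leq\; \frac{B^2}{2\mu k} \;+\; \frac{\mu(k-1)}{2}\,\E{\norm{\xv_k - \xv^\star}_2^2} \;-\; \frac{\mu k}{2}\,\E{\norm{\xv_{k+1} - \xv^\star}_2^2}\,,
\end{align}
i.e.\ the distance terms carry coefficients $\frac{\mu(k-1)}{2}$ and $\frac{\mu k}{2}$, not $\frac{\mu}{2}(1-\frac1k)$ and $\frac{\mu}{2}$ as you display. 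This already telescopes when summed directly over $k$ (no multiplication by $k$ needed), giving $\sum_{k=1}^T \E{f(\xv_k)-f^\star} \leq \frac{B^2}{2\mu}\sum_{k=1}^T \frac1k \leq \frac{B^2}{2\mu}(1+\log T)$, after which the uniform average and Jensen yield the claim. If instead you multiply by $k$ as you propose, the variance contribution becomes $\sum_{k=1}^T k\cdot\frac{B^2}{2\mu k} = \frac{TB^2}{2\mu}$, not $\frac{B^2}{2\mu}\sum_k\frac1k$; dividing by $\sum_k k = \Theta(T^2)$ then gives the log-free $O(1/T)$ rate for the \emph{weighted} average (the Lacoste-Julien et al.\ route), which is a different (in fact stronger) statement than the one being proved. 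Either route is fine, but you have spliced the two together: pick one, and the coefficients will line up.
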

It is clear that $\frac{B_2}{n} \leq B_1^2 \leq B_2$ from Cauchy-Schwarz.
Comparing the upper bound we see that the importance sampling based approach might be $n$-times faster in convergence.

\begin{proof}
As orthogonal projections contract distances we have
\begin{align}
\| \xv_{k+1} - \xv^\star \|_2^2 &\leq \| \xv_k - \eta_k \frac{1}{n [\pv_k]_{i_k} } \nabla f_{i_k}(\xv_k)- \xv^\star \|_2^2 \\
&= \|  \xv_k - \xv^\star   \|_2^2 - \frac{2\eta_k }{n [\pv_k]_{i_k}} \langle \xv_k - \xv^\star,  \nabla f_{i_k}(\xv_k) \rangle + \frac{\eta_k^2}{n^2 [\pv_k]_{i_k}^2} \|  \nabla f_{i_k}(\xv_k)\|_2^2 \,.
\end{align}
Thus
\begin{align}
\E{ \| \xv_{k+1} - \xv^\star \|_2^2 \mid \xv_k } &\leq \|  \xv_k - \xv^\star   \|_2^2 - 2\eta_k  \langle \xv_k - \xv^\star,  \nabla f(\xv_k) \rangle  \\ 
&\qquad + \sum_{i=1}^n \frac{\eta_k^2}{n^2 [\pv_k]_{i_k}} \|  \nabla f_{i_k}(\xv_k)\|_2^2 \,.
\end{align}
It can be observed that the right hand side is minimized for probabilities  given as follows:
\begin{align}
[\pvt_k^\star]_i := \frac{\| \nabla f_i(\xv_k) \|_2}{\sum_{i = 1}^n \| \nabla f_i(\xv_k) \|_2} \,.
\end{align}
This justifies why these probabilities are denoted as optimal (cf. Section~\ref{sec:sampling} and~\cite{Papa2015,Zhaoa15,Zhu:2016}).

Hence the expression becomes :
\begin{align}
\begin{split}
\E{ \| \xv_{k+1} - \xv^\star \|_2^2 ~|~ \xv_k } &\leq \|  \xv_k - \xv^\star   \|_2^2 - 2 \eta_k \langle  \xv_k - \xv^\star , \nabla f(\xv_k) \rangle \\ &\qquad + \eta_k^2 \left((\frac{\sum_{i =1}^n \| \nabla f_i(\xv_k) \|_2}{n}\right)^2 
\end{split} \\
\begin{split}
&\leq \|  \xv_k - \xv^\star   \|_2^2   -2\eta_k \Big[ f(\xv_{k}) - f^\star +\frac{\mu}{2} \| \xv_{k} -\xv^\star \|_2^2 \Big] \\
& \qquad  + \eta_k^2 \left(\frac{\sum_{i =1}^n \| \nabla f_i(\xv_k) \|_2}{n}\right)^2
\end{split}
\end{align}
where the last inequality follows from strong convexity. Now we rearrange the terms and utilize the choice of the step size $\eta_k := \frac{1}{\mu k}$:
\begin{align}
\begin{split}
2\eta_k \big[f(\xv_{k}) - f^\star\big] &\leq \eta_k^2 \left(\frac{\sum_{i =1}^n \| \nabla f_i(\xv_k) \|_2}{n}\right)^2   + (1 - \mu \eta_k)\| \xv_k - \xv^\star \|_2^2 \\
& \qquad  -\mathbb{E} \big[\| \xv_{k+1} - \xv^\star \|_2^2 ~ | \xv_k] 
\end{split} \\
\begin{split}
\big[f(\xv_{k}) - f^\star\big] &\leq   \tfrac12 \eta_k \left(\frac{\sum_{i =1}^n \| \nabla f_i(\xv_k) \|_2}{n}\right)^2 + \frac{1-\mu \eta_k }{2 \eta_k} \| \xv_k - \xv^\star \|_2^2  \\
& \qquad -\frac{1}{2\eta_k}\mathbb{E} \big[\| \xv_{k+1} - \xv^\star \|_2^2 ~ | \xv_k] 
\end{split} \\
\begin{split}
\big[f(\xv_{k}) - f^\star\big]  &\leq \frac{1}{2\mu k} \left(\frac{\sum_{i =1}^n \| \nabla f_i(\xv_k) \|_2}{n}\right)^2 + \frac{\mu(k-1)}{2} \| \xv_k - \xv^\star \|_2^2 \\
& \qquad  - \frac{\mu k}{2}\mathbb{E} \big[\| \xv_{k+1} - \xv^\star \|_2^2 ~ | \xv_k] 
\end{split} \label{eq:sgd_conv_part1}
\end{align}
If we compare the last equation and corresponding expression for uniform sampling then we see that the per iterate gain by the optimal sampling is approximately of the order of $n$ due to the term $\Big(\frac{\sum_{i =1}^n \| \nabla f_i(\xv_k) \|_2}{n}\Big)^2$ in our case and $\frac{1}{n} \sum_{i =1}^n \| \nabla f_i(\xv_k) \|_2^2$ in the uniform sampling. 

We now take the expectation and sum the equation~\eqref{eq:sgd_conv_part1} for $k =0,\ldots T$ and we get the claim (this step is analogous as in~ \cite{LacosteJulien:2012uo}).
\end{proof}

\section{Sampling}
\label{app:sampling}

In this section we provide the remaining technical details regarding our proposed sampling scheme.

\subsection{On the solution of the optimization problem}
\label{app:solution}

In the proof of Theorem~\ref{thm:properties} we claimed that $\min$ and $\max$ in~\eqref{eq:opt} can be interchanged. We will prove this now. This result will also be handy to describe the optimiality conditions of problem~\eqref{eq:opt} in the proof of Theorem~\ref{thm:solution} below.

\begin{lemma} 
\label{lem:minimax}
It holds
\begin{align}
v_k &= \min_{\pv \in \simplex} \max_{\cvv \in C_k} \frac{V(\pv,   \cvv)}{\norm{\cvv}_2^2}  \stackrel{(\ast)}{=} \max_{\cvv \in C_k}  \min_{\pv \in \simplex} \frac{V(\pv,   \cvv)}{\norm{\cvv}_2^2} = \max_{\cvv \in C_k}  \frac{\|\sqrt{\bf L} \cvv \|_1^2}{\norm{\cvv}_2^2}\,. \label{eq:minimax}
\end{align}
\end{lemma}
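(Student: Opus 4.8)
The plan is to establish the minimax identity $(\ast)$ by invoking Sion's minimax theorem, after checking the requisite convexity/concavity and compactness hypotheses, and then to read off the last equality directly from Lemma~\ref{lem:V}. First I would note that the inequality ``$\geq$'' in $(\ast)$ is the trivial direction: for any $\pv \in \simplex$ and any $\cvv \in C_k$ one has $\min_{\pv'} \frac{V(\pv',\cvv)}{\norm{\cvv}_2^2} \leq \frac{V(\pv,\cvv)}{\norm{\cvv}_2^2} \leq \max_{\cvv'} \frac{V(\pv,\cvv')}{\norm{\cvv'}_2^2}$, and taking $\max_{\cvv}$ on the left then $\min_{\pv}$ on the right gives the weak-duality bound. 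So the content is the reverse inequality.

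For the reverse inequality I would apply Sion's minimax theorem to the function $\Phi(\pv,\cvv) := \frac{V(\pv,\cvv)}{\norm{\cvv}_2^2} = \sum_{i=1}^n \frac{L_i [\cvv]_i^2}{[\pv]_i \norm{\cvv}_2^2}$ on $\simplex \times C_k$. The set $\simplex$ is convex and compact; $C_k$ is a box, hence convex and compact (here one uses that $\ellv_k, \uv_k \in \R_{\geq 0}^n$ with $\ellv_k \leq \uv_k$, and one should briefly address whether $\0 \in C_k$, i.e. the degenerate case $\ellv_k = \0$ — in that case one restricts to $C_k \setminus \{\0\}$ or observes the ratio is defined by continuity/homogeneity on rays, since $\Phi$ is invariant under scaling $\cvv \mapsto t\cvv$; picking $\cvv$ with any positive coordinate keeps things well-defined, and the supremum is unaffected by excising $\0$). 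For fixed $\cvv$, the map $\pv \mapsto \Phi(\pv,\cvv)$ is convex on $\simplex$ since each $t \mapsto 1/t$ is convex on $(0,\infty)$ and the coefficients $L_i[\cvv]_i^2/\norm{\cvv}_2^2$ are nonnegative (with the standard convention that the term vanishes when $[\cvv]_i = 0$, and otherwise forces $[\pv]_i > 0$). For fixed $\pv$, the map $\cvv \mapsto \Phi(\pv,\cvv)$ is a ratio of a quadratic form to $\norm{\cvv}_2^2$; it is not concave globally, so here I would instead invoke the more convenient route used elsewhere in the paper: by the geometric reformulation $\sqrt{v_k} = \max_{\cvv \in C_k} \frac{\langle \sqrt{\lv},\cvv\rangle}{\norm{\cvv}_2}$ one sees the inner objective is quasi-concave in $\cvv$ (a ratio of a nonnegative linear functional to a norm is quasi-concave on the halfspace $\langle\sqrt\lv,\cvv\rangle \geq 0$, which contains $C_k$ since $\ellv_k,\uv_k \geq \0$). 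Sion's theorem requires only quasi-concavity in the second variable and quasi-convexity in the first, so it applies and yields $(\ast)$.

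Finally, for the last equality in~\eqref{eq:minimax}, Lemma~\ref{lem:V} gives $\min_{\pv \in \simplex} V(\pv,\cvv) = \frac{\norm{\sqrt{\bf L}\cvv}_1^2}{?}$ — more precisely, substituting the minimizer $\pv = \frac{\abs{\sqrt{\bf L}\cvv}}{\norm{\sqrt{\bf L}\cvv}_1}$ into $V(\pv,\cvv) = \sum_i \frac{L_i[\cvv]_i^2}{[\pv]_i}$ yields $\sum_i L_i[\cvv]_i^2 \cdot \frac{\norm{\sqrt{\bf L}\cvv}_1}{\sqrt{L_i}\abs{[\cvv]_i}} = \norm{\sqrt{\bf L}\cvv}_1 \sum_i \sqrt{L_i}\abs{[\cvv]_i} = \norm{\sqrt{\bf L}\cvv}_1^2$, so $\min_{\pv} \frac{V(\pv,\cvv)}{\norm{\cvv}_2^2} = \frac{\norm{\sqrt{\bf L}\cvv}_1^2}{\norm{\cvv}_2^2}$, and taking $\max_{\cvv \in C_k}$ finishes. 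The main obstacle is the hypothesis-checking for Sion: specifically, handling the lack of (global) concavity in $\cvv$ by passing to the quasi-concave reformulation, and dealing cleanly with the degenerate corner $\cvv = \0$ when $\ellv_k$ has zero entries; both are minor but must be stated carefully so the minimax exchange is rigorous.
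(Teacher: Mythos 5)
Your overall strategy---weak duality plus Sion's minimax theorem, then Lemma~\ref{lem:V} for the last equality---is the same as the paper's, and your treatment of the final equality and of the degenerate point $\cvv=\0$ is fine. However, there is a genuine gap in the step where you verify Sion's hypotheses. Sion's theorem requires the saddle function itself, $\Phi(\pv,\cvv)=\sum_{i} L_i[\cvv]_i^2/\big([\pv]_i\norm{\cvv}_2^2\big)$, to be quasi-concave in $\cvv$ \emph{for each fixed} $\pv$. The function you argue is quasi-concave, $\cvv\mapsto\langle\sqrt{\lv},\cvv\rangle/\norm{\cvv}_2$, is not $\Phi(\pv,\cdot)$ for any fixed $\pv$; it is (the square root of) the already-minimized inner objective $\min_{\pv\in\simplex}\Phi(\pv,\cvv)$. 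Quasi-concavity of this lower envelope says nothing about quasi-concavity of each $\Phi(\pv,\cdot)$, which is what the theorem needs. In fact $\Phi(\pv,\cdot)$ is \emph{not} quasi-concave on the nonnegative orthant in general: writing $w_i:=L_i/[\pv]_i$, the superlevel set $\{\cvv\geq\0:\Phi(\pv,\cvv)\geq\beta\}=\{\cvv\geq\0:\sum_i(w_i-\beta)[\cvv]_i^2\geq 0\}$ is a nonconvex quadratic cone whenever the signs of $w_i-\beta$ are mixed. For instance with $n=3$, ${\bf L}=\mathbf{I}$, $\pv=(\tfrac15,\tfrac15,\tfrac35)$ and $\beta=\tfrac{10}{3}$, this set is $\{[\cvv]_1^2+[\cvv]_2^2\geq[\cvv]_3^2\}$, which contains $(1,0,1)$ and $(0,1,1)$ but not their midpoint $(\tfrac12,\tfrac12,1)$. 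So the direct application of Sion on $\simplex\times C_k$, as you set it up, does not go through.

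The paper repairs exactly this point by a change of variables: setting $[\yv]_i:=[\cvv]_i^2$ turns the objective into $\psi(\pv,\yv)=\frac{1}{\norm{\yv}_1}\sum_i L_i[\yv]_i/[\pv]_i$, a ratio of two functions that are \emph{linear} in $\yv$ on the transformed constraint set $Y$ (still a box, hence convex and compact). The superlevel set $\{\yv\in Y:\psi(\pv,\yv)\geq\beta\}$ is then cut out by the single linear inequality $\sum_i L_i[\yv]_i/[\pv]_i-\beta\norm{\yv}_1\geq 0$, hence convex, so $\psi(\pv,\cdot)$ is quasi-concave and Sion applies to $\psi$ on $\simplex\times Y$; the value is unchanged because the substitution is a bijection between $C_k$ and $Y$ that leaves the objective invariant. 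Your proof needs this (or an equivalent) reparametrization before the minimax exchange is legitimate.
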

\begin{proof}
The third equality follows directly from Lemma~\ref{lem:V}. By transformation of the variable $[\yv] := [\cvv]_i^2$ for $i \in [n]$ we can write the objective function as
\begin{align}
  \frac{V(\pv,   \cvv)}{\norm{\cvv}_2^2}  = \frac{1}{\norm{\yv}_1} \cdot \sum_{i=1}^n \frac{L_i [\yv]_i}{[\pv]_i} =: \psi(\pv,\yv)\,.
\end{align}
Let $Y \subset \R_{\geq 0}^n$ denote appropriately transformed set of constraints, $Y := \sqrt{C_k}$. To prove $(\ast)$ we will now rely on Sion's minimax theorem~\cite{Sion1958,komiya1988}. The function $\psi(\cdot,\yv)$ is convex in $\pv \in \simplex$ and $\simplex$ is a compact convex subset of $\R^n$. Clearly, $Y$ is convex, and in order to apply the theorem it remains to show that $\psi(\pv,\cdot)$ is quasi-concave. For establish this, it is enough to show that the level sets of $\psi(\pv,\cdot)$ are convex. Let $\uv,\vv \in Y$ with $\psi(\pv,\uv) \geq \beta$, $\psi(\pv,\vv) \geq \beta$ for some $\beta \geq 0$. Then for any $\lambda \in [0,1]$ it holds $\psi(\pv, \lambda \uv + (1-\lambda) \vv) \geq \beta$ as is verified as follows:
\begin{align}
  0 &\leq \lambda \underbrace{ \left[ \left( \sum_{i=1}^n \frac{ [\uv]_i L_i}{[\pv]_i} \right) - \beta \norm{\uv}_1 \right]}_{\geq 0} + (1-\lambda)  \underbrace{ \left[ \left( \sum_{i=1}^n \frac{ [\vv]_i L_i}{[\pv]_i} \right) - \beta \norm{\vv}_1 \right]}_{\geq 0} \\
  & = \left( \sum_{i=1}^n \frac{\lambda [\uv]_i L_i + (1-\lambda) [\vv]_i L_i }{[\pv]_i} \right) - \beta \left( \underbrace{\lambda \norm{\uv}_1 + (1-\lambda) \norm{\vv}_1}_{= \norm{\lambda \uv + (1-\lambda) \vv}_1} \right) \,.
\end{align}
This proves the claim.
\end{proof}

\begin{proof}[Proof of Theorem~\ref{thm:solution} -- Part I: Structure of the solution]
We will now proof that $\cvv \in C_k$ of the form
\begin{align}
[\cvv]_i &= \begin{cases}
              [\uv_k]_i & \text{if } [\uv_k]_i \leq \sqrt{L_i} m\,, \\
              [\ellv_k]_i & \text{if } [\ellv_k]_i \geq \sqrt{L_i} m\,, \\
              \sqrt{L_i} m &\text{otherwise},
            \end{cases}
             & &\forall i \in [n]\,, \tag{\ref{eq:optimalitycondition}}
\end{align}
where $m = \norm{\cvv}_2^2 \cdot \|\sqrt{\bf L} \cvv \| _1^{-1}$ and probabilities $\pv = \frac{\sqrt{\bf L}\cvv}{\| \sqrt{\bf L} \cvv \|_1}$ solve the optimization problem~\eqref{eq:opt}.
By Lemma~\ref{lem:minimax} is suffices to consider 
\begin{align}
 \argmax_{\cvv \in C_k}  \frac{\|\sqrt{\bf L} \cvv \|_1^2}{\norm{\cvv}_2^2} =  \argmax_{\cvv \in C_k}  \frac{\|\sqrt{\bf L} \cvv \|_1}{\norm{\cvv}_2}\,.
\end{align}
We now write the Lagrangian of the problem on the right:
\begin{align}
 \mathcal{L}(\cvv,\lambdav,\muv) =  \frac{ \| \sqrt{\bm L} \cvv \|_1}{\norm{\cvv}_2}  + \sum_{i=1}^n [\lambdav]_i ([\uv_k]_i - [\cvv]_i) +  \sum_{i=1}^n [\muv]_i ([\cvv]_i - [\ellv_k]_i) 
 \end{align}
and derive the KKT conditions:
\begin{align}
 \frac{\partial \mathcal{L}}{\partial{[\cvv]_i}} &= \frac{ \sqrt{ L_i} \norm{\cvv}^2_2 -[\cvv]_i  \|\sqrt{\bf L} \cvv\|_1  }{ \norm{\cvv}_2^3 } - [\lambdav]_i + [\muv_i] \leq 0\,; & [\cvv]_i &\geq 0\,; & [\cvv]_i \frac{\partial \mathcal{L}}{\partial{[\cvv]_i}} & = 0\,; \\
 \frac{\partial \mathcal{L}}{\partial{[\lambdav]_i}} &= [\uv_k]_i - [\cvv]_i \geq 0\,; & [\lambdav]_i &\geq 0\,; & [\lambdav]_i  \frac{\partial \mathcal{L}}{\partial{[\lambdav]_i}} &= 0\,; \\
  \frac{\partial \mathcal{L}}{\partial{[\muv]_i}} &= [\cvv]_i - [\ellv_k]_i \geq 0\,; & [\muv]_i &\geq 0\,; & [\muv]_i  \frac{\partial \mathcal{L}}{\partial{[\muv]_i}} &= 0\,;     
\end{align} 
For all non-binding constraints, the Lagrange multipliers are zero, and hence from the topmost equation see that it must hold $\sqrt{L_i} \norm{\cvv}^2_2 - [\cvv]_i \|\sqrt{\bf L} \cvv\|_1  = 0$ (or equivalently $[\cvv]_i = \sqrt{L_i}m)$ for all variables with non-binding constraints.
Furthermore if $[\cvv]_i < \sqrt{L_i}m)$, then $[\lambdav]_i$ must be positive, and hence the upper bound must be binding. And vice versa for the lower bounds. Clearly, the given $\cvv$ in~\eqref{eq:optimalitycondition} satisfies these conditions. By Lemma~\ref{lem:V} we also have $\pv = \pv(\cvv)= \frac{\sqrt{\bf L}\cvv}{\| \sqrt{\bf L} \cvv \|_1}$ as claimed.
\end{proof}

\subsection{Algorithm}
\label{app:algorithm}
Here we argue on the correctness of Algorithm~\ref{alg-solution}. 
\begin{proof}[Proof of Theorem~\ref{thm:solution} -- Part II: Algorithm]
We now show that Algorithm~\ref{alg-solution} indeed computes a solution of the form~\eqref{eq:optimalitycondition}. For this, we have to show that performed optimization steps---the sorting in line 2 and the efficient comparisons in line 4 and 6---do not hamper the correctness for the algorithm.
For clarity, we now introduce iteration indices for the quantities $\cvv_t$ (see main text), and $m_t$.

Suppose the check in line 4 is true, i.e. $[\ellv^{\rm sort}]_\ell > m_t$, where $m_t = \frac{\norm{\cvv_t}_2^2}{\| \sqrt{\bf L} \cvv_t \|_1}$. Now we show $m_{t+1} \in [m_t, [\ellv^{\rm sort}]_\ell]$. 
The claim can easily be checked. Let $L_\tau$ denote the corresponding $L_i$-value, i.e. it holds $\sqrt{L_\tau} [\ellv^{\rm sort}]_\ell = [\ellv_k]_\tau$.

 By assumption $[\ellv^{\rm sort}]_\ell >  \frac{\norm{\cvv_t}_2^2}{\| \sqrt{\bf L} \cvv_t \|_1}$, thus $[\ellv^{\rm sort}]_\ell \cdot \| \sqrt{\bf L} \cvv_t \|_1 + L_\tau  [\ellv^{\rm sort}]_\ell^2 > \norm{\cvv_t}_2^2 + L_\tau  [\ellv^{\rm sort}]_\ell^2 $ and consequently $m_{t+1} = \frac{\norm{\cvv_t}_2^2 + L_\tau [\ellv^{\rm sort}]_\ell^2}{\| \sqrt{\bf L} \cvv_t \|_1 + L_\tau [\ellv^{\rm sort}]_\ell} < [\ellv^{\rm sort}]_\ell$. For to show $m_{t+1} > m_t$ we make use of the assumption $[\ellv^{\rm sort}]_\ell >  \frac{\norm{\cvv_t}_2^2}{\| \sqrt{\bf L} \cvv_t \|_1}$ in a similar way.
Clearly, $L_\tau [\ellv^{\rm sort}]_\ell^2 \cdot \| \sqrt{\bf L} \cvv_t \|_1 > L_\tau [\ellv^{\rm sort}]_\ell \cdot \norm{\cvv_t}_2^2 $ and thus
$ \| \sqrt{\bf L} \cvv_t \|_1 \cdot \norm{\cvv_t}_2^2  + L_\tau  [\ellv^{\rm sort}]_\ell^2 \cdot \| \sqrt{\bf L} \cvv_t \|_1 > \| \sqrt{\bf L} \cvv_t \|_1 \cdot \norm{\cvv_t}_2^2+  L_\tau  [\ellv^{\rm sort}]_\ell \cdot \norm{\cvv_t}_2^2  $ which implies $m_{t+1} = \frac{\norm{\cvv_t}_2^2 + L_\tau  [\ellv^{\rm sort}]_\ell^2  }{ \| \sqrt{\bf L} \cvv_t \|_1 + L_\tau  [\ellv^{\rm sort}]_\ell } >  \frac{\norm{\cvv_t}_2^2}{\| \sqrt{\bf L} \cvv_t \|_1}=  m_t$. 

The inequality $m_{t+1} \leq [\ellv^{\rm sort}]_\ell$  implies that the chosen update does not interfere with any previously made decisions regarding lower bounds, as $m_{t+1} \leq [\ellv^{\rm sort}]_i$ for $i=\ell+1,\dots, n$ (with this notation, $n+1,\dots, n$ just denotes the empty set). The opposite inequality $m_{t+1} \geq m_t$  implies that the chosen update does not interfere with any previously made decisions regarding upper bounds, as $m_{t+1} \geq  [\uv^{\rm sort}]_i$ for $i = 1,\dots, u-1$. 

If line 6 is executed and the check is true, i.e. $[\uv^{\rm sort}]_u < m_t$, then it can be shown that  $m_{t+1} \in [[\uv^{\rm sort}]_u, m_t]$ by analogous arguments.
\end{proof}

\subsection{Competitive Ratio}
\label{app:gap}

\begin{proof}[Proof of Lemma~\ref{lem:wk}]
The proof of this lemma is immediate from the definition:
\begin{align}
 \rho_k =   \max_{\cvv \in C_k} \frac{V(\pvh,\cvv)}{\norm{\cvv}_2^2} \cdot \frac{\norm{\cvv}_2^2} {\| \sqrt{\bf L} \cvv \|_1^2}  \leq \max_{\cvv \in C_k} \frac{V(\pvh,\cvv)}{\norm{\cvv}_2^2} \cdot  \max_{\cvv \in C_k} \frac{\norm{\cvv}_2^2}{\| \sqrt{\bf L} \cvv\| _1^2} \leq \frac{v_k}{w_k}\,.
\end{align}
where $w_k := \min_{\cvv \in C_k} \frac{\| \sqrt{\bf L} \cvv \| _1^2}{\norm{\cvv}_2^2}$. The claimed upper bound $w_k \leq v_k$ follows by the observation $v_k \stackrel{\eqref{eq:minimax}}{=}  \max_{\cvv \in C_k} \frac{\| \sqrt{\bf L} \cvv \| _1^2}{\norm{\cvv}_2^2}$. 
\end{proof}

\begin{proof}[Proof of Lemma~\ref{lem:gaprelative}]
As we have relative accuracy, it holds $[C_k]_i \cap \gamma [C_k]_i = \emptyset$ and $\gamma^{-1} [C_k]_i \cap [C_k]_i = \emptyset$, for all $i \in [n]$. Let $\cvv^\star \in C_k$ denote the vector for which the maximum is attained and let $\cvh \in C_k$ be such that $\pvh = \frac{\sqrt{\bf L} \cvh}{\| \sqrt{\bf L} \cvh \|_1}$. It holds $V(\pvh,\cvv^\star) \leq V(\pvh,\cvv)$ for all $\cvv \in \gamma C_k$ by monotonicity in each coordinate, especially  $V(\pvh,\cvv^\star) \leq V(\pvh,\gamma \cvh)$. And similarly $\| \sqrt{\bf L} \cvv^\star \| _1^2 \geq \|\gamma^{-1}  \sqrt{\bf L} \cvh \| _1^2$. Thus
\begin{align}
 \rho_k  &\leq  \frac{ V(\pvh,\gamma \cvh)}{ \|\gamma^{-1} \sqrt{\bf L} \cvh \|_1^2} =  \frac{\gamma^2 V(\pvh, \cvh)}{\gamma^{-2} \| \sqrt{\bf L} \cvh \|_1^2}  =  \frac{\gamma^2 }{\gamma^{-2} } \,.
\end{align}
which proves the claim.
\end{proof}

\section{Safe Gradient Bounds in the Proximal Setting}

\begin{proof}[Proof of Lemma~\ref{lem:res_grad_update_sgd}]
Observe

\begin{align}
\nabla f_i(\xv_{k+1}) - \nabla f_i(\xv_{k}) &=  \nabla_x h_i(\av_i ^\top \xv_{k+1}) -   \nabla_x h_i(\av_i ^\top \xv_{k})  \notag \\
&=  \av_i \big( \nabla h_i(\av_i ^\top \xv_{k+1}) -  \nabla h_i(\av_i ^\top \xv_{k}) \big) \notag \\
&=  \av_i \big( \av_i ^\top \xv_{k+1} - \av_i^\top \xv_{k}\big ) \nabla^2 h_i(\av_i^\top \tilde{\xv}) \label{eq:gen_obj_mean_val} \\
&=  \av_i \big( \av_i ^\top ( \xv_{k+1} -\xv_{k}  )\nabla^2 h_i(\av_i^\top \tilde{\xv})\big) \notag \\
&=  \av_i   \big( \av_i ^\top \gamma_k \av_{i_k} \nabla^2 h_i(\av_i^\top \tilde{\xv})\big) \notag \\
& =\gamma_k \nabla^2 h_i(\av_i ^\top \tilde{\xv})  \langle \av_i,\av_{i_k} \rangle \av_i ~~ \forall~i \neq i_k \,, \notag
\end{align}
Equation~\eqref{eq:gen_obj_mean_val} comes from the mean value theorem which says for continuous function $f$ in closed intervals $[ a,b]$ and differentiable on open intervals $(a,b)$, there exists a point $c$ in $(a,b)$ such that :
\begin{align}
f'(c) = \frac{f(b) - f(a)}{b-a}\,.
\end{align}
\end{proof}

In Section \ref{sec:gradientupdate} we have discussed practical safe upper and lower bounds $\uv,\ellv$ that can be maintained efficiently during optimization, also for the SGD setting (finite sum objective). We now argue that such bounds can also be extended to proximal SGD settings.

We see from Lemma~\ref{lem:res_grad_update_sgd} that tracking the norm of the gradient of each function can be done easily for simple updates as given in Lemma~\ref{lem:res_grad_update_sgd}. The approximate update of the component wise gradient norms for more composite problems can also be done by a little modification, but it is definitely not as trivial as in the case of coordinate descent. For example, consider a proximal type of update as $\xv_{k+1} = prox_{\eta_k g}\big(\xv_k   - \eta_k \cdot \av_{i_k} \nabla f_{i_k}(\av_{i_k}^\top \xv_k) \big)$   which implies that $\xv_{k+1} \in \xv_k   - \eta_k \cdot \av_{i_k} \nabla f_{i_k}(\av_{i_k}^\top \xv_k) -  \eta_k \partial g(\xv_{k+1}) $ and thus $ \xv_{k+1} \in \xv_k  + \gamma_k \cdot \av_{i_k} -  \eta_k \partial g(\xv_{k+1})$. If we denote the progress made in the $k$-th iteration of the algorithm as $\delta_k$ then the progress equals $\delta_k =  \gamma_k ~\av_{i_k}   -  \eta_k \alphav_k$ where $\alphav_k \in \partial g(\xv_{k+1}) $. 
To approximate the gradient we will need to compute two dot products. The first one is $\lin{\av_{i},\av_{i_k}}$ and the second one is $\lin{\av_{i_k}, \alphav_k}$. Since $\alphav_k$ is usually small, hence even approximating $\lin{\av_{i_k}, \alphav_k} $ with $ \|\av_{i_k}\| \| \alphav_k \|$ doesn’t affect the upper and bounds too much and the main contribution in error comes from the approximation of the scalar product $\lin{\av_{i}, \av_{i_k}}$.

\end{document}